\documentclass{article}

 \usepackage[preprint]{neurips_2026}

\usepackage[utf8]{inputenc} 
\usepackage[T1]{fontenc}    
\usepackage{hyperref}       
\usepackage{url}            
\usepackage{booktabs}       
\usepackage{amsfonts}       
\usepackage{nicefrac}       
\usepackage{microtype}      
\usepackage{xcolor}         

\usepackage[utf8]{inputenc} 
\usepackage[T1]{fontenc}    
\usepackage{hyperref}       
\usepackage{url}            
\usepackage{booktabs}       
\usepackage{amsfonts}       
\usepackage{nicefrac}       
\usepackage{microtype}      
\usepackage{xcolor}         

\usepackage{amsmath}
\usepackage{amssymb}
\usepackage{amsthm}

\usepackage{thmtools}

\usepackage{multirow}
\usepackage{wrapfig}
\usepackage{xspace}
\usepackage{caption}
\usepackage{thm-restate}
\usepackage{footmisc}
\usepackage{hyperref}
\usepackage{cleveref}
\usepackage{graphicx}

\definecolor{linkblue}{HTML}{1F4E79}
\definecolor{citegreen}{HTML}{4F6F52}
\definecolor{urlbrown}{HTML}{7A4E2D}
\definecolor{notered}{HTML}{A23B3B}
\definecolor{noteblue}{HTML}{2E5C8A}
\definecolor{notemagenta}{HTML}{8A4F7D}

\hypersetup{
  colorlinks=true,
  linkcolor=linkblue,
  citecolor=citegreen,
  urlcolor=urlbrown,
}

\declaretheoremstyle[
  spaceabove=\topsep,
  spacebelow=\topsep,
]{compactthm}
\declaretheorem[style=compactthm,name=Theorem]{theorem}
\declaretheorem[style=compactthm,sibling=theorem,name=Lemma]{lemma}
\declaretheorem[style=compactthm,sibling=theorem,name=Proposition]{proposition}
\declaretheorem[style=compactthm,sibling=theorem,name=Corollary]{corollary}
\declaretheorem[style=compactthm,name=Definition]{definition}

\makeatletter
\renewenvironment{proof}[1][\proofname]{\par
  \pushQED{\qed}%
  \normalfont \topsep0\p@\relax
  \trivlist
  \item[\hskip\labelsep
        \itshape
    #1\@addpunct{.}]\ignorespaces
}{%
  \popQED\endtrivlist\@endpefalse
  \addvspace{6\p@\@plus6\p@}%
}
\makeatother

\crefname{theorem}{Theorem}{Theorems}
\crefname{lemma}{Lemma}{Lemmas}
\crefname{proposition}{Proposition}{Propositions}
\crefname{corollary}{Corollary}{Corollaries}
\crefname{definition}{Definition}{Definitions}
\crefname{figure}{Figure}{Figures}
\crefname{table}{Table}{Tables}
\crefname{section}{Section}{Sections}
\crefname{appendix}{Appendix}{Appendices}
\Crefname{appendix}{Appendix}{Appendices}
\crefname{subappendix}{Appendix}{Appendices}
\Crefname{subappendix}{Appendix}{Appendices}

\newcommand{\EE}{\mathbb{E}}
\newcommand{\RR}{\mathbb{R}}
\newcommand{\Var}{\mathrm{Var}}
\newcommand{\Cov}{\mathrm{Cov}}
\newcommand{\sigalg}{\mathcal{F}}

\newcommand{\Fourier}{\texttt{Fourier}}
\newcommand{\Legendre}{\texttt{Legendre}}
\newcommand{\Insertion}{\texttt{Insertion}}
\newcommand{\RegTAd}[1]{\texttt{Reg-TAd#1}}
\newcommand{\RegTLLd}[1]{\texttt{Reg-TLLd#1}}
\newcommand{\OursTAd}[1]{\texttt{Ours-TAd#1}}
\newcommand{\OursTLLd}[1]{\texttt{Ours-TLLd#1}}

\DeclareMathOperator{\sign}{sign}

\title{Probabilistic Signature Inversion: Learning Conditional Distributions from Truncated Signatures}

\author{%
  Junoh Kang$^{1}$ \qquad\qquad Kiseop Lee$^{2}$ \qquad\qquad Bohyung Han$^{1,3}$ \\
  $^{1}$ECE \& $^{3}$IPAI, Seoul National University \qquad $^{2}$Department of Statistics, Purdue University\\
  \texttt{junoh.kang@snu.ac.kr} \qquad \texttt{kiseop@purdue.edu} \qquad \texttt{bhhan@snu.ac.kr}
}

\begin{document}

\maketitle


\begin{abstract}
The signature transform is a principled feature map for continuous-time paths, valued for its uniqueness and universality. 
Recovering a path from its truncated signature is, however, structurally ill-posed because the truncated signature map is not injective. 
We therefore reframe truncated signature inversion as a probabilistic problem---learning the conditional distribution of a path given its truncated signature---and adopt a signature-conditioned flow matching model as a practical estimator.
This probabilistic formulation elucidates the fundamental difficulty of inversion: Bayes reconstruction error quantifies the irreducible uncertainty remaining after conditioning on a statistic.
We derive the Bayes-optimal error under linear statistics, obtaining a closed form for log-GBM and numerically tractable formulas for log-fBM and OU, yielding a concrete theoretical baseline for model validation.
This baseline upper-bounds the Bayes error under truncated-signature conditioning, since truncated signatures provide richer information than linear statistics.
Experiments show that empirical reconstruction errors under linear-statistics conditioning faithfully align with the theory-derived baseline, while errors decrease when the statistic is replaced with truncated signatures.
Moreover, generated paths faithfully recover the conditioning signature while preserving key distributional and temporal structures, indicating that the estimator is well-calibrated to the target conditional distribution.
Together, these results establish a well-posed probabilistic framework for truncated-signature inversion, with applicability demonstrated on real financial data beyond the parametric process families covered by theory. 
\end{abstract}


\section{Introduction}
\vspace{-2mm}

The signature transform represents a continuous-time path as an infinite sequence of its iterated integrals~\citep{lyons2014rough, chevyrev2016primer}. 
This representation is particularly powerful due to two foundational properties: uniqueness, which ensures that the full signature identifies a path up to tree-like equivalence~\citep{hambly2010uniqueness, boedihardjo2016rough}, and universality, which allows any continuous path functional to be approximated by a linear functional of the signature~\citep{chevyrev2016primer}. 
Consequently, signatures have established themselves as useful representations in time series analysis~\citep{gyurko2013extracting, guo2025consistency} and, more recently, in generative modeling~\citep{ni2021sigwasserstein, liao2024sigconditional}.




\begin{figure}[t]
\centering
\captionsetup{skip=6pt}
\setlength{\tabcolsep}{0pt}
\renewcommand{\arraystretch}{0.8}
\scalebox{1}{
\begin{tabular}{cc}
    \multicolumn{2}{c}{\includegraphics[width=0.85\linewidth]{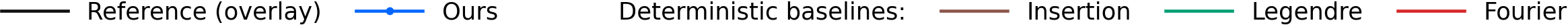}}\\
    \includegraphics[width=0.5\linewidth]{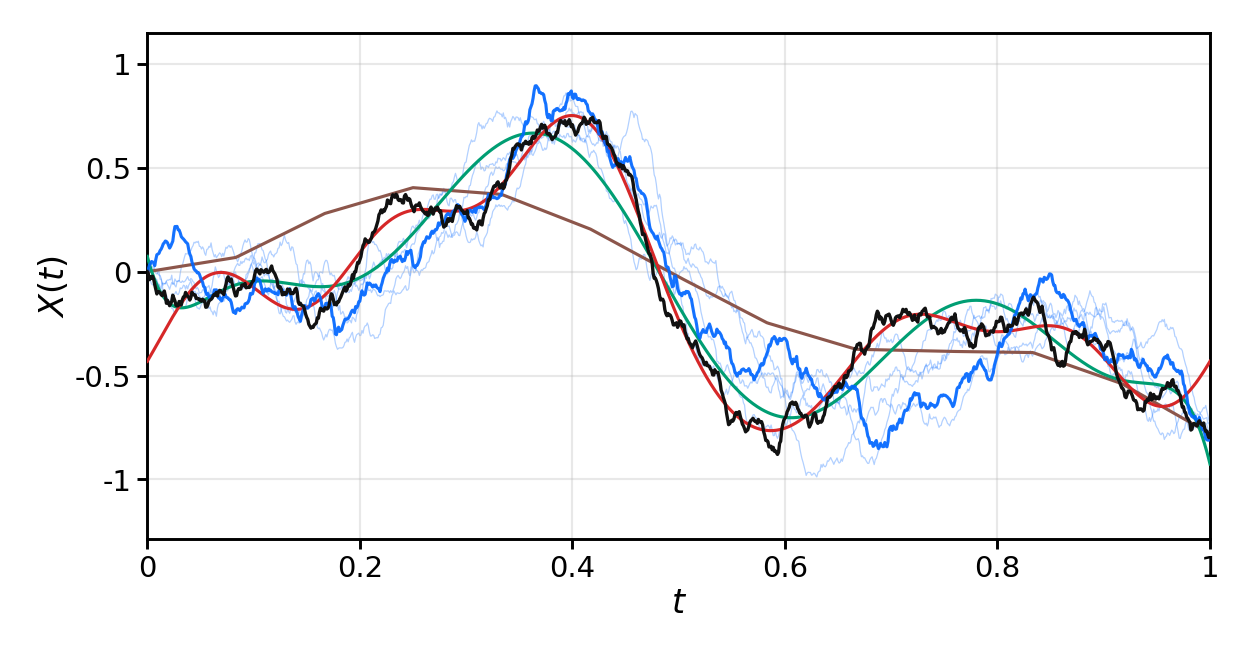}
    &
    \includegraphics[width=0.5\linewidth]{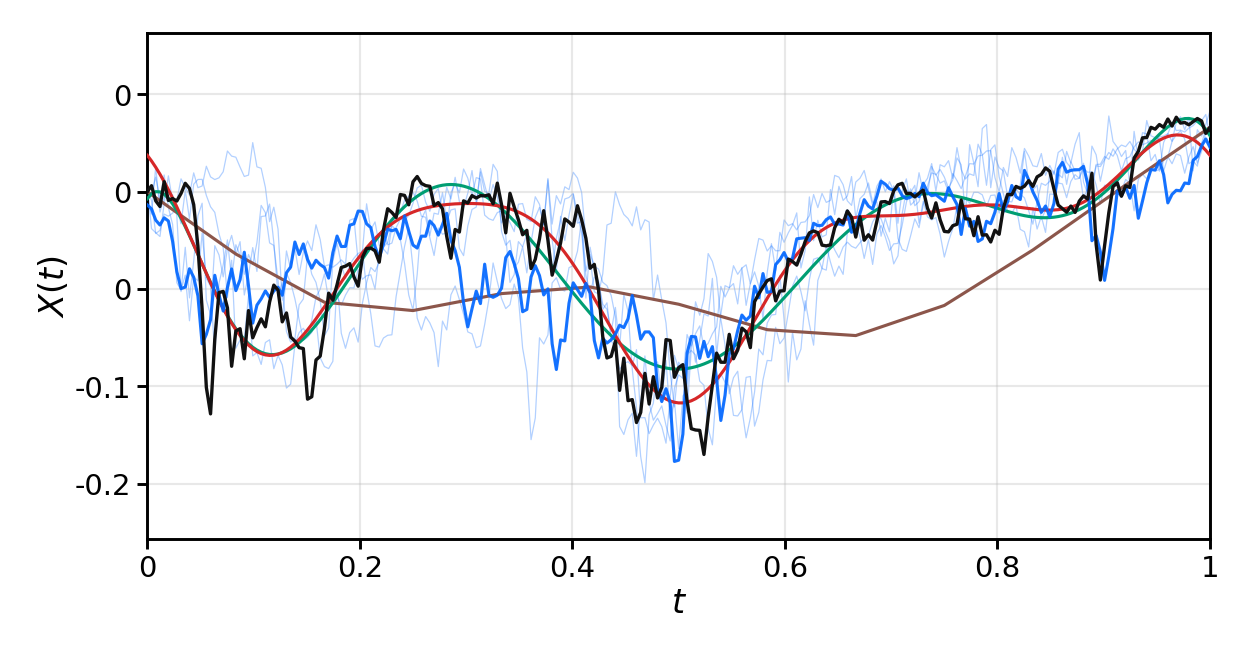}\vspace{-1mm}\\
    (a) log-fBM~($H=0.7$) & (b) S\&P~500 
\end{tabular}
}
\caption{Probabilistic signature inversion on synthetic log-fBM and S\&P~500 examples.
Thin blue curves show samples from our signature-conditioned sampler, with one draw highlighted in thick blue; deterministic baselines return one path each.}
\label{fig:teaser-overlay}
\vspace{-4mm}
\end{figure}

Uniqueness, in particular, naturally motivates the problem of signature inversion.
While theoretical results establish that a path can be uniquely recovered from its full signature in the rough-path setting~\citep{geng2017reconstruction}, such reconstruction is infeasible in practice as it requires access to an infinite sequence of iterated integrals.
Practical inversion methods must rely on truncated signatures, employing techniques such as symmetrization-based inversion~\citep{lyons2018inverting}, insertion~\citep{chang2019insertion, fermanian2024insertion}, or basis-function recovery~\citep{barancikova2025sigdiffusion}.
Despite their algorithmic diversity, these approaches predominantly adopt a point-reconstruction viewpoint, aiming to return a single representative path.
However, this deterministic point reconstruction leaves the non-injectivity induced by finite truncation unaddressed.
While the full signature identifies a path up to tree-like equivalence, for any finite depth $r$, the truncated signature map $X \mapsto S^{\le r}(X)$ is inherently many-to-one.
Consequently, a truncated signature specifies an entire family of compatible paths rather than a unique inverse.
To account for this structural ambiguity, we reframe truncated-signature inversion as the task of learning the prior-induced conditional distribution of paths given their signatures.
We estimate this conditional distribution via generative modeling, using a signature-conditioned flow matching model.

To quantify the ambiguity induced by finite-depth truncation, we analyze the Bayes reconstruction error, which measures the irreducible uncertainty remaining after conditioning on the available information. 
As a tractable benchmark, we derive this error for a set of linear statistics---specifically path endpoints and time moments. 
For Gaussian process families, this benchmark admits a closed form for log-GBM and numerically tractable formulas for log-fBM and OU, providing a concrete theoretical baseline for model validation. 
Critically, since linear statistics are coarser than truncated signatures, their associated Bayes error provides computable upper bound on the Bayes-optimal error under truncated-signature conditioning.

We evaluate this formulation on three standard stochastic process families chosen to represent complementary structures in time-series modeling: log-GBM for canonical Brownian diffusion, log-fBM for long-memory behavior, and OU for mean reversion.
Across these families, empirical errors under linear-statistics conditioning closely match the theoretical Bayes reconstruction error, while empirical errors under richer signature conditionings generally follow the predicted $\sigma$-algebra hierarchy. 
\cref{fig:teaser-overlay} illustrates our probabilistic inversion framework: the blue trajectories form plausible path ensembles around a synthetic log-fBM trajectory and an S\&P~500 window.
Parameter-estimation and path-level diagnostics further show that generated samples preserve the underlying distributional and temporal structure.

More broadly, probabilistic signature inversion extends the practical utility of signatures as a path representation, suggesting downstream uses such as probing how signature coordinates affect path behavior and generating signature-conditioned scenarios for financial time series.
In summary, the key contributions of this work are as follows:
\begin{itemize}
    \item We reformulate truncated-signature inversion as probabilistic signature inversion. Since truncation breaks injectivity, we frame the target as a conditional distribution rather than a single path, and we estimate this distribution using signature-conditioned flow matching.
    \item We define the Bayes reconstruction error as the irreducible uncertainty induced by finite-depth signature truncation. For Gaussian processes, we derive a tractable linear-statistics benchmark---providing a closed form for log-GBM and numerical solutions for log-fBM and OU---which upper-bounds the Bayes-optimal errors under richer signature conditionings.
    \item We empirically validate the proposed framework: linear-statistics errors match the theoretical benchmarks, richer signature conditionings closely track the predicted hierarchy, and diagnostics on synthetic processes and real S\&P~500 data show that generated samples preserve distributional and temporal structure.
\end{itemize}


\section{Related work}
\label{sec:related_work}

\subsection{Signature inversion}
\label{subsec:siginv}

The inverse problem of recovering paths from signatures has been studied from both theoretical and algorithmic perspectives.
A theoretical line of work proves reconstruction under appropriate assumptions for several path classes, including Brownian motion~\citep{lejan2013brownian}, Gaussian processes~\citep{boedihardjo2015nonmarkov}, and deterministic rough paths~\citep{geng2017reconstruction}.
However, these results rely on the full infinite-dimensional signature, limiting their direct practical use.

Explicit reconstruction procedures use truncated signatures to produce algorithmic path approximations under structural restrictions.
For example, \citet{lyons2018inverting} construct stable approximations for $C^1$ paths, while the insertion method~\citep{chang2019insertion,fermanian2024insertion} reconstructs piecewise-linear paths through iterative point insertion.
\citet{barancikova2025sigdiffusion} impose a different structural restriction, recovering signals by estimating coefficients in a prescribed finite basis, such as a Fourier or polynomial basis.
Another line seeks a path whose signature features approximately match a target, using optimization or evolutionary search~\citep{kidger2019deep,buhler2020market}.
These approaches ultimately return a single reconstructed path.
In contrast, our formulation models the truncation-induced ambiguity under an underlying path distribution, yielding a conditional distribution over compatible paths rather than a unique inverse.

\subsection{Time series generation}
\label{subsec:timeseries_genai}

Recent neural time series generators include adversarial~\citep{yoon2019time}, autoregressive~\citep{rasul2021autoregressive}, and diffusion-based models~\citep{tashiro2021csdi, yuan2024diffusionts, tanaka2025cofindiff}. 
However, their conditioning mechanisms serve different objectives: forecasting and imputation models~\citep{rasul2021autoregressive, tashiro2021csdi} typically condition on partial observations, whereas CoFinDiff~\citep{tanaka2025cofindiff} targets population-level summaries like trend and volatility. 
In contrast, we focus on inversion from an individual path-specific statistic. 
Note that, due to these fundamental differences in conditioning and task objectives, a direct empirical comparison with forecasting- or population-summary-based generators is not straightforward.

Signatures have also been used in time series generation, but their role differs from ours.
One line of work utilizes the signature as a metric for distributional matching, either through adversarial discrimination in the signature space~\citep{ni2021sigwasserstein, liao2024sigconditional} or by employing kernel MMD objectives between path distributions~\citep{lu2025mmdsignature}.
A second line of research focuses on generating samples directly within the (log-)signature space, subsequently recovering paths through a separate inversion step~\citep{barancikova2025sigdiffusion}. 
In contrast, we utilize the signature as conditioning information rather than as a training objective or a target representation to be generated.



\section{Preliminaries}
\label{sec:preliminary}

\subsection{Signature of a path}
\label{subsec:signature}

\begin{definition}[Signature] \label{def:signature}
Let the set of all multi-indices be $\mathcal{W} := \bigcup_{k=0}^{\infty} \{1,\ldots,d\}^k$.
For a continuous path $X: [0, T] \to \mathbb{R}^d$ of bounded variation and a multi-index $I=(i_1,\ldots,i_k) \in \mathcal{W}$, the corresponding iterated integral is
\begin{align}
    S(X)^I_{0,T} := \int_{0 < t_1 < \cdots < t_k < T} dX^{i_1}_{t_1} \cdots dX^{i_k}_{t_k}.
\end{align}
The signature of $X$ over $[0,T]$ is the collection of all iterated integrals, $S(X)_{0,T} = \bigl( S(X)^I_{0,T} \bigr)_{I \in \mathcal{W}}$.
\end{definition}

\begin{theorem}[Tree-like equivalence] \label{thm:tree_like_equiv}
Let $X, Y: [0,T] \to \mathbb{R}^d$ be continuous paths of bounded variation. 
Then $S(X)_{0,T} = S(Y)_{0,T}$ if and only if $X$ and $Y$ are equal up to tree-like equivalence.
\end{theorem}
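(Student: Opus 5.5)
This is the Hambly--Lyons uniqueness theorem. I would prove it in two directions of very different difficulty, using Chen's identity and the definition of tree-like paths; I would not rebuild the whole of \citet{hambly2010uniqueness}. Recall that a bounded-variation path $Z:[0,T']\to\RR^d$ is \emph{tree-like} if there is a continuous height function $h:[0,T']\to[0,\infty)$ with $h(0)=h(T')=0$ and
\begin{align}
|Z_t - Z_s| \le h(s)+h(t)-2\inf_{u\in[s,t]}h(u).
\end{align}
We say $X\sim Y$ when the concatenation $X\ast\overleftarrow{Y}$ is tree-like, where $\overleftarrow{Y}$ is the time reversal of $Y$.

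\textbf{Tree-like equivalence implies equal signatures.} First I would establish Chen's identity, $S(X\ast Y)=S(X)\otimes S(Y)$, and the reversal identity, $S(\overleftarrow{Y})=S(Y)^{-1}$ in the tensor algebra. Both follow by splitting the simplex of iterated integrals. With these, it suffices to show that a tree-like path $Z$ has trivial signature $(1,0,0,\dots)$. I would use the height function to build an $\RR$-tree $\mathcal{T}$ and factor $Z=\phi\circ\gamma$, where $\gamma$ is a loop in $\mathcal{T}$ that runs along each edge once in each direction and $\phi$ is Lipschitz. Signatures of paths pushed forward along a tree are then computed by approximation. I would truncate $\mathcal{T}$ to finite subtrees, on which the loop reduces to a finite product of segments of the form $g\,g^{-1}$ and therefore has trivial signature. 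I would then pass to the limit using continuity of truncated signatures in $1$-variation.

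\textbf{Equal signatures implies tree-like equivalence.} Again by Chen's identity, $S(X)=S(Y)$ gives $S(X\ast\overleftarrow{Y})=1$, so it suffices to show that a bounded-variation path with trivial signature is tree-like. This is the main obstacle. I would follow Hambly--Lyons and argue by contradiction. Suppose $Z$ is not tree-like, and let $\bar Z$ be its tree-reduced form, which is nontrivial. The key step is to construct a development of $\bar Z$ into a Lie group, for example the hyperbolic group $SO(1,n)$ or unitary matrices, via a linear map $M:\RR^d\to\mathfrak{g}$. The development solves $dU_t=U_t\,M(dZ_t)$, and its endpoint equals $\sum_I M(e_{i_1})\cdots M(e_{i_k})\,S(Z)^I$. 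This endpoint is therefore the identity whenever $S(Z)$ is trivial.

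The task is then to choose $M$ so that the development of the reduced path $\bar Z$ is not the identity. The idea is to scale $M$ so that the hyperbolic development becomes nearly a geodesic, making the developed path's length comparable to its displacement. Because $\bar Z$ has no tree-like excursions, I would expect a quantitative lower bound on the developed displacement, obtained from hyperbolic geometry by approximating $\bar Z$ with piecewise-linear paths that contain no immediate backtracking. This lower bound, together with the facts that tree-like pieces develop trivially and that the development is invariant under tree reduction, would give the contradiction. I expect the quantitative hyperbolic estimate, and in particular its stability under the piecewise-linear approximation, to be the delicate part; there I would simply invoke \citet{hambly2010uniqueness}.
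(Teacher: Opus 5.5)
The paper gives no proof of its own and simply cites \citet{hambly2010uniqueness}, so your plan is consistent with it: prove the easy direction via Chen's identity, the reversal identity, and finite-subtree approximation of the $\mathbb{R}$-tree loop, then defer the converse to Hambly--Lyons (for the limit, uniform convergence with lengths bounded by that of $Z$ already gives convergence of truncated signatures, so you need not establish $1$-variation convergence). Read your hard-direction paragraph as that citation rather than as an argument, because the ingredients it invokes carry essentially all of the difficulty: the existence of a tree-reduced representative, and a displacement lower bound that is stable under piecewise-linear approximation.
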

Informally, two paths are tree-like equivalent if they differ only by segments that retrace themselves and hence cancel out. 
See \citet{hambly2010uniqueness} for this classical uniqueness result.

\begin{definition}[Signature of observations] \label{def:signature_observation}
Let $X:[0,T]\to\mathbb{R}^d$ be a continuous path observed at times $0=t_0<\cdots<t_n=T$.
The piecewise-linear interpolation $X^{\mathrm{lin}}:[0,T]\to\mathbb{R}^d$ of the observations is defined by
$X^{\mathrm{lin}}_t := X_{t_i} + \frac{t-t_i}{t_{i+1}-t_i}(X_{t_{i+1}}-X_{t_i})$ for $t\in[t_i,t_{i+1}]$ and $i=0,\ldots,n-1$.
The signature of $X$ from its observations is then defined as $S(X)_{0,T}:=S(X^{\mathrm{lin}})_{0,T}$.
\end{definition}
Throughout the paper, we assume paths start at the origin ($X_0=0$); this does not affect signatures as they are invariant under translation.
Depending on context, $S(X)_{0,T}$ denotes either the path-level signature in \cref{def:signature} or the observed-data signature computed via piecewise-linear interpolation in \cref{def:signature_observation}.

\begin{definition}[Truncated signature] \label{def:truncated_signature}
For $r \in \mathbb{N}$, the truncated signature of $X$ at level $r$ is obtained by retaining terms with length $|I| \leq r$: $S(X)^{\leq r}_{0,T} := \bigl(S(X)^I_{0,T}\bigr)_{|I| \leq r}$.
\end{definition}
For finite $r$, the truncated signature map $X \mapsto S(X)_{0,T}^{\leq r}$ is generally not injective.

\subsection{Augmentation}
\label{subsec:augmentation}

Rather than computing signatures directly from raw observations, one typically augments the data to capture richer path structure.
Two common techniques include time and time lead-lag augmentations.

\begin{definition}[Time augmentation] \label{def:augmentation_ta}
Let $X$ be observed at times $0=t_0<\cdots<t_n=T$.
The time augmentation of these observations is a sequence of pairs:
\begin{align}
    \mathrm{TA}(X) := \bigl((t_0, X_{t_0}), (t_1, X_{t_1}), \ldots, (t_n, X_{t_n})\bigr).
\end{align} 
\end{definition}
Time augmentation encodes the parameterization of the path, and strengthens tree-like equivalence to exact equivalence for the full signature~\citep{levin2013learning,morrill2021generalised}. 

\begin{definition}[Time lead-lag augmentation] \label{def:augmentation_tll}
Let $X$ be observed at times $0=t_0<\cdots<t_n=T$.
The time lead-lag augmentation of these observations is
\begin{align}
    \mathrm{TLL}(X)_{2i}   &:= (t_i,\, X_{t_i},\, X_{t_i}), \qquad i=0,\ldots,n, \\
    \mathrm{TLL}(X)_{2i+1} &:= (t_i,\, X_{t_{i+1}},\, X_{t_i}), \qquad i=0,\ldots,n-1,
\end{align}
where the second coordinate is the lead and the third is the lag.
\end{definition}

The asynchronous updating of lead and lag coordinates encodes the quadratic variation of the original series in the level-2 signature terms~\citep{chevyrev2016primer,flint2016hoff}.
For $A \in \{\mathrm{TA}, \mathrm{TLL}\}$, let $S_A^{\leq r}(X)$ denote the level-$r$ truncated signature of the augmented path $A(X)$.




\section{Probabilistic signature inversion}
\label{sec:method}

\subsection{Problem formulation}
\label{subsec:formulation}

As noted in \cref{thm:tree_like_equiv}, the full signature determines a path up to tree-like equivalence. 
For both augmentations in \cref{subsec:augmentation}, the monotonicity of the time coordinate excludes tree-like components, so the full signature determines the path exactly.
However, the truncated signature $S_A^{\leq r}(X)$ is a finite-dimensional projection of this infinite object, and multiple distinct paths may share the same truncation. 
This non-uniqueness motivates our formulation of signature inversion as learning the conditional distribution of paths given their truncated signatures.

Formally, let $X \sim p_{\text{data}}$ be a sample path and write $s = S_A^{\leq r}(X)$ for its truncated signature.
Probabilistic signature inversion is defined as the task of learning the conditional distribution
\begin{align}
    P\bigl(X \mid S_A^{\leq r}(X)=s\bigr),
\end{align}
which is a distribution over paths that are consistent with the signature $s$ and plausible under $p_{\text{data}}$.


\subsection{Signature-conditioned flow matching}
\label{subsec:diffusion}

We employ a flow matching model~\citep{Lipman2023flow} to approximate this conditional distribution.
In the objective below, $X$ denotes the observed sequence at times $0=t_0<\cdots<t_n=T$.
We train the model to learn a signature-conditioned velocity field that transports samples from a known distribution to data paths by minimizing
\begin{align}
    \mathcal{L}_\text{FM}(\phi) = \EE_{X^{(1)}, X^{(0)}, \tau}
    \bigl[ 
        \| u_\phi(X^{(\tau)};\tau,S_A^{\leq r}(X^{(1)})) - (X^{(1)} - X^{(0)})\|_2^2
    \bigr],
\end{align}
where $X^{(\tau)} = \tau X^{(1)} + (1-\tau) X^{(0)}$, with $X^{(1)} \sim p_\text{data}$, $X^{(0)}\sim\mathcal{N}(\mathbf{0},\mathbf{I})$, and $\tau \sim \mathcal{U}(0,1)$.
At inference time, we invert a signature $s$ by solving the following ODE:
\begin{align}
    \frac{dX^{(\tau)}}{d\tau} = u_\phi\bigl(X^{(\tau)};\tau,s\bigr),
    \quad X^{(0)} \sim \mathcal{N}(\mathbf{0}, \mathbf{I}).   
\end{align}
\vspace{-8mm}


\section{Bayes reconstruction error analysis}
\label{sec:error_analysis}
Probabilistic signature inversion reframes the ambiguity introduced by finite-depth signature truncation as intrinsic, making the inverse problem distributional rather than deterministic.
We quantify this ambiguity through the Bayes reconstruction error, which measures the irreducible path-level variation remaining after conditioning on a truncated signature.
In this section, we derive tractable upper bounds on the Bayes-oracle reconstruction error by computing the corresponding errors under linear statistics for Gaussian process families.

\subsection{Bayes reconstruction error for general parametric processes}
\label{subsec:bayes-recon-error}

\begin{definition}[Bayes reconstruction error]\label{def:recon-error}
A prior $\pi_\Theta$ on $\Theta$ induces the mixture law $\bar{P}_\Theta := \int_\Theta P_\theta\,\pi(d\theta)$.
For a statistic $\Phi$, let $Q_\Theta(\cdot\mid s) = \bar{P}_\Theta(X\in\cdot\mid \Phi(X)=s)$ be the statistic-conditioned distribution.
The Bayes reconstruction error under the statistic $\Phi$ is
\begin{align} \label{eq:recon-error}
  \mathcal{E}_\Theta(\Phi)
  = \EE_{X\sim\bar{P}_{\Theta},\,
        \tilde{X}\sim Q_\Theta(\cdot\mid\Phi(X))}
    \biggl[\int_0^T |X_t-\tilde{X}_t|^2\,dt\biggr].
\end{align}
When $\Phi=S^{\le r}_A$, $Q_\Theta$ is the signature-conditioned distribution.
\end{definition}


\begin{restatable}[Within--between variance decomposition]{proposition}{withinbetweenvarprop}\label{prop:within-between-var}
For any statistic $\Phi$,
\begin{align}
  \label{eq:within-between-var}
  \mathcal{E}_\Theta(\Phi)
  &= 2\EE_{S\sim\Phi_\#\bar{P}_{\Theta}} \biggl[
      \int_0^T\!\Var_{Q_{\Theta}}\!(X_t\mid S)\,dt
    \biggr] \\ 
  \label{eq:within-between-var-total}
  &= 2\EE_S \biggl[
      \int_0^T \EE_{\theta\mid S}\big[\Var_\theta(X_t\mid S)\big]\,dt
      + \int_0^T \Var_{\theta\mid S}\big(\EE_\theta[X_t\mid S]\big)\,dt
  \biggr].
\end{align}
\end{restatable}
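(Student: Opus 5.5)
The plan is to establish the first equality, \eqref{eq:within-between-var}, by a conditional-independence argument, and then obtain \eqref{eq:within-between-var-total} from the law of total variance applied under the posterior on $\theta$.

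\textbf{First equality.} Write $S=\Phi(X)$ with $X\sim\bar P_\Theta$. By construction $\tilde X\sim Q_\Theta(\cdot\mid S)$, so $\tilde X$ is drawn from the conditional law of $X$ given $S$, independently of $X$ given $S$. Conditionally on $S$, the pair $(X,\tilde X)$ is therefore an i.i.d.\ pair from $Q_\Theta(\cdot\mid S)$.

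I would apply Fubini to swap the expectation and the time integral. This needs $\EE\int_0^T|X_t|^2\,dt<\infty$, which I assume; it holds for the Gaussian-type families considered. The error then becomes
\[
\int_0^T \EE_S\bigl[\EE(|X_t-\tilde X_t|^2\mid S)\bigr]\,dt .
\]
For two conditionally i.i.d.\ copies,
\[
\EE(|X_t-\tilde X_t|^2\mid S)=2\Var_{Q_\Theta}(X_t\mid S),
\]
because the cross term is $-2\,\EE(X_t\mid S)\cdot\EE(\tilde X_t\mid S)=-2|\EE(X_t\mid S)|^2$. For vector-valued $X_t$, $\Var$ means the trace of the covariance. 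Swapping the integrals back gives \eqref{eq:within-between-var}.

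\textbf{Second equality.} Regard $\theta\sim\pi$ as a latent variable jointly distributed with $X$. Then $Q_\Theta(\cdot\mid S)$ is the mixture over the posterior $\theta\mid S$ of the laws $P_\theta(X\in\cdot\mid S)$. Applying the law of total variance conditionally on $S$ gives
\[
\Var(X_t\mid S)=\EE_{\theta\mid S}\bigl[\Var_\theta(X_t\mid S)\bigr]+\Var_{\theta\mid S}\bigl(\EE_\theta[X_t\mid S]\bigr).
\]
Substituting this into \eqref{eq:within-between-var} and using linearity of the integral yields \eqref{eq:within-between-var-total}.

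\textbf{Main obstacle.} The only real difficulty is measure-theoretic. Conditioning on $\{\Phi(X)=s\}$ can be a null event, so $Q_\Theta$ must be understood as a regular conditional distribution. I would take a regular conditional distribution on the path space, which exists because it is Polish. The one point needing care is the joint construction of $(\theta,X,\tilde X)$ in which $\tilde X$ is conditionally independent of $(X,\theta)$ given $S$. With that construction the tower property justifies both steps above.
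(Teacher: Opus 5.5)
Your proposal is correct and follows the paper's route: conditional on $S$, $X$ and $\tilde X$ are i.i.d.\ from $Q_\Theta(\cdot\mid S)$, so $\EE[(X_t-\tilde X_t)^2\mid S]=2\Var(X_t\mid S)$; integrating in $t$ and averaging over $S$ gives the first identity, and the law of total variance under the posterior $\theta\mid S$ gives the second. Your extra remarks on regular conditional distributions, the joint construction of $(\theta,X,\tilde X)$, and square-integrability for Fubini are points the paper leaves implicit.
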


\begin{proof}
The proof is given in \cref{app:proof-within-between-var}.
\end{proof}

\begin{definition}[Linear statistic]\label{def:lin-stat}
For $r\ge 1$, the linear statistic
$L^{(r)}\colon C([0,T];\RR)\to\RR^r$ is defined by
\begin{align}\label{eq:lin-stat}
  L^{(r)}(X)
  := \bigl(\ell_0(X), \ell_1(X), \ldots, \ell_{r-1}(X)\bigr)^\top,
\end{align}
where $\ell_0(X):=X_T$ and
$\ell_p(X):=\int_0^T t^{p-1}X_t\,dt$ for $p=1,\ldots,r-1$.
\end{definition}


\begin{proposition}[Linear statistic upper bound]\label{prop:lin-stat-bound}
For $r\ge 1$,
\begin{align}
  \mathcal{E}_\Theta(S^{\le r}_{\mathrm{TLL}})
  \le
  \mathcal{E}_\Theta(S^{\le r}_{\mathrm{TA}})
  \le
  \mathcal{E}_\Theta(L^{(r)}).
\end{align}
Moreover, $\mathcal{E}_\Theta(S^{\le r}_{\mathrm{TA}})=\mathcal{E}_\Theta(L^{(r)})$ for $r\le 2$.
\end{proposition}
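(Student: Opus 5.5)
The plan is to reduce all three inequalities to one monotonicity principle. If $\Phi_1$ is a measurable function of $\Phi_2$, i.e. $\sigma(\Phi_1(X))\subseteq\sigma(\Phi_2(X))$, then $\mathcal{E}_\Theta(\Phi_2)\le\mathcal{E}_\Theta(\Phi_1)$. This follows from \cref{prop:within-between-var}. By \eqref{eq:within-between-var}, $\mathcal{E}_\Theta(\Phi)=2\int_0^T\EE\bigl[\Var(X_t\mid\Phi(X))\bigr]dt$, where $\Var$ is taken under $\bar P_\Theta$. By the law of total variance, $\EE[\Var(X_t\mid\mathcal{G})]=\EE[X_t^2]-\EE[(\EE[X_t\mid\mathcal{G}])^2]$. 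This quantity is nonincreasing in the $\sigma$-algebra $\mathcal{G}$, since $\EE[X_t\mid\mathcal{G}]$ is an $L^2$ projection and projections onto larger subspaces have larger norm. Integrating in $t$ (Tonelli, assuming $\int_0^T\EE X_t^2\,dt<\infty$) gives the claim. The remaining work is to exhibit the required functional dependencies, and, for $r\le2$, dependence in both directions.

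\textbf{Step 1: TLL versus TA.} The path $\mathrm{TA}(X)$ is obtained from $\mathrm{TLL}(X)$ by projecting onto the (time, lead) coordinates, up to reparametrization. Projection is a linear map $\RR^3\to\RR^2$ between paths, and signatures are equivariant under linear maps: $S(MY)=M^{\otimes}S(Y)$ level by level, and they are invariant under reparametrization. Hence $S^{\le r}_{\mathrm{TA}}(X)$ is a function of $S^{\le r}_{\mathrm{TLL}}(X)$.

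One point needs checking. In the TLL construction, time is frozen on the odd steps while the lead jumps. The projected path is therefore the piecewise-linear TA path traversed as "vertical then horizontal" staircase pieces, not the straight interpolation. I would verify that this piecewise-linear path and the TA path have the same signature. If they do not, I would instead define the TA comparison through the projection and note that the paper's TA is intended to match; this is the main technical obstacle I expect. A fallback is to argue directly: the level-$\le r$ TA terms needed below are $\int t^k\,dX$-type integrals. On a staircase path where $X$ moves while $t$ is frozen at the left endpoint, these differ from the linear interpolation. So the cleaner route is to show that the claimed inequality holds via functional dependence of the relevant coordinates, or to interpret TLL as containing TA exactly.

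\textbf{Step 2: TA versus $L^{(r)}$.} Write the TA path as $(t,X_t)$ with $X_0=0$. Among the level-$\le r$ words, take the word $(1,\dots,1,2)$ with $k=p-1$ time letters followed by one $X$ letter, where $p-1\le r-1$. Its value is $\int_0^T \frac{t^{k}}{k!}\,dX_t=\frac{1}{k!}\bigl(T^kX_T-k\int_0^T t^{k-1}X_t\,dt\bigr)$ by integration by parts. Since $T$ is deterministic, $X_T$ (the word $(2)$) together with these words recovers every $\ell_p$, $p=1,\dots,r-1$. Hence $L^{(r)}$ is a function of $S^{\le r}_{\mathrm{TA}}$.

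For the piecewise-linear observed path the same identity holds exactly, because integration by parts is valid for bounded-variation paths. The $\ell_p$ are then understood as integrals of $X^{\mathrm{lin}}$.

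\textbf{Step 3: equality for $r\le2$.} For $r=1$, $S^{\le1}_{\mathrm{TA}}=(1,T,X_T)$, which is a function of $X_T=L^{(1)}$. For $r=2$, the level-2 terms are $S^{11}=T^2/2$ and $S^{22}=X_T^2/2$. The shuffle identity gives $S^{12}+S^{21}=TX_T$. Finally, $S^{12}=\int t\,dX=TX_T-\int X\,dt$. So all of $S^{\le2}_{\mathrm{TA}}$ is a function of $(X_T,\ell_1)=L^{(2)}$. The $\sigma$-algebras therefore coincide and, by the monotonicity principle, the errors are equal.
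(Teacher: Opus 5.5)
\textbf{Overall.} Your strategy matches the paper's. You establish $\sigalg(L^{(r)})\subseteq\sigalg(S^{\le r}_{\mathrm{TA}})\subseteq\sigalg(S^{\le r}_{\mathrm{TLL}})$, use \cref{prop:within-between-var} so that finer conditioning gives smaller $\EE[\Var(X_t\mid\mathcal{G})]$, and for $r\le 2$ show the two $\sigma$-fields coincide. Your monotonicity argument, Step 2, and Step 3 are correct, and they fill in what the paper only asserts ``by construction''. There is one indexing slip in Step 2. The word with $k$ time letters equals $\frac{1}{k!}\bigl(T^kX_T-k\,\ell_k(X)\bigr)$, so it recovers $\ell_k$, not $\ell_{k+1}$. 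To get $\ell_p$ you need $k=p$ time letters, giving a word of length $p+1$. That length is at most $r$ exactly when $p\le r-1$, so the conclusion is unaffected.

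\textbf{The gap: Step 1 is left unresolved.} Your claim that $\mathrm{TA}(X)$ is the (time, lead) projection of $\mathrm{TLL}(X)$ is false, and your suspicion is right. That projection first moves vertically from $(t_i,X_{t_i})$ to $(t_i,X_{t_{i+1}})$ and then horizontally. Its level-2 term $S^{12}$ is therefore the left Riemann sum $\sum_i t_i(X_{t_{i+1}}-X_{t_i})$. The TA path instead gives $\sum_i\frac{t_i+t_{i+1}}{2}(X_{t_{i+1}}-X_{t_i})$. So the equivariance argument, applied to this projection, does not deliver $S^{\le r}_{\mathrm{TA}}$, and your fallbacks do not supply a replacement argument.

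\textbf{The fix.} Project onto (time, lag) instead.
\begin{itemize}
\item Both $\mathrm{TLL}(X)_{2i}$ and $\mathrm{TLL}(X)_{2i+1}$ project to the same point $(t_i,X_{t_i})$.
\item The segment from $(t_i,X_{t_{i+1}},X_{t_i})$ to $(t_{i+1},X_{t_{i+1}},X_{t_{i+1}})$ projects to the straight chord from $(t_i,X_{t_i})$ to $(t_{i+1},X_{t_{i+1}})$.
\end{itemize}
Hence the (time, lag) projection is exactly the TA interpolation with stationary pauses inserted. These pauses do not change the signature: a constant segment has signature $1$, so Chen's identity leaves the product unchanged. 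With $\pi$ the linear coordinate projection, equivariance gives $S^{\le r}_{\mathrm{TA}}(X)=\pi^{\otimes}S^{\le r}_{\mathrm{TLL}}(X)$ level by level. This is the inclusion $\sigalg(S^{\le r}_{\mathrm{TA}})\subseteq\sigalg(S^{\le r}_{\mathrm{TLL}})$ you need, and with this replacement your proof is complete.
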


\begin{proof}
The inclusions $\sigalg(L^{(r)})\subseteq\sigalg(S^{\le r}_{\mathrm{TA}})\subseteq\sigalg(S^{\le r}_{\mathrm{TLL}})$ hold by construction, and \cref{prop:within-between-var} implies that conditioning on a finer sigma-field cannot increase conditional variance.
For $r\le2$, $S^{\le r}_{\mathrm{TA}}$ generates the same sigma-field as the corresponding linear statistic, giving equality.
\end{proof}

\subsection{Bayes reconstruction error for Gaussian processes}
\label{subsec:bayes-gaussian}
  
We now focus on Gaussian process families, whose analytical tractability and well-understood distributional properties have made them standard models across many domains.
For linear statistics, \cref{prop:within-between-var} reduces the Bayes reconstruction error to fixed-parameter conditional mean and variance terms under the observed statistic.
The next proposition gives the Gaussian regression formula for these moments, which will be used for the kernel-based computation in \cref{subsec:kernel}.


\begin{proposition}[Gaussian regression under linear statistics]\label{prop:gauss-reg}
Let $X_t = m(t) + G_t$, where $m$ is deterministic and $G_t$ is a centered Gaussian process with covariance kernel $K(\cdot,\cdot)$.
For $r \geq 1$, define
\begin{align}
  k^{(r)}(t) &:= \Cov\bigl(G_t, L^{(r)}(G)\bigr), \qquad
  \Sigma^{(r)} := \Cov\bigl(L^{(r)}(G), L^{(r)}(G)\bigr).
\end{align}
Assume $\Sigma^{(r)}$ is invertible.
Then $X \mid L^{(r)}(X)$ is a Gaussian process whose conditional mean and pointwise variance are given by
\begin{align}
  &\EE \bigl[X_t \mid L^{(r)}(X)\bigr]
  = m(t) + k^{(r)}(t) \bigl(\Sigma^{(r)}\bigr)^{-1}\, (L^{(r)}(X) - L^{(r)}(m)),\label{eq:gauss-cond-mean} \\
  &\Var \bigl(X_t \mid L^{(r)}(X)\bigr)
  = K(t,t) - k^{(r)}(t) \bigl(\Sigma^{(r)}\bigr)^{-1}\, k^{(r)}(t)^\top. \label{eq:gauss-cond-cov}
\end{align}
The conditional variance is independent of the observed value of $L^{(r)}(X)$.
\end{proposition}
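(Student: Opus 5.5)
The plan is to reduce the statement to the standard conditioning formula for a finite-dimensional jointly Gaussian vector, applied to $(G_t, L^{(r)}(G))$ and more generally to $(G_{t_1},\dots,G_{t_k}, L^{(r)}(G))$. First I would use linearity of $L^{(r)}$ to write $L^{(r)}(X) = L^{(r)}(m) + L^{(r)}(G)$. Since $m$ is deterministic, conditioning on $L^{(r)}(X)$ is the same as conditioning on $L^{(r)}(G)$, and the shift by $m$ passes through the conditional expectation. It therefore suffices to treat the centered process $G$.

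The key step is to show that, for any finite collection of times $t_1,\dots,t_k$, the vector $(G_{t_1},\dots,G_{t_k}, L^{(r)}(G))$ is jointly Gaussian. The component $\ell_0(G)=G_T$ is a point evaluation, so it is immediate. The components $\ell_p(G)=\int_0^T t^{p-1}G_t\,dt$ need an argument. I would take a measurable (e.g.\ continuous) version of $G$ with $\int_0^T K(t,t)\,dt<\infty$, so the integrals are well defined in $L^2(\Omega)$. Then $\ell_p(G)$ is the $L^2$-limit of Riemann sums $\sum_j t_j^{p-1}G_{t_j}\Delta t_j$, each of which is a linear combination of Gaussian evaluations. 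The closed linear span of a Gaussian family in $L^2$ is a Gaussian Hilbert space, so the whole finite vector lies in it and is jointly Gaussian. I expect this measurability and $L^2$-limit point to be the only genuinely non-algebraic step. Along the way it gives the covariance identities
\[
\Cov\bigl(G_t,\ell_p(G)\bigr)=\int_0^T s^{p-1}K(t,s)\,ds ,
\]
by Fubini, and similarly for $\Sigma^{(r)}$, which is what the later kernel-based computation will need.

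With joint Gaussianity in hand, I would apply the Schur-complement conditioning lemma. Define
\[
R_t := G_t - k^{(r)}(t)\bigl(\Sigma^{(r)}\bigr)^{-1}L^{(r)}(G).
\]
Then $\Cov(R_t, L^{(r)}(G))=0$ by construction, and $\Sigma^{(r)}$ is invertible by assumption. Because $(R_{t_1},\dots,R_{t_k},L^{(r)}(G))$ is jointly Gaussian, zero covariance implies that the vector of residuals is independent of $L^{(r)}(G)$.

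Hence, conditionally on $L^{(r)}(G)$, the process $G_t$ equals the deterministic-given-the-statistic term $k^{(r)}(t)(\Sigma^{(r)})^{-1}L^{(r)}(G)$ plus the independent centered Gaussian process $R$. This shows three things:
\begin{itemize}
\item the conditional law is that of a Gaussian process, since all finite-dimensional conditional laws are Gaussian;
\item adding back $m(t)$ and substituting $L^{(r)}(G)=L^{(r)}(X)-L^{(r)}(m)$ gives \eqref{eq:gauss-cond-mean};
\item the conditional variance is $\Var(R_t)=K(t,t)-k^{(r)}(t)(\Sigma^{(r)})^{-1}k^{(r)}(t)^\top$, obtained by expanding the square and using the symmetry of $\Sigma^{(r)}$, which is \eqref{eq:gauss-cond-cov}.
\end{itemize}

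Finally, because $R$ is independent of $L^{(r)}(X)$, its variance does not depend on the observed value of the statistic. This is the last claim of the proposition.
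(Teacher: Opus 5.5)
Your proposal is correct and takes the same route as the paper. Since $\sigalg(L^{(r)}(X))=\sigalg(L^{(r)}(G))$, you reduce to the centered process, use joint Gaussianity of $G_t$ together with $L^{(r)}(G)$, and apply the standard Gaussian conditioning formula, which you write via the residual $R_t$. You also supply two details the paper leaves implicit: why the integral components $\ell_p(G)$ are jointly Gaussian with point evaluations (your Riemann-sum step tacitly uses mean-square continuity, i.e.\ continuity of $K$, which holds for all three process families), and the finite-dimensional version of the argument needed for the ``Gaussian process'' claim.
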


\begin{proof}
Since $m$ is deterministic, $\sigalg(L^{(r)}(X))=\sigalg(L^{(r)}(G))$.
For each $t$, $(G_t, L^{(r)}(G))$ is jointly Gaussian; the standard conditional Gaussian formula gives \cref{eq:gauss-cond-mean,eq:gauss-cond-cov}.
\end{proof}


\paragraph{Process families.}
For the rest of the analysis, we consider three classical Gaussian process families chosen to represent complementary regimes in time series modeling:
the log-GBM as a canonical diffusion baseline with $\theta\!=\!(\mu,\sigma)$,
the log-fBM for long-memory behavior with $\theta\!=\!(\mu,\sigma,H)$,
and the OU process for mean reversion with $\theta\!=\!(\mu,\sigma,\kappa)$.
All three are initialized at $X_0=0$ and defined by
\begin{align}
    \text{log-GBM:}  
    &\quad dX_t = \nu\,dt + \sigma\,dB_t, 
    &&\nu := \mu-\tfrac{1}{2}\sigma^2, \label{eq:logGBM} \\
    \text{log-fBM:}   
    &\quad X_t = \nu\,t + \sigma B^H_t, 
    &&\nu := \mu-\tfrac{1}{2}\sigma^2, \label{eq:logfBM} \\
    \text{OU:}        
    &\quad dX_t = \kappa(\mu-X_t)\,dt + \sigma\,dB_t, \label{eq:OU}
\end{align}
where $B$ is a standard Brownian motion and $B^H$ is a fractional Brownian motion with Hurst parameter $H\in(0,1)$.
The process $B^H$ has covariance $\Cov(B^H_s, B^H_t)=\frac{1}{2}(s^{2H}+t^{2H}-|s-t|^{2H})$, so it reduces to standard Brownian motion when $H=0.5$.

\subsection{Closed-form analysis for log-GBM}
\label{subsec:closed-form-gbm}

For log-GBM, the Bayes reconstruction error under $L^{(r)}$ can be computed in closed form.
The key observation is that $L^{(r)}(X)$ can be represented by finitely many Wiener integrals, so the fixed-parameter conditional mean and variance terms in \cref{prop:within-between-var} reduce to an $L^2$ projection.
\cref{prop:gbm-error} carries out this reduction and gives the closed-form Bayes reconstruction error.


\begin{restatable}[Gaussian Hilbert-space projection]{lemma}{gaussianprojectionlemma}\label{lemma:gaussian-projection}
Let $B$ be a standard Brownian motion and define the Wiener integral $I(f) := \int_0^T f(s)\, dB_s$ for $f \in L^2([0,T])$.
Then, $I$ is linear and satisfies the It\^{o} isometry, $\EE[I(f) I(g)] = \langle f, g \rangle_{L^2([0,T])}$.
For a closed subspace $M \subset L^2([0,T])$ and $f \in L^2([0,T])$,
\begin{align}
    &\EE\bigl[
        I(f) \mid \sigalg(I(g) : g \in M)
    \bigr]
    = I(\Pi_M f), \\
    &\Var\bigl(
        I(f) \mid \sigalg(I(g) : g \in M)
    \bigr)
    = \| f - \Pi_M f \|_{L^2([0,T])}^2,
\end{align}
where $\Pi_M$ denotes the orthogonal projection onto $M$.
\end{restatable}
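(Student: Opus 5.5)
The plan has three steps: establish the basic properties of $I$, reduce the conditioning to the projection, and use Gaussianity to turn uncorrelatedness into independence.

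\textbf{Properties of $I$.} Start from simple (step) functions $f=\sum_j a_j\mathbf{1}_{(s_j,s_{j+1}]}$, for which $I(f)=\sum_j a_j(B_{s_{j+1}}-B_{s_j})$. On simple functions, linearity is immediate. The isometry $\EE[I(f)I(g)]=\langle f,g\rangle_{L^2}$ follows from independence of Brownian increments with variance equal to interval length.

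Simple functions are dense in $L^2([0,T])$, so the isometry extends $I$ uniquely to a linear isometry $L^2([0,T])\to L^2(\Omega)$. Every $I(f)$ is an $L^2$-limit of centered Gaussians, hence centered Gaussian. By linearity, any finite family $(I(f_1),\ldots,I(f_k))$ is jointly Gaussian, since every linear combination is $I(\sum c_if_i)$. So $\{I(f):f\in L^2\}$ is a Gaussian Hilbert space.

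\textbf{Reduction to independence.} Write $f=\Pi_M f+h$ with $h=f-\Pi_M f\in M^\perp$. Then $I(f)=I(\Pi_M f)+I(h)$.

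The first term is measurable with respect to $\mathcal G:=\sigalg(I(g):g\in M)$, because $\Pi_M f\in M$.

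For the second term, the isometry gives $\EE[I(h)I(g)]=\langle h,g\rangle=0$ for every $g\in M$. Since $I(h)$ and any finite collection $(I(g_1),\ldots,I(g_k))$ with $g_i\in M$ are jointly Gaussian, zero covariance implies that $I(h)$ is independent of $\sigma(I(g_1),\ldots,I(g_k))$.

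\textbf{Main obstacle: passing to all of $\mathcal G$.} This step is the only delicate one, because $M$ may be infinite-dimensional. I would handle it with a $\pi$–$\lambda$ argument. The union over finite subfamilies of the $\sigma$-algebras $\sigma(I(g_1),\ldots,I(g_k))$ is a $\pi$-system that generates $\mathcal G$. For a fixed Borel set $A$, the class of events $E$ with $P(\{I(h)\in A\}\cap E)=P(I(h)\in A)P(E)$ is a $\lambda$-system. It contains this $\pi$-system, so by Dynkin's theorem it contains $\mathcal G$, and hence $I(h)$ is independent of $\mathcal G$.

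\textbf{Conclusion.} Combining the two terms,
\[
\EE[I(f)\mid\mathcal G]=I(\Pi_M f)+\EE[I(h)]=I(\Pi_M f).
\]
For the variance,
\[
\Var(I(f)\mid\mathcal G)=\EE[I(h)^2\mid\mathcal G]=\EE[I(h)^2]=\|h\|_{L^2}^2=\|f-\Pi_M f\|_{L^2([0,T])}^2,
\]
where the middle equality uses independence again and the next uses the isometry. The conditional variance is therefore deterministic, which matches the Gaussian regression picture of \cref{prop:gauss-reg}.
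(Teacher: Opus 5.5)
Your proposal is correct and follows the same route as the paper. The paper also splits $f$ into $\Pi_M f$ plus the orthogonal residual, uses the It\^{o} isometry to show the residual has zero covariance with every $I(g)$, $g\in M$, and uses joint Gaussianity to get independence, which yields both the conditional mean and the deterministic conditional variance. Your construction of $I$ from step functions and your $\pi$--$\lambda$ argument, which extends independence from finitely many generators to all of $\sigma(I(g):g\in M)$ when $M$ is infinite-dimensional, make explicit details the paper's short proof leaves implicit.
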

\begin{proof}
The proof is given in \cref{app:proof-gaussian-projection}.
\end{proof}


\begin{restatable}[Closed-form Bayes reconstruction error for log-GBM]{proposition}{gbmerrorprop}\label{prop:gbm-error}
For log-GBM and $r \geq 1$, there exists a deterministic function $c_r: [0,T] \to \mathbb{R}^r$ such that
\begin{align}\label{eq:cond-mean-gbm}
    \EE_{\theta}\bigl[X_t \mid L^{(r)}(X)\bigr] 
    = c_r(t)^\top L^{(r)}(X), \qquad t \in [0,T].
\end{align}
In particular, the conditional mean is determined entirely by the observed statistic $L^{(r)}(X)$ and does not depend on $\theta = (\mu, \sigma)$.
Thus, the between-parameter conditional-mean variance term in \cref{prop:within-between-var} vanishes.
Moreover, the integrated conditional variance is
\begin{align}\label{eq:int-var-gbm}
    \int_0^T \Var_{\theta}\bigl(X_t \mid L^{(r)}(X)\bigr)\,dt 
    = a_r \sigma^2 T^2,
    \qquad
    a_r = \frac{r}{2(2r-1)(2r+1)}.
\end{align}
Substituting these identities into \cref{prop:within-between-var} yields the Bayes reconstruction error
\begin{align}\label{eq:bayes-gbm}
    \mathcal{E}_\Theta(L^{(r)}) = 2\,a_r\,T^2\,\EE [\sigma^2].
\end{align}
\end{restatable}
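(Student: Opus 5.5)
The plan is to rewrite both $X_t$ and every coordinate of $L^{(r)}(X)$ as Wiener-type integrals of deterministic kernels, and then apply \cref{lemma:gaussian-projection}. Fix $\theta=(\mu,\sigma)$ and set $Y_s:=\nu s+\sigma B_s$, so that $X_t=\int_0^T \mathbf 1_{[0,t]}(s)\,dY_s$. Stochastic Fubini gives the coordinates of the statistic:
\begin{align*}
\ell_0(X)&=\int_0^T g_0(s)\,dY_s, & g_0&\equiv 1,\\
\ell_p(X)&=\int_0^T g_p(s)\,dY_s, & g_p(s)&=\frac{T^p-s^p}{p}, \quad p=1,\ldots,r-1.
\end{align*}
Hence every $\ell_p(X)$ has the form $\nu\langle g_p,1\rangle+\sigma I(g_p)$. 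The $g_p$ are linearly independent and span $M:=\mathcal P_{r-1}$, the polynomials of degree $\le r-1$ on $[0,T]$. Since $\sigma>0$, $\sigalg(L^{(r)}(X))=\sigalg(I(g):g\in M)$.

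For the conditional mean, write $f_t:=\mathbf 1_{[0,t]}$ and $\Pi_M f_t=\sum_p c_{r,p}(t)g_p$, with deterministic coefficients determined by the invertible Gram matrix of the $g_p$. \cref{lemma:gaussian-projection} gives $\EE_\theta[X_t\mid L^{(r)}]=\nu t+\sigma I(\Pi_M f_t)$.

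The key point is that the constant function $1$ lies in $M$. Therefore
\[
\langle \Pi_M f_t,1\rangle=\langle f_t,\Pi_M 1\rangle=\langle f_t,1\rangle=t .
\]
So $\nu t=\sum_p c_{r,p}(t)\,\nu\langle g_p,1\rangle$, and the drift recombines with the noise term into
\[
\EE_\theta[X_t\mid L^{(r)}]=\sum_p c_{r,p}(t)\,\ell_p(X)=c_r(t)^\top L^{(r)}(X).
\]
This expression is free of $\theta$. As a result the between-parameter term in \cref{prop:within-between-var} vanishes.

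For the variance, \cref{lemma:gaussian-projection} gives $\Var_\theta(X_t\mid L^{(r)})=\sigma^2\|f_t-\Pi_M f_t\|^2$. Substituting $t=Tu$ and $s=Tv$ shows that $\int_0^T\|f_t-\Pi_Mf_t\|^2dt=T^2 a_r$, where $a_r$ is the same quantity computed on $[0,1]$. To evaluate $a_r$, take the orthonormal shifted Legendre polynomials $e_k(v)=\sqrt{2k+1}\,\tilde P_k(v)$ and set $E_k(u):=\int_0^u e_k$. Parseval gives $\|f_u\|^2=u=\sum_{k\ge0}E_k(u)^2$, so
\[
a_r=\sum_{k\ge r}\int_0^1 E_k(u)^2\,du .
\]
The main computation is to evaluate these integrals using the antiderivative identity $\int P_k=(P_{k+1}-P_{k-1})/(2k+1)$, transported to $[0,1]$. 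For $k\ge1$ this yields $E_k=\frac{\sqrt{2k+1}}{2(2k+1)}(\tilde P_{k+1}-\tilde P_{k-1})$. Orthogonality, together with $\int_0^1\tilde P_j^2=1/(2j+1)$, then gives
\[
\int_0^1E_k^2=\frac{1}{4(2k+1)}\Bigl(\frac1{2k+3}+\frac1{2k-1}\Bigr)=\frac{1}{2(2k-1)(2k+3)} .
\]
I expect the main obstacle to be handling the normalizations correctly here. Splitting $\frac1{(2k-1)(2k+3)}=\frac14\bigl(\frac1{2k-1}-\frac1{2k+3}\bigr)$ makes the tail sum telescope, with only the terms $k=r$ and $k=r+1$ surviving:
\[
a_r=\frac18\Bigl(\frac1{2r-1}+\frac1{2r+1}\Bigr)=\frac{r}{2(2r-1)(2r+1)} .
\]
As a sanity check, the case $r=1$ gives $1/2-\int_0^1u^2\,du=1/6$, in agreement.

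Finally, the integrated conditional variance $a_r\sigma^2T^2$ does not depend on the observed value of the statistic. The posterior over $\theta$ given $S$ averages it, and taking the outer expectation over $S$ yields $\EE[\sigma^2]$. Inserting this into \cref{eq:within-between-var-total} with the between term equal to zero gives $\mathcal E_\Theta(L^{(r)})=2a_rT^2\EE[\sigma^2]$.
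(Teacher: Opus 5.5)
Your proposal is correct and follows essentially the same route as the paper. Both arguments write $L^{(r)}(X)$ as Wiener integrals of kernels spanning the polynomials of degree at most $r-1$, apply the Gaussian projection lemma, and use $1\in\mathcal H_r$ to get $c_r(t)^\top\beta=t$ so the drift cancels; both then compute $a_r$ from the shifted-Legendre antiderivative identity with a telescoping tail sum, and your normalizations and the $r=1$ check agree with the paper's computation.
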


\begin{proof}[Proof sketch]
Let $S:=L^{(r)}(X)$. 
By Fubini, for fixed $\theta$, ~$S=\nu\beta+\sigma(I(f_0),\ldots,I(f_{r-1}))^\top$, where $f_0\!\equiv\! 1$, $f_j(s)\!=\!(T^j-s^j)/j$ for $j \geq 1$, and $\beta\!=\!(T,\ldots,T^r/r)^\top$. 
Thus, $\sigalg(S)=\sigalg(I(f):f\in\mathcal H_r)$, where $\mathcal H_r:=\mathrm{span}\{f_0,\ldots,f_{r-1}\}$.
Since $X_t=\nu t+\sigma I(\mathbf 1_{[0,t]})$, \cref{lemma:gaussian-projection} reduces the conditional mean to projecting $\mathbf 1_{[0,t]}$ onto $\mathcal H_r$.
Writing $\Pi_{\mathcal H_r}\mathbf 1_{[0,t]}=c_r(t)^\top(f_0,\ldots,f_{r-1})$,
\begin{align}
    \EE_\theta[X_t\mid S]
    = c_r(t)^\top S+\nu(t-c_r(t)^\top\beta).
\end{align}
Since $f_0\equiv 1\in \mathcal H_r$, ~$t=\langle\mathbf{1}_{[0,t]},\mathbf{1}\rangle = \langle\Pi_{\mathcal H_r}\mathbf{1}_{[0,t]},\mathbf{1}\rangle = c_r(t)^\top\beta$, the second term vanishes, yielding \eqref{eq:cond-mean-gbm}.
Because the resulting conditional mean does not depend on $\theta$, the between-parameter conditional-mean variance term in \cref{prop:within-between-var} vanishes.
For the conditional variance,
\begin{align}
    \int_0^T \Var_\theta(X_t\mid S)\,dt
    = \sigma^2T^2\int_0^1
    \|\mathbf 1_{[0,u]}-\Pi_{\mathcal H_r}\mathbf 1_{[0,u]}\|^2\,du 
    = a_r\sigma^2T^2,
\end{align}
where the last equality follows from \cref{lemma:legendre}.
The full proof is given in \cref{app:proof-gbm-error}.
\end{proof}

\begin{corollary}[Depth-two oracle error equality for log-GBM]
\label{cor:gbm-depth-two-equality}
For log-GBM,
\begin{align}
\mathcal{E}_\Theta(L^{(2)})
=
\mathcal{E}_\Theta(S^{\le 2}_{\mathrm{TA}})
=
\mathcal{E}_\Theta(S^{\le 2}_{\mathrm{TLL}}).
\end{align}
\end{corollary}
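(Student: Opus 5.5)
The first equality $\mathcal{E}_\Theta(L^{(2)})=\mathcal{E}_\Theta(S^{\le 2}_{\mathrm{TA}})$ is the $r\le 2$ case of \cref{prop:lin-stat-bound}. The same proposition gives $\mathcal{E}_\Theta(S^{\le 2}_{\mathrm{TLL}})\le\mathcal{E}_\Theta(S^{\le 2}_{\mathrm{TA}})$. So it remains to show the reverse inequality $\mathcal{E}_\Theta(S^{\le 2}_{\mathrm{TLL}})\ge\mathcal{E}_\Theta(L^{(2)})$. The strategy is to identify exactly which information the time lead-lag level-two signature adds beyond $L^{(2)}$, and then to show that this extra information cannot reduce the error for log-GBM.

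\emph{Step 1: compute the terms of $S^{\le 2}_{\mathrm{TLL}}$.} I would list the coordinates of $S^{\le 2}_{\mathrm{TLL}}(X)$ in the idealized continuous regime, the one in which the theory of \cref{subsec:closed-form-gbm} is stated. The level-one terms are $T$ and $X_T$ (twice). The level-two terms are of three kinds.
\begin{itemize}
\item The squares and symmetric parts of the level-one terms, which by the shuffle identity are functions of $T$ and $X_T$.
\item The time--lead and time--lag cross integrals $\int_0^T X\,dt$ and $\int_0^T t\,dX = TX_T-\int_0^T X\,dt$. Lead and lag give the same values in the limit, and all of these are functions of $L^{(2)}(X)$.
\item The lead--lag antisymmetric area, which encodes the quadratic variation $\langle X\rangle_T$.
\end{itemize}
Hence $\sigalg(S^{\le 2}_{\mathrm{TLL}}(X))=\sigalg(L^{(2)}(X),\langle X\rangle_T)$. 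For log-GBM, $\langle X\rangle_T=\sigma^2T$ almost surely, so conditioning on $S^{\le 2}_{\mathrm{TLL}}$ amounts to conditioning on $(L^{(2)}(X),\sigma)$.

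\emph{Step 2: compute the error given $(L^{(2)}(X),\sigma)$.} Fix $\theta=(\mu,\sigma)$. By \cref{prop:gauss-reg} and \cref{prop:gbm-error}, the law of $X$ given $L^{(2)}(X)$ is Gaussian with mean $c_2(t)^\top L^{(2)}(X)$, which is free of $\theta$. Its covariance is $\sigma^2$ times the projection residual covariance from \cref{lemma:gaussian-projection}, which depends on $\theta$ only through $\sigma$.

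Next, mix over the posterior of $\mu$ given $(L^{(2)},\sigma)$. Because the conditional law under fixed $\theta$ does not involve $\mu$, the conditional law of $X$ given $(L^{(2)}(X),\sigma)$ is that same Gaussian. Applying \cref{prop:within-between-var} with statistic $(L^{(2)},\sigma)$, the between term vanishes because the mean is $\theta$-free. The within term is $\int_0^T\Var_\theta(X_t\mid L^{(2)})\,dt=a_2\sigma^2T^2$. This gives
\begin{align}
\mathcal{E}_\Theta(S^{\le 2}_{\mathrm{TLL}})=2a_2T^2\EE[\sigma^2]=\mathcal{E}_\Theta(L^{(2)}),
\end{align}
where the last equality is \eqref{eq:bayes-gbm}. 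The underlying intuition is that the $L^{(2)}$ error already contains no between-parameter term. Learning $\sigma$ exactly therefore only relocates the $\sigma$-uncertainty without reducing the expected variance, since $\EE[\sigma^2]$ is the same whether or not $\sigma$ is observed.

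\emph{Main obstacle.} The delicate step is Step 1. Several points need care there:
\begin{itemize}
\item the measure-theoretic claim that conditioning on an almost-surely $\sigma$-measurable quantity is the same as conditioning on $\sigma$;
\item checking that no other level-two coordinate carries information beyond $(L^{(2)},\langle X\rangle_T)$;
\item reconciling the discrete lead-lag Riemann sums with $\int_0^T X\,dt$.
\end{itemize}
For discrete observations, the left and right Riemann sums are themselves linear statistics of the path, and the realized quadratic variation only estimates $\sigma^2T$. I would therefore state the result in the continuous idealization, consistent with \cref{prop:gbm-error}, and treat the discretization gap as an approximation.
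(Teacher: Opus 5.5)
Your proposal is correct and follows the paper's proof. The first equality is the $r\le 2$ case of \cref{prop:lin-stat-bound}, $S^{\le 2}_{\mathrm{TLL}}$ is identified with $(L^{(2)},\sigma)$ through the lead--lag area, and because \cref{prop:gbm-error} makes the fixed-parameter conditional mean $\theta$-free, the variance decomposition returns the same value $2a_2T^2\EE[\sigma^2]$. Your version spells out what the paper leaves implicit: the level-two coordinate bookkeeping, the posterior mixing over $\mu$, and the fact that the identification with $\sigma$ holds only in the continuous-time idealization.
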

\begin{proof}
By \cref{prop:lin-stat-bound}, $\mathcal{E}_\Theta(S^{\le2}_{\mathrm{TA}})=\mathcal{E}_\Theta(L^{(2)})$.
For log-GBM, $S^{\le2}_{\mathrm{TLL}}$ is equivalent to $(L^{(2)},\sigma)$, and \cref{prop:gbm-error} shows that the added $\sigma$ does not change the conditional mean.
The conditional-variance decomposition then gives $\mathcal{E}_\Theta(S^{\le2}_{\mathrm{TLL}})=\mathcal{E}_\Theta(L^{(2)})$.
\end{proof}

\subsection{Kernel-based numerical analysis}
\label{subsec:kernel}

For Gaussian processes with known deterministic mean and covariance kernel, $\mathcal{E}_\Theta(L^{(r)})$ is numerically computable.
This follows by reducing the fixed-parameter terms to finite-dimensional Gaussian regression and then integrating over the posterior of $\theta$.

At fixed $\theta$, \cref{prop:gauss-reg} expresses the conditional mean and pointwise variance terms required by \cref{prop:within-between-var} in terms of $L^{(r)}(m_\theta)$, $k_\theta^{(r)}(t)$, and $\Sigma_\theta^{(r)}$.
In \cref{app:kernel-formulas}, \cref{lemma:kernel} writes $k_\theta^{(r)}(t)$ and $\Sigma_\theta^{(r)}$ directly from the covariance kernel $K_\theta(\cdot, \cdot)$, while \cref{tab:kernel} lists the process-specific kernels and deterministic linear-statistic components.
Finally, writing $S:=L^{(r)}(X)$, Bayes' rule gives $p(\theta\mid S)\propto p(S\mid\theta)\pi(\theta)$, where $S\mid\theta\sim\mathcal{N}\bigl(L^{(r)}(m_\theta),\Sigma_\theta^{(r)}\bigr)$.


\section{Experiment}
\label{sec:experiment}
\subsection{Experimental setup}
\label{subsec:implementation}

\paragraph{Datasets.}
We use three synthetic process families---log-GBM, log-fBM, and OU---as controlled benchmarks, and S\&P~500 cumulative log returns to evaluate behavior beyond parametric models.

\paragraph{Training and sampling.}
For probabilistic inversion, we train a flow-matching model with a Diffusion Transformer~\citep{peebles2023scalable}.
We use three conditioning representations---linear statistics~(LS), time augmentation~(TA), and time lead-lag~(TLL).
For synthetic experiments, learned models are trained at depths $r=1,\ldots,6$; for S\&P~500, we use depth $r=4$.
Log-signatures are computed with \texttt{iisignature}\footnote{\url{https://github.com/bottler/iisignature}}.
For TLL, we remove duplicate log-signature coordinates; all conditioning vectors are then scaled level-wise.
For each reference path, inversion generates $30$ samples using $1000$ explicit Euler steps.

\paragraph{Baselines.}
We compare against deterministic baselines: regression ablations trained with the $\ell_2$ loss, \Fourier~(order $n=6$) and \Legendre~(order $n=10$)~\citep{barancikova2025sigdiffusion}, and \Insertion~($n=12$ piecewise-linear pieces)~\citep{fermanian2024insertion}.
The regression ablations use TA/TLL conditioning at depth $6$ for synthetic experiments and depth $4$ for S\&P~500.

\paragraph{Evaluation.}
For synthetic processes, we report Bayes-error and conditioning diagnostics, process-structure diagnostics, and qualitative diagnostics; for S\&P~500, we report stylized-fact and qualitative diagnostics.
Dataset, training, sampling, and baseline details are given in \cref{app:training,app:evaluation}, with metric definitions in \cref{app:metrics}.


\begin{table}[t]
\centering
\captionsetup{skip=6pt}

\caption{Bayes reconstruction errors under LS, TA, and TLL conditioning.
LS Oracle denotes the LS Bayes benchmark from \cref{sec:error_analysis}; LS/TA/TLL Emp. are empirical path-MSE estimates.
The corresponding parameter ranges for synthetic evaluation are specified in \cref{tab:evaluation-datasets} in \cref{app:evaluation}.
A detailed version with confidence intervals and spread ratios is provided in \cref{tab:bayes_error_appendix} in \cref{app:additional}.}
\label{tab:bayes_error}

\renewcommand{\arraystretch}{1.1}
\setlength{\tabcolsep}{5pt}
\scalebox{0.85}{
\begin{tabular}{@{}c c cccc c cccc c cccc@{}}
\toprule
& & \multicolumn{4}{c}{log-GBM} & & \multicolumn{4}{c}{log-fBM} & & \multicolumn{4}{c}{OU} \\
\cmidrule(lr){3-6}\cmidrule(lr){8-11}\cmidrule(lr){13-16}
$r$ & & \shortstack{LS\\Oracle} & \shortstack{LS\\Emp.} & \shortstack{TA\\Emp.} & \shortstack{TLL\\Emp.}
&
& \shortstack{LS\\Oracle} & \shortstack{LS\\Emp.} & \shortstack{TA\\Emp.} & \shortstack{TLL\\Emp.}
&
& \shortstack{LS\\Oracle} & \shortstack{LS\\Emp.} & \shortstack{TA\\Emp.} & \shortstack{TLL\\Emp.} \\ \midrule
1 & & 1.36 & 1.32 & 1.32 & 1.32 & & 1.49 & 1.49 & 1.49 & 1.49 & & 1.16 & 1.14 & 1.14 & 1.14 \\
2 & & 0.55 & 0.55 & 0.55 & 0.55 & & 0.66 & 0.66 & 0.66 & 0.67 & & 0.50 & 0.51 & 0.51 & 0.51 \\
3 & & 0.35 & 0.35 & 0.32 & 0.32 & & 0.45 & 0.46 & 0.45 & 0.44 & & 0.33 & 0.33 & 0.31 & 0.30 \\
4 & & 0.26 & 0.26 & 0.20 & 0.20 & & 0.35 & 0.36 & 0.32 & 0.31 & & 0.25 & 0.25 & 0.19 & 0.20 \\
5 & & 0.21 & 0.21 & 0.15 & 0.15 & & 0.29 & 0.31 & 0.25 & 0.25 & & 0.20 & 0.20 & 0.15 & 0.16 \\
6 & & 0.17 & 0.17 & 0.14 & 0.14 & & 0.25 & 0.27 & 0.23 & 0.25 & & 0.17 & 0.17 & 0.14 & 0.15 \\

\bottomrule
\end{tabular}
}
\vspace{-2mm}
\end{table}

\begin{figure}[t]
\centering
\captionsetup{skip=6pt}
\setlength{\tabcolsep}{0pt}
\renewcommand{\arraystretch}{0.8}
\scalebox{1}{
\begin{tabular}{ccc}
    \multicolumn{3}{c}{\includegraphics[width=0.85\linewidth]{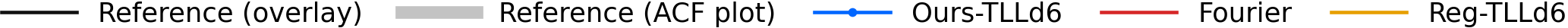}}\\
    \includegraphics[width=0.48\linewidth]{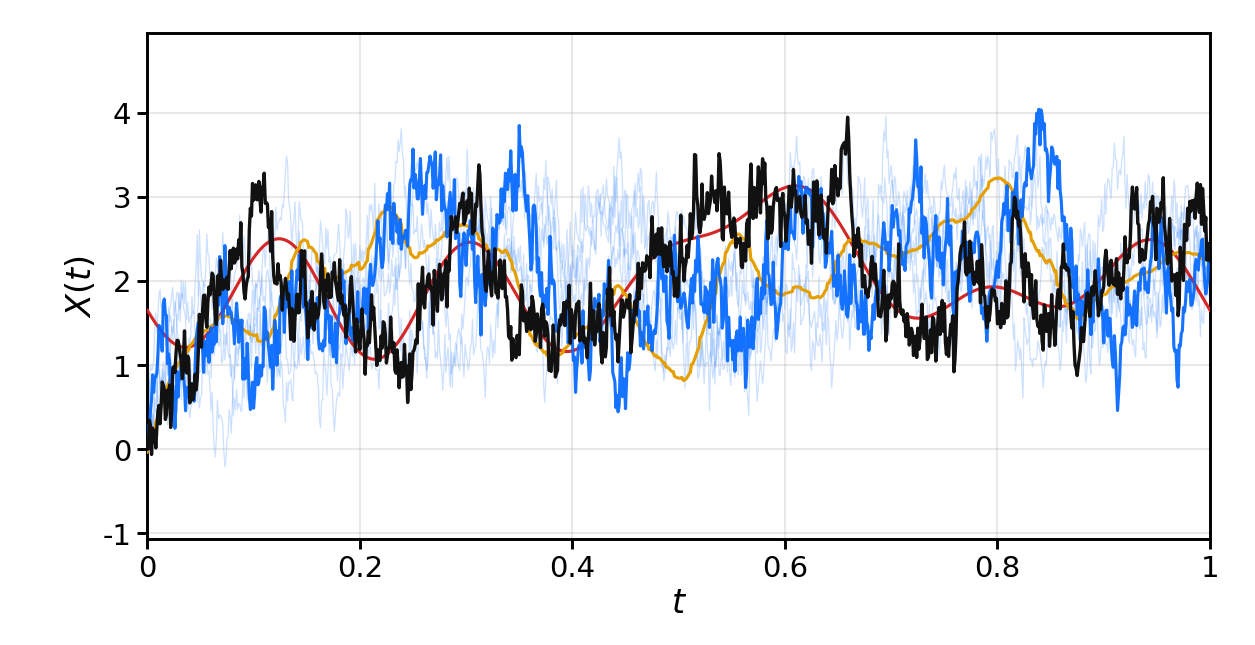}
    &
    \includegraphics[width=0.25\linewidth]{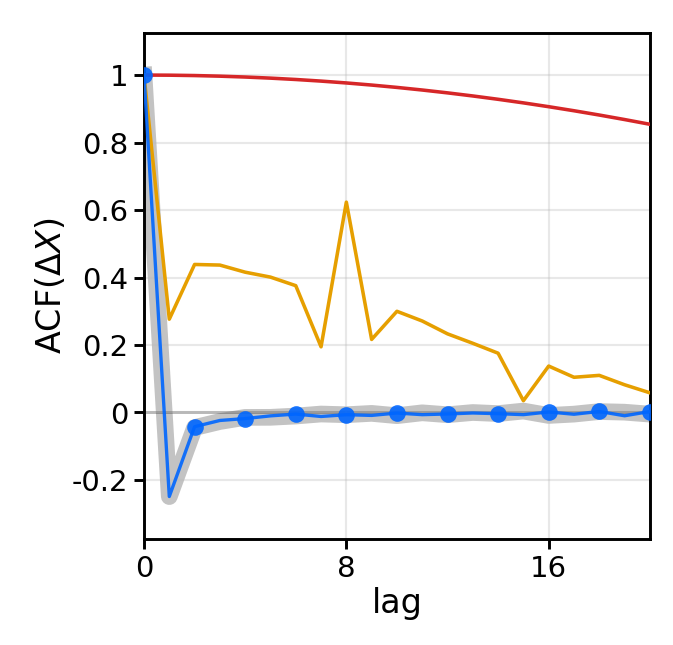}
    &
    \includegraphics[width=0.25\linewidth]{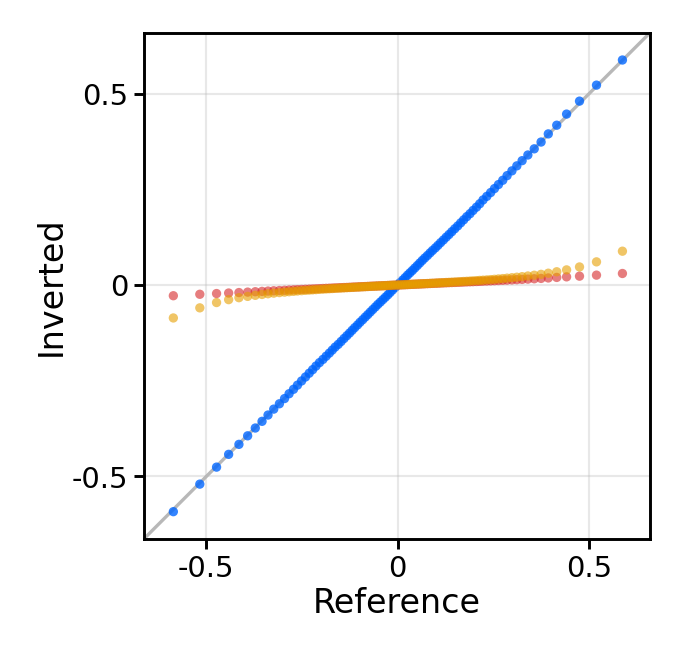} \\
    \multicolumn{3}{c}{(a) log-fBM~($H=0.3,\mu=2.0,\sigma=2.0$)} \\
    %
    %
    \includegraphics[width=0.48\linewidth]{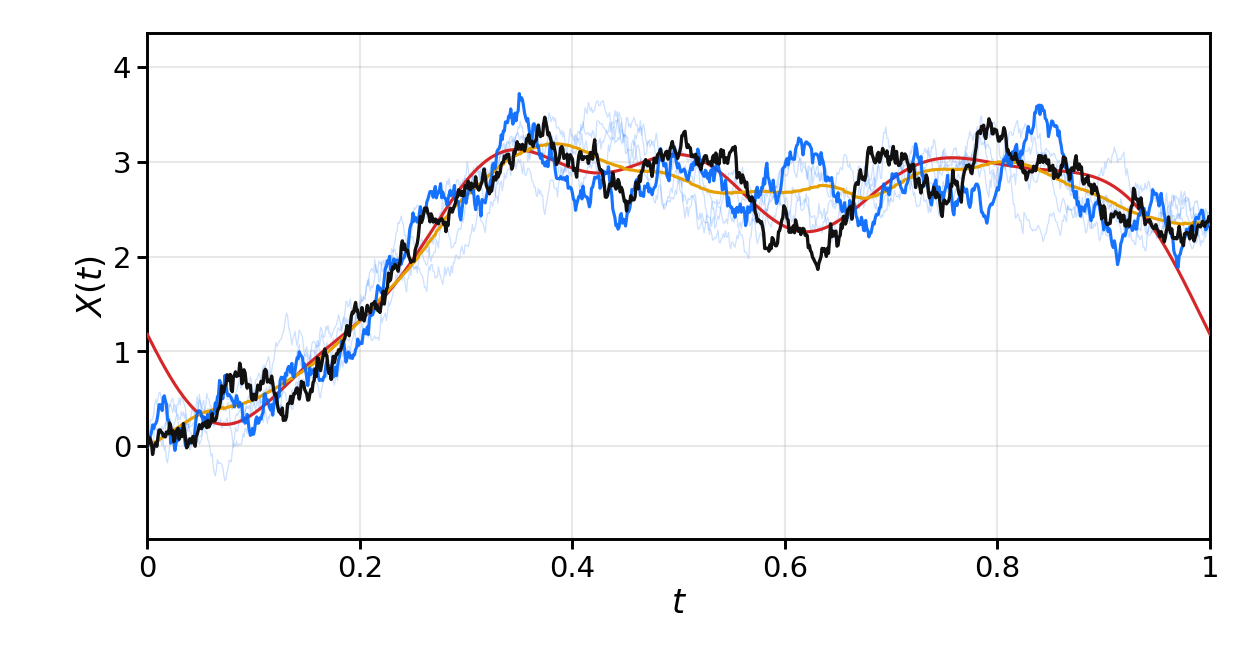}
    &
    \includegraphics[width=0.25\linewidth]{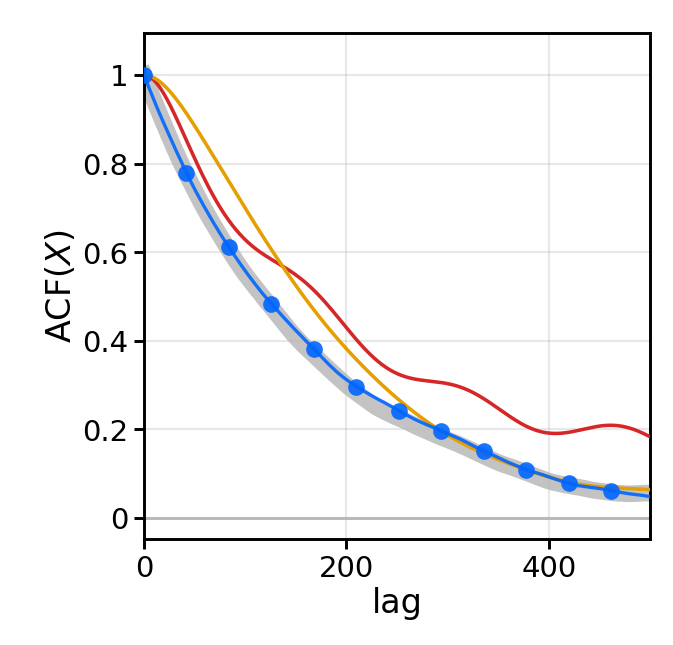}
    &
    \includegraphics[width=0.25\linewidth]{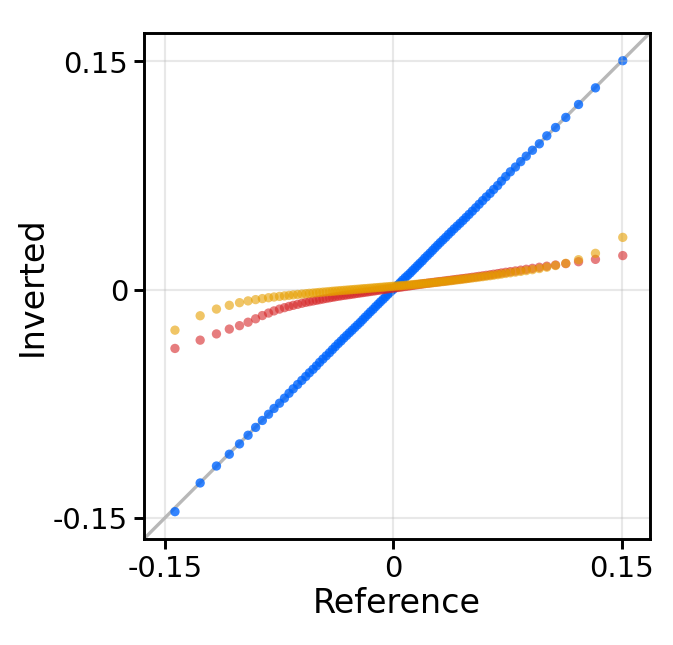} \\
    \multicolumn{3}{c}{(b) OU~($\kappa=5.0,\mu=3.0,\sigma=2.0$)}
\end{tabular}
}
\caption{Qualitative diagnostics for log-fBM and OU. Each row compares probabilistic signature inversion and deterministic baselines using path overlays, ACF plots, and QQ plots. Additional synthetic diagnostics are provided in \cref{app:synthetic-main-style-qual}.}
\label{fig:synthetic-overlay}
\vspace{-2mm}
\end{figure}

\subsection{Theoretical alignment}
\label{subsec:exp_bayes}
\vspace{-2mm}

\cref{tab:bayes_error} reports empirical reconstruction errors of the learned sampler under LS, TA, and TLL conditioning, with the LS Bayes oracle from \cref{sec:error_analysis} as the theoretical reference. 
The LS empirical errors closely track this oracle across process families and depths, with only small gaps in either direction that may arise from finite-sample variation and approximation error in the learned sampler.
Empirical errors also decrease with depth and broadly follow the expected LS--TA--TLL ordering, with a few deeper-level TA/TLL reversals that may reflect the harder optimization induced by the larger TLL signature dimension. 
At depth two for log-GBM, the TLL estimate matches the LS oracle, as predicted by \cref{cor:gbm-depth-two-equality}. 
The inter-sample spread ratios~(sample-to-sample to sample-to-reference MSE) fall within $[0.96, 1.01]$ across all settings~(\cref{tab:bayes_error_appendix} in \cref{app:synthetic-quant}), suggesting that samples generated from the same conditioning signal are neither strongly shrunk toward the conditional mean nor over-dispersed.
Furthermore, the truncated signatures of generated paths remain close to the conditioning signatures, indicating consistency with the prescribed conditioning information~(\cref{tab:conditioning_consistency} in \cref{app:synthetic-quant}).
Together, these diagnostics indicate that the learned sampler aligns with the theoretically derived Bayes reconstruction-error benchmark while maintaining conditioning consistency.
We next examine structural fidelity to the underlying process dynamics in \cref{subsec:synthetic-structural-recovery}.

\subsection{Structural recovery on synthetic processes}
\label{subsec:synthetic-structural-recovery}


\begin{wraptable}{r}{0.43\textwidth}
\vspace{-11mm}
\centering
\captionsetup{skip=4pt}
\caption{Synthetic parameter-estimation errors. For log-fBM, $\sigma$ and $H$ errors condition on the other parameter; for OU, $\sigma$ error conditions on $\kappa,\mu$.}
\label{tab:synthetic_structure}
\renewcommand{\arraystretch}{1.02}
\setlength{\tabcolsep}{1.2pt}
\scalebox{0.90}{
\begin{tabular}{@{}l cccc@{}}
    \toprule
    & \multicolumn{4}{c}{Parameter error} \\
    \cmidrule(lr){2-5}
    Method & \multicolumn{1}{c}{log-GBM} & \multicolumn{2}{c}{log-fBM} & \multicolumn{1}{c}{OU} \\
    \cmidrule(lr){2-2}\cmidrule(lr){3-4}\cmidrule(lr){5-5}
    & $\sigma$ err. & $\sigma$ err. & $H$ err. & $\sigma$ err. \\ \midrule
    \textbf{Deterministic} \\
    \Fourier & 1.71 & 1.71 & 0.51 & 1.77 \\
    \Legendre & 1.76 & 1.77 & 0.50 & 1.77 \\
    \Insertion   & 1.93 & 1.91 & 0.54 & 1.90 \\
    \RegTAd{6}   & 1.73 & 1.70 & 0.35 & 1.75 \\
    \RegTLLd{6} & 1.69 & 1.67 & 0.32 & 1.71 \\
    \midrule
    \textbf{Probabilistic} \\
    \OursTAd{6} & \underline{0.23} & \underline{0.81} & \underline{0.05} & \underline{0.22} \\
    \textbf{\OursTLLd{6}} & \textbf{0.06} & \textbf{0.09} & \textbf{0.01} & \textbf{0.06} \\
    \bottomrule
\end{tabular}
}
\end{wraptable}

\cref{fig:teaser-overlay}~(a) and \cref{fig:synthetic-overlay} show representative synthetic regimes---smoother log-fBM paths~($H=0.7$), rougher log-fBM paths~($H=0.3$), and strongly mean-reverting OU paths~($\kappa=5$)---where probabilistic signature inversion produces plausible conditional path families around the reference path.
Although stronger deterministic baselines also remain close to the conditioning signature~(\cref{tab:signature_fidelity_synthetic} in \cref{app:synthetic-quant}), their reconstructions often appear oversmoothed relative to the reference path, missing the characteristic roughness and fluctuation scales of the process.
The corresponding ACF and QQ plots give complementary evidence on temporal dependence and marginal behavior: probabilistic inversions track these diagnostics more closely, whereas deterministic reconstructions visibly depart from them.
\cref{tab:synthetic_structure} adds quantitative parameter-recovery evidence, showing that TLL gives the strongest estimates while TA generally improves over deterministic baselines.
This TLL advantage is consistent with time lead-lag augmentation retaining quadratic-variation information, which is relevant for estimating volatility and roughness-related parameters.

\vspace{-2mm}
\subsection{Extension to real data: S\&P~500}
\label{subsec:sp500}
\vspace{-2mm}

\begin{figure}[t]
\centering
\captionsetup{skip=0pt}
\setlength{\tabcolsep}{0pt}
\renewcommand{\arraystretch}{0.8}
\scalebox{1}{
\begin{tabular}{ccc}
    \multicolumn{3}{c}{\includegraphics[width=0.85\linewidth]{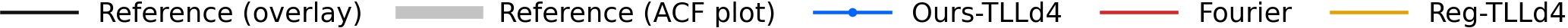}}\\
    \includegraphics[width=0.48\linewidth]{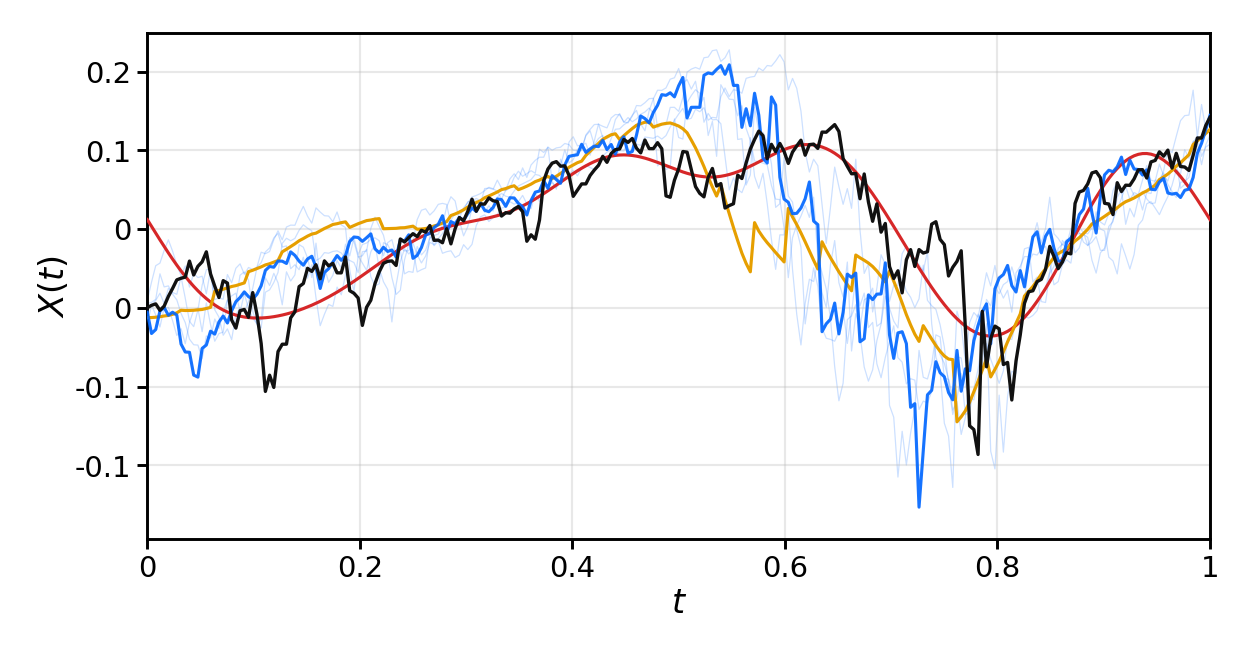}
    &
    \includegraphics[width=0.25\linewidth]{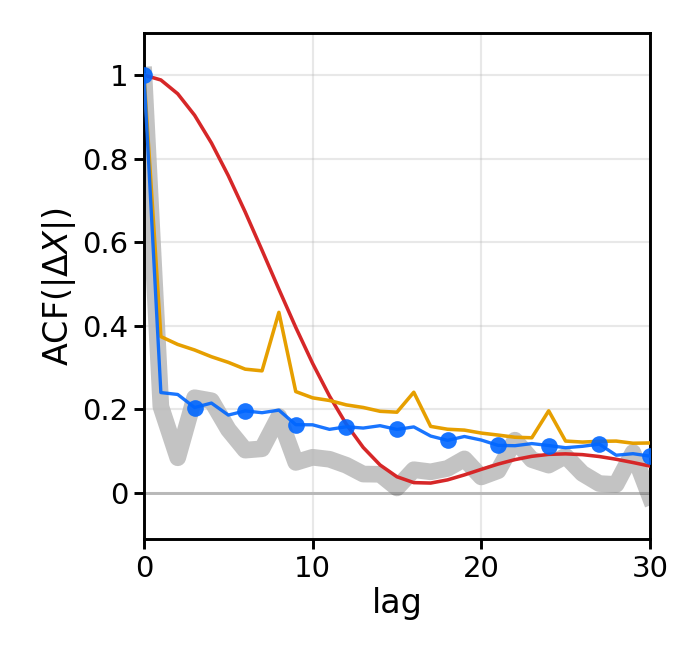}
    &
    \includegraphics[width=0.25\linewidth]{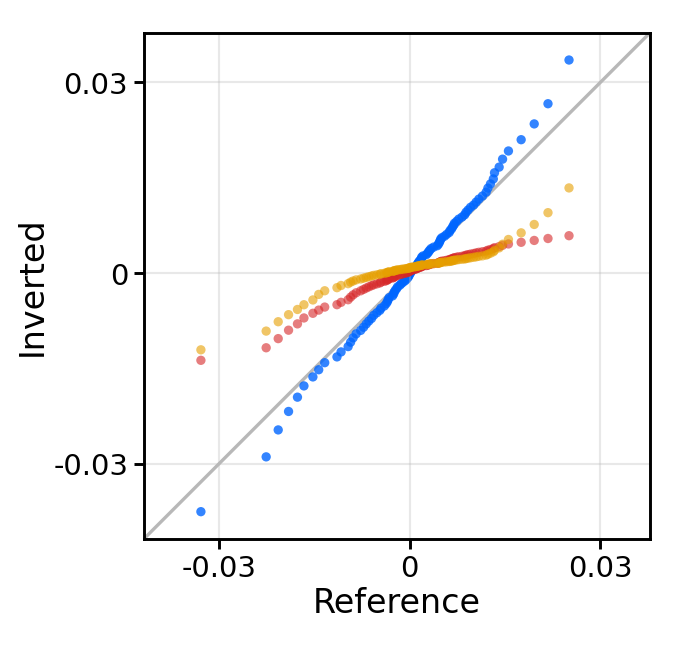} \\
\end{tabular}
}
\caption{Qualitative diagnostics for the S\&P~500 out-of-sample split. Panels compare probabilistic signature inversion and deterministic baselines by path overlay, absolute-return ACF, and QQ diagnostics. Additional S\&P~500 diagnostics are provided in \cref{app:real-qualitative}.}
\label{fig:real-overlay}
\vspace{-4mm}
\end{figure}

\cref{fig:teaser-overlay}~(b) and \cref{fig:real-overlay} show that probabilistic signature inversion extends beyond controlled Gaussian process families to S\&P~500 cumulative log-return windows.
Deterministic baselines again return smooth single reconstructions that miss local fluctuations, whereas \OursTLLd{4} samples form a conditional path ensemble around the reference path.
The absolute-return ACF and QQ plot indicate that the generated samples retain volatility clustering and marginal distribution structure.
Stylized-fact diagnostics show the same pattern, with probabilistic inversions giving the smallest realized-volatility, skewness, and kurtosis errors in most columns of \cref{tab:snp500_structure} in \cref{app:real-quant}.


\vspace{-3mm}
\section{Conclusion}
\label{sec:conclusion}
\vspace{-2mm}
We reformulated truncated-signature inversion as a probabilistic problem---learning the conditional distribution of paths given a truncated signature---and quantified the irreducible ambiguity induced by truncation through a Bayes reconstruction error analysis. 
For Gaussian process families, we derived a closed form under linear statistics for log-GBM and numerically tractable expressions for log-fBM and OU, yielding a concrete theoretical baseline that upper-bounds the Bayes error under richer signature conditionings. 
Empirically, our signature-conditioned flow matching estimator matches this baseline under linear-statistics conditioning, broadly tracks the expected $\sigma$-algebra hierarchy, and preserves distributional and temporal structure across both synthetic processes and S\&P~500 windows.
Beyond inversion itself, the probabilistic reformulation establishes a foundation for applications such as path-space decoding for log-signature generative models, path-level interpretation of signature coordinates, and signature-conditioned scenario generation for financial time series. 
We discuss scope limitations and natural extensions to multivariate paths and jump-driven dynamics in \cref{app:limitations}.

\bibliographystyle{plainnat}
\bibliography{main}

@String(ICCV= {Int. Conf. Comput. Vis.})

@String(ICLR = {Int. Conf. Learn. Represent.})

@String(IJCAI = {IJCAI})

@String(ICCV  = {ICCV})

@String(ICLR  = {ICLR})

@article{lyons2018inverting,
  title   = {Inverting the Signature of a Path},
  author  = {Lyons, Terry J. and Xu, Weijun},
  journal = {Journal of the European Mathematical Society},
  volume  = {20},
  number  = {7},
  pages   = {1655--1687},
  year    = {2018},
}

@article{geng2017reconstruction,
  title   = {Reconstruction for the Signature of a Rough Path},
  author  = {Geng, Xi},
  journal = {Proceedings of the London Mathematical Society},
  volume  = {114},
  number  = {3},
  pages   = {495--526},
  year    = {2017}
}

@article{hambly2010uniqueness,
  title   = {Uniqueness for the Signature of a Path of Bounded Variation and the Reduced Path Group},
  author  = {Hambly, Ben and Lyons, Terry},
  journal = {Annals of Mathematics},
  volume  = {171},
  number  = {1},
  pages   = {109--167},
  year    = {2010}
}

@article{boedihardjo2016rough,
  title   = {The Signature of a Rough Path: Uniqueness},
  author  = {Boedihardjo, Horatio and Geng, Xi and Lyons, Terry and Yang, Danyu},
  journal = {Advances in Mathematics},
  volume  = {293},
  pages   = {720--737},
  year    = {2016},
}

@article{lejan2013brownian,
  title   = {Stratonovich's Signatures of {Brownian} Motion Determine {Brownian} Sample Paths},
  author  = {Le Jan, Yves and Qian, Zhongmin},
  journal = {Probability Theory and Related Fields},
  volume  = {157},
  number  = {1--2},
  pages   = {209--223},
  year    = {2013},
}

@article{boedihardjo2015nonmarkov,
  title   = {The Uniqueness of Signature Problem in the Non-Markov Setting},
  author  = {Boedihardjo, Horatio and Geng, Xi},
  journal = {Stochastic Processes and Their Applications},
  volume  = {125},
  number  = {12},
  pages   = {4674--4701},
  year    = {2015},
}

@article{chang2019insertion,
  title   = {Insertion Algorithm for Inverting the Signature of a Path},
  author  = {Chang, Jiawei and Lyons, Terry J.},
  journal = {arXiv preprint arXiv:1907.08423},
  year    = {2019},
}

@incollection{fermanian2024insertion,
  title     = {The Insertion Method to Invert the Signature of a Path},
  author    = {Fermanian, Adeline and Chang, Jiawei and Lyons, Terry and Biau, G{\'e}rard},
  booktitle = {Recent Advances in Econometrics and Statistics},
  pages     = {575--595},
  publisher = {Springer},
  year      = {2024},
}

@article{morrill2021generalised,
  title     = {A Generalised Signature Method for Multivariate Time Series Feature Extraction},
  author    = {Morrill, James and Fermanian, Adeline and Kidger, Patrick and Lyons, Terry},
  journal   = {arXiv preprint arXiv:2006.00873},
  year      = {2020},
}

@article{levin2013learning,
  title   = {Learning from the Past, Predicting the Statistics for the Future, Learning an Evolving System},
  author  = {Levin, Daniel and Lyons, Terry and Ni, Hao},
  journal = {arXiv preprint arXiv:1309.0260},
  year    = {2013},
}

@article{flint2016hoff,
  title   = {Discretely Sampled Signals and the Rough {Hoff} Process},
  author  = {Flint, Guy and Hambly, Ben and Lyons, Terry},
  journal = {Stochastic Processes and their Applications},
  volume  = {126},
  number  = {9},
  pages   = {2593--2614},
  year    = {2016},
}

@book{lyons2014rough,
  title     = {Differential Equations Driven by Rough Paths},
  author    = {Lyons, Terry J. and Caruana, Michael and L{\'e}vy, Thierry},
  series    = {Lecture Notes in Mathematics},
  volume    = {1908},
  publisher = {Springer},
  year      = {2007},
}

@article{chevyrev2016primer,
  title={A Primer on the Signature Method in Machine Learning},
  author={Chevyrev, Ilya and Kormilitzin, Andrey},
  journal={arXiv preprint arXiv:1603.03788},
  year={2016}
}

@article{guo2025consistency,
  title   = {On Consistency of Signature Using {Lasso}},
  author  = {Guo, Xin and Wang, Binnan and Zhang, Ruixun and Zhao, Chaoyi},
  journal = {Operations Research},
  volume  = {73},
  number  = {5},
  pages   = {2530--2549},
  year    = {2025},
}

@article{gyurko2013extracting,
  title   = {Extracting Information from the Signature of a Financial Data Stream},
  author  = {Gyurk{\'o}, Lajos Gergely and Lyons, Terry and Kontkowski, Mark and Field, Jonathan},
  journal = {arXiv preprint arXiv:1307.7244},
  year    = {2013}
}

@inproceedings{kidger2019deep,
  title     = {Deep Signature Transforms},
  author    = {Kidger, Patrick and Bonnier, Patric and Perez Arribas, Imanol and Salvi, Cristopher and Lyons, Terry},
  booktitle = {NeurIPS},
  year      = {2019}
}

@inproceedings{kidger2021signatory,
  title     = {{S}ignatory: Differentiable Computations of the Signature and Logsignature Transforms, on Both {CPU} and {GPU}},
  author    = {Kidger, Patrick and Lyons, Terry},
  booktitle = {ICLR},
  year      = {2021},
}

@inproceedings{ni2021sigwasserstein,
  title     = {{Sig-Wasserstein} {GANs} for Time Series Generation},
  author    = {Ni, Hao and Szpruch, Lukasz and Sabate-Vidales, Marc and Xiao, Baoren and Wiese, Magnus and Liao, Shujian},
  booktitle = {ICAIF},
  year      = {2021},
}

@article{liao2024sigconditional,
  title   = {{Sig-Wasserstein} {GANs} for Conditional Time Series Generation},
  author  = {Liao, Shujian and Ni, Hao and Sabate-Vidales, Marc and Szpruch, Lukasz and Wiese, Magnus and Xiao, Baoren},
  journal = {Mathematical Finance},
  volume  = {34},
  number  = {2},
  pages   = {622--670},
  year    = {2024},
}

@inproceedings{barancikova2025sigdiffusion,
  title     = {SigDiffusions: Score-Based Diffusion Models for Time Series via Log-Signature Embeddings},
  author    = {Barancikova, Barbora and Huang, Zhuoyue and Salvi, Cristopher},
  booktitle = {ICLR},
  year      = {2025},
}

@article{lu2025mmdsignature,
  title   = {Generative Modelling of Financial Time Series with Structured Noise and {MMD}-Based Signature Learning},
  author  = {Lu, Chung I and Sester, Julian},
  journal = {Statistics \& Risk Modeling},
  volume  = {42},
  number  = {3-4},
  pages   = {91--122},
  year    = {2025},
}

@inproceedings{buhler2020market,
  title     = {A Data-driven Market Simulator for Small Data Environments},
  author    = {B{\"u}hler, Hans and Horvath, Blanka and Lyons, Terry and Perez Arribas, Imanol and Wood, Ben},
  booktitle = {Modern Topics in Stochastic Analysis and Applications},
  pages     = {273--310},
  year      = {2024},
}

@inproceedings{Lipman2023flow,
  title     = {Flow Matching for Generative Modeling},
  author    = {Lipman, Yaron and Chen, Ricky T. Q. and Ben-Hamu, Heli and Nickel, Maximilian and Le, Matthew},
  booktitle = {ICLR},
  year      = {2023}
}

@inproceedings{peebles2023scalable,
  title     = {Scalable diffusion models with transformers},
  author    = {Peebles, William and Xie, Saining},
  booktitle = {ICCV},
  year      = {2023}
}

@inproceedings{yoon2019time,
  title     = {Time-Series Generative Adversarial Networks},
  author    = {Yoon, Jinsung and Jarrett, Daniel and van der Schaar, Mihaela},
  booktitle = {NeurIPS},
  year      = {2019}
}

@inproceedings{rasul2021autoregressive,
  title     = {Autoregressive denoising diffusion models for multivariate probabilistic time series forecasting},
  author    = {Rasul, Kashif and Seward, Calvin and Schuster, Ingmar and Vollgraf, Roland},
  booktitle = {ICML},
  year      = {2021}
}

@inproceedings{tashiro2021csdi,
  title     = {{CSDI}: Conditional Score-Based Diffusion Models for Probabilistic Time Series Imputation},
  author    = {Tashiro, Yusuke and Song, Jiaming and Song, Yang and Ermon, Stefano},
  booktitle = {NeurIPS},
  year      = {2021}
}

@inproceedings{yuan2024diffusionts,
  title     = {Diffusion-{TS}: Interpretable Diffusion for General Time Series Generation},
  author    = {Yuan, Xinyu and Qiao, Yan},
  booktitle = {ICLR},
  year      = {2024}
}

@inproceedings{tanaka2025cofindiff,
  title     = {{CoFinDiff}: Controllable Financial Diffusion Model for Time Series Generation},
  author    = {Tanaka, Yuki and Hashimoto, Ryuji and Takayanagi, Takehiro and Piao, Zhe and Murayama, Yuri and Izumi, Kiyoshi},
  booktitle = {IJCAI},
  year      = {2025},
}

\appendix
\clearpage

\section{Proofs and auxiliary derivations}
\label{app:proofs}

\subsection{Proof of \cref{prop:within-between-var}}
\label{app:proof-within-between-var}
\withinbetweenvarprop*
\begin{proof}
Let $S:=\Phi(X)$.
By definition, conditional on $S$, $X$ and $\tilde{X}$ are independent draws from $Q_\Theta(\cdot\mid S)$.
Thus, for each $t\in[0,T]$,
\begin{align}
    \EE\bigl[(X_t-\tilde{X}_t)^2\mid S\bigr]
    &= \Var(X_t\mid S)+\Var(\tilde{X}_t\mid S)
    + \bigl(\EE[X_t\mid S]-\EE[\tilde{X}_t\mid S]\bigr)^2 \\
    &= 2\Var(X_t\mid S).
\end{align}
Integrating over $t$ and taking expectation over $S$ gives \cref{eq:within-between-var}.

For the second identity, write the mixture conditional distribution through the posterior over parameters.
The law of total variance gives
\begin{align}
    \Var_{Q_\Theta}(X_t\mid S)
    =
    \EE_{\theta\mid S}\bigl[\Var_\theta(X_t\mid S)\bigr]
    + \Var_{\theta\mid S}\bigl(\EE_\theta[X_t\mid S]\bigr).
\end{align}
Substituting this identity into \cref{eq:within-between-var} yields \cref{eq:within-between-var-total}.
\end{proof}

\subsection{Kernel formulas and numerical inputs}
\label{app:kernel-formulas}
\cref{tab:kernel} lists the process-specific numerical inputs used in the kernel-based computation.

\begin{table}[h]
\centering
\captionsetup{skip=6pt}

\caption{
    Process-specific quantities for the kernel-based analysis of the Bayes reconstruction error: the covariance kernel $K_\theta$ and the deterministic moments $\ell_j(m_\theta)$ for $(j\geq1)$.
    Here $L^{(r)}(m_\theta)$ is determined by $\ell_0(m_\theta)=m_\theta(T)$ together with these values.
    }
\label{tab:kernel}

\renewcommand{\arraystretch}{1.2}
\setlength{\tabcolsep}{6pt}
\scalebox{0.90}{
\begin{tabular}{ccccc}
\toprule
Process & 
$m_\theta$(t) & $G^\theta_t$ & 
$K_\theta(t,s)$ & $\ell_j(m_\theta)$ \\ \midrule

log-GBM & 
$\nu t$ & $\sigma B_t$ & 
$\sigma^2 \min(s,t)$ & 
$\nu \frac{T^{j+1}}{j+1}$ \\

log-fBM & 
$\nu t$ & $\sigma B^H_t$ & 
$\frac{\sigma^2}{2} (s^{2H} + t^{2H} - |t-s|^{2H})$ & 
$\nu \frac{T^{j+1}}{j+1}$ \\

OU & 
$\mu (1 - e^{-\kappa t})$ & $\sigma \int_0^t e^{-\kappa(t-u)} dB_u$ & 
$\frac{\sigma^2}{2\kappa} \bigl( e^{-\kappa|t-s|} - e^{-\kappa(t+s)}\bigr)$ &
$\mu\left(\frac{T^j}{j}- \int_0^T t^{j-1} e^{-\kappa t}\,dt\right)$ \\

\bottomrule
\end{tabular}
}
\end{table}

The following lemma explains how these inputs determine $k^{(r)}(t)$ and $\Sigma^{(r)}$ in \cref{prop:gauss-reg}.


\begin{lemma}[Linear-statistic covariance formulas] \label{lemma:kernel}
The quantities $k^{(r)}(t)$ and $\Sigma^{(r)}$ in \cref{prop:gauss-reg} are determined by the covariance kernel $K(\cdot,\cdot)$.
More precisely,
\begin{align}
    \Cov(G_t, \ell_0(G))
    &= K(t,T), \\
    \Cov(G_t, \ell_j(G))
    &= \int_0^T u^{j-1}K(t,u)\,du,
    \qquad j \ge 1.
\end{align}
Similarly,
\begin{align}
    \Cov(\ell_0(G), \ell_0(G))
    &= K(T,T), \\
    \Cov(\ell_0(G), \ell_j(G))
    &= \int_0^T u^{j-1}K(T,u)\,du,
    \qquad j \ge 1, \\
    \Cov(\ell_i(G), \ell_j(G))
    &= \int_0^T\!\!\int_0^T
        u^{i-1} v^{j-1} K(u,v)\,du\,dv,
    \qquad i,j \ge 1.
\end{align}
\end{lemma}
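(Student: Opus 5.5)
The plan is to use bilinearity of covariance together with Fubini's theorem to move the expectation inside the time integrals that define the linear functionals $\ell_j$. Since $G$ is centered, $\Cov(G_t,\ell_j(G))=\EE[G_t\,\ell_j(G)]$ and $\Cov(\ell_i(G),\ell_j(G))=\EE[\ell_i(G)\ell_j(G)]$. So every identity reduces to computing a second moment of products.

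\textbf{The endpoint statistic.} For $\ell_0(G)=G_T$ there is nothing to interchange. We directly get $\Cov(G_t,G_T)=K(t,T)$ and $\Cov(G_T,G_T)=K(T,T)$ from the definition of the covariance kernel.

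\textbf{The integral statistics $j\ge1$.} Here $\ell_j(G)=\int_0^T u^{j-1}G_u\,du$, so
\begin{align}
\EE\Bigl[G_t\int_0^T u^{j-1}G_u\,du\Bigr]=\int_0^T u^{j-1}\EE[G_tG_u]\,du=\int_0^T u^{j-1}K(t,u)\,du .
\end{align}
Setting $t=T$ gives the mixed formula $\Cov(\ell_0(G),\ell_j(G))$. For $i,j\ge1$ we write the product $\ell_i(G)\ell_j(G)$ as the double integral $\int_0^T\!\int_0^T u^{i-1}v^{j-1}G_uG_v\,du\,dv$. Applying Fubini again yields $\int\!\int u^{i-1}v^{j-1}K(u,v)\,du\,dv$.

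\textbf{Justifying Fubini (the main step).} Interchanging expectation and integral requires absolute integrability, i.e.
\begin{align}
\EE\int_0^T\!\!\int_0^T |u^{i-1}v^{j-1}G_uG_v|\,du\,dv<\infty .
\end{align}
By Cauchy--Schwarz, $\EE|G_uG_v|\le\sqrt{K(u,u)K(v,v)}$, so it suffices that $\sup_{u\in[0,T]}K(u,u)<\infty$. This holds for all three families in \cref{tab:kernel}, since their kernels are continuous on the compact square $[0,T]^2$. The same bound handles the single-integral case.

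We also need $\ell_j(G)$ to be well defined as a random variable. For this we take $G$ to have a jointly measurable version (e.g.\ continuous sample paths, which all three processes admit), so that $u\mapsto G_u$ is integrable pathwise almost surely. With these two points settled, the rest is the routine interchange described above.
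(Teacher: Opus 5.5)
Your proposal is correct and is the same argument as the paper's: use bilinearity of covariance and Fubini to pass the expectation through the time integrals defining $\ell_j$, handle $\ell_0$ directly, and get the mixed formula by setting $t=T$. Your Cauchy--Schwarz bound $\EE|G_uG_v|\le\sqrt{K(u,u)K(v,v)}$ and the measurable-version remark supply the integrability justification the paper leaves implicit in ``By Fubini''; the same bound also gives $\EE[\ell_j(G)]=0$, which you use implicitly when writing $\Cov(\ell_i(G),\ell_j(G))=\EE[\ell_i(G)\ell_j(G)]$.
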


\begin{proof}
By Fubini,
\begin{align}
    \Cov(G_t, \ell_0(G))
    &= \Cov(G_t,G_T) = K(t,T), \\
    \Cov(G_t, \ell_j(G))
    &= \Cov\!\left(G_t, \int_0^T u^{j-1}G_u\,du\right) 
    = \int_0^T u^{j-1} K(t,u)\,du.
\end{align}
Similarly,
\begin{align}
    \Cov(\ell_0(G), \ell_0(G)) 
    &= K(T,T), \\
    \Cov(\ell_0(G), \ell_j(G))
    &= \Cov(G_T, \ell_j(G)) 
    = \int_0^T u^{j-1}K(T,u)\,du, \\
    \Cov(\ell_i(G), \ell_j(G))
    &= \Cov\!\left(
        \int_0^T u^{i-1}G_u\,du,
        \int_0^T v^{j-1}G_v\,dv
    \right) \\
    &= \int_0^T\!\!\int_0^T
        u^{i-1} v^{j-1} K(u,v)\,du\,dv.
\end{align}
\end{proof}

\subsection{Proof of \cref{lemma:gaussian-projection}}
\label{app:proof-gaussian-projection}
\gaussianprojectionlemma*
\begin{proof}
Since $I$ is linear, $I(f) - I(\Pi_M f) = I(f - \Pi_M f)$.
For any $g \in M$, the It\^{o} isometry gives
\begin{align}
    \EE\left[ I(f - \Pi_M f)\, I(g) \right]
    = \langle f - \Pi_M f,\, g \rangle_{L^2([0,T])} = 0.
\end{align}
Hence $I(f - \Pi_M f)$ is uncorrelated with every $I(g)$, $g \in M$.
Since these are jointly Gaussian, this implies independence, so $I(\Pi_M f)$ is the conditional expectation.
The second identity follows from the It\^{o} isometry.
\end{proof}

\subsection{Proof of \cref{prop:gbm-error}}
\label{app:proof-gbm-error}
We first state an auxiliary residual formula used below.

\begin{lemma}[Legendre residual formula]\label{lemma:legendre}
Define
\begin{align}
    a_r 
    := \int_0^1 \| \mathbf{1}_{[0,u]} - \Pi_{\mathcal{H}_{r}} \mathbf{1}_{[0,u]} \|^2_{L^2([0,1])}\, du,
\end{align}
where $\mathcal{H}_{r} := \mathrm{span}
\{1,t,\ldots,t^{r-1}\}$.
Then,
\begin{align}
    a_r = \frac{r}{2(2r-1)(2r+1)}.
\end{align}
\end{lemma}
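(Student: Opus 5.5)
The plan is to expand the projection in an orthonormal polynomial basis and reduce $a_r$ to a telescoping sum. Let $\tilde P_k(u):=P_k(2u-1)$ denote the shifted Legendre polynomials on $[0,1]$, and set $e_k:=\sqrt{2k+1}\,\tilde P_k$. Then $\{e_0,\ldots,e_{r-1}\}$ is an orthonormal basis of $\mathcal H_r$ in $L^2([0,1])$.

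\textbf{Step 1 (Pythagoras).} Since $\|\mathbf 1_{[0,u]}\|^2=u$, Pythagoras gives
\begin{align}
  \|\mathbf 1_{[0,u]}-\Pi_{\mathcal H_r}\mathbf 1_{[0,u]}\|^2
  = u-\sum_{k=0}^{r-1}\langle \mathbf 1_{[0,u]},e_k\rangle^2
  = u-\sum_{k=0}^{r-1}(2k+1)F_k(u)^2,
\end{align}
where $F_k(u):=\int_0^u\tilde P_k(s)\,ds$. Integrating over $u\in[0,1]$, this yields
\begin{align}
  a_r=\tfrac12-\sum_{k=0}^{r-1}(2k+1)\int_0^1F_k(u)^2\,du .
\end{align}

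\textbf{Step 2 (antiderivatives).} For $k=0$ we have $F_0(u)=u$, so $\int_0^1F_0^2=\tfrac13$. For $k\ge1$, use the classical identity $(2k+1)P_k=P_{k+1}'-P_{k-1}'$. Under the substitution $x=2u-1$ (so $dx=2\,du$), it becomes
\begin{align}
  F_k=\frac{\tilde P_{k+1}-\tilde P_{k-1}}{2(2k+1)} .
\end{align}
The constant of integration vanishes because $P_{k\pm1}(-1)=(-1)^{k\pm1}$ agree. By orthogonality and $\|\tilde P_j\|^2=1/(2j+1)$,
\begin{align}
  (2k+1)\int_0^1F_k^2
  =\frac{1}{4(2k+1)}\Bigl(\frac1{2k+3}+\frac1{2k-1}\Bigr)
  =\frac{1}{2(2k-1)(2k+3)} .
\end{align}

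\textbf{Step 3 (telescoping).} Use the partial-fraction decomposition $\frac1{(2k-1)(2k+3)}=\frac14\bigl(\frac1{2k-1}-\frac1{2k+3}\bigr)$. Summing over $k=1,\ldots,r-1$ gives
\begin{align}
  \frac14\Bigl(\frac43-\frac1{2r-1}-\frac1{2r+1}\Bigr).
\end{align}
Hence
\begin{align}
  a_r=\frac16-\frac18\Bigl(\frac43-\frac1{2r-1}-\frac1{2r+1}\Bigr)
  =\frac18\Bigl(\frac1{2r-1}+\frac1{2r+1}\Bigr)
  =\frac{r}{2(2r-1)(2r+1)} .
\end{align}
As a sanity check, $r=1$ gives $1/6$, which matches the direct computation $\tfrac12-\tfrac13$.

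The main obstacle is Step 2. It requires getting the Legendre antiderivative identity right after the affine change of variables, including the factor $1/2$ from $dx=2\,du$ and the boundary constant. I would verify this step directly for $k=1$, where $F_1=u^2-u$, before relying on the general formula.
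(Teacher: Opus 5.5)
Your proof is correct and follows essentially the paper's route: the shifted Legendre orthonormal basis, the identity $(2k+1)P_k = P_{k+1}'-P_{k-1}'$ giving $\int_0^u e_k = \bigl(\tilde P_{k+1}(u)-\tilde P_{k-1}(u)\bigr)/(2\sqrt{2k+1})$, and a telescoping sum of $\frac{1}{8}\bigl(\frac{1}{2k-1}-\frac{1}{2k+3}\bigr)$. The one difference is that you subtract the $r$ retained coefficients from $\|\mathbf{1}_{[0,u]}\|^2=u$, which forces a separate $k=0$ term; the paper instead sums the infinite tail $n\ge r$ of the full expansion $\mathbf{1}_{[0,u]}=\sum_n c_n(u)e_n$, which implicitly uses completeness of the Legendre system, and your finite version avoids that.
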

 
\begin{proof}
Let $L_n(x) = P_n(2x - 1)$ be the shifted Legendre polynomials 
and $e_n(x) := \sqrt{2n+1}\, L_n(x)$ the $L^2([0,1])$-orthonormal basis.
From the Legendre identity,
\begin{align}
    \frac{d}{dx}\bigl(L_{n+1}(x) - L_{n-1}(x)\bigr) = 2(2n+1)\, L_n(x).
\end{align}
Integrating from $0$ to $u$,
\begin{align}
    c_n(u) 
    := \int_0^u e_n(x)\, dx 
    = \frac{L_{n+1}(u) - L_{n-1}(u)}{2\sqrt{2n+1}}.
\end{align}
Since $\mathbf{1}_{[0,u]} = \sum_{n=0}^{\infty} c_n(u)\, e_n$, 
the projection residual has squared norm $\sum_{n=r}^{\infty} c_n(u)^2$.
Computing
\begin{align}
    \int_0^1 c_n(u)^2\, du
    = \frac{1}{4(2n+1)} 
      \left( \frac{1}{2n+3} + \frac{1}{2n-1} \right)
    = \frac{1}{8} 
      \left( \frac{1}{2n-1} - \frac{1}{2n+3} \right),
\end{align}
the sum telescopes:
\begin{align}
    a_r 
    = \sum_{n=r}^{\infty} \frac{1}{8} 
      \left( \frac{1}{2n-1} - \frac{1}{2n+3} \right)
    = \frac{1}{8} 
      \left( \frac{1}{2r-1} + \frac{1}{2r+1} \right)
    = \frac{r}{2(2r-1)(2r+1)}.
\end{align}
\end{proof}

\gbmerrorprop*
\begin{proof}
By Fubini, each linear statistic component satisfies
\begin{align}
    \ell_j(G)
    = \int_0^T t^{j-1} \sigma B_t\,dt 
    = \sigma I(f_j), \qquad
    f_j(t) := \frac{T^j - t^j}{j},\quad j \geq 1,
\end{align}
and $\ell_0(G) = \sigma B_T = \sigma I(f_0)$ with $f_0 \equiv 1$.
Hence
\begin{align}
    S := L^{(r)}(X)
    = \nu\beta
    + \sigma(I(f_0), \ldots, I(f_{r-1}))^\top,
    \qquad \beta := (T, \ldots, T^r/r)^\top.
\end{align}
Therefore, for a fixed $\theta$ and
$\mathcal{H}_{r} := \mathrm{span}\{f_0,\ldots,f_{r-1}\}
= \mathrm{span}\{1,t,\ldots,t^{r-1}\}$,
\begin{align}
    \sigalg(S)= \sigalg(I(f): f \in \mathcal{H}_r).
\end{align}
From $X_t = \nu t + \sigma I(\mathbf{1}_{[0,t]})$ and \cref{lemma:gaussian-projection},
\begin{align}
    \EE_{\theta}[X_t \mid S]
    &= \nu t + \sigma I(\Pi_{\mathcal{H}_{r}} \mathbf{1}_{[0,t]}) \\
    &= \nu t + \sigma I(c_r(t)^\top (f_0,\ldots,f_{r-1})) \\
    &= \nu t + c_r(t)^\top(S - \nu\beta) \\
    &= c_r(t)^\top S + \nu (t - c_r(t)^\top \beta) \\
    &= c_r(t)^\top S.
\end{align}
The identity $c_r(t)^\top \beta=t$ follows from $1 \equiv f_0 \in \mathcal{H}_r$:
\begin{align}
    t 
    &= \langle\mathbf{1}_{[0,t]}, \mathbf{1} \rangle \\
    &= \langle\Pi_{\mathcal{H}_r}\mathbf{1}_{[0,t]}, \mathbf{1} \rangle \\
    &= \langle c_{r}(t)^\top(f_0,\ldots,f_{r-1}), \mathbf{1} \rangle \\
    &= c_{r}(t)^\top
    \bigl(\langle f_0, \mathbf{1} \rangle, \ldots,
    \langle f_{r-1}, \mathbf{1} \rangle\bigr) \\
    &= c_{r}(t)^\top \beta.
\end{align}
The resulting conditional mean is independent of $\theta = (\mu,\sigma)$, so the between-parameter conditional-mean variance term in \cref{prop:within-between-var} vanishes.
For the variance, \cref{lemma:gaussian-projection} and \cref{lemma:legendre} prove
\begin{align}
    \int_0^T \Var(X_t \mid S)\,dt 
    &= \sigma^2 \int_0^T
        \|\mathbf{1}_{[0,t]} - \Pi_{\mathcal{H}_{r}}\mathbf{1}_{[0,t]}\|^2\,dt \\
    &= \sigma^2 T^2 \int_0^1
        \|\mathbf{1}_{[0,u]} - \Pi_{\mathcal{H}_{r}}\mathbf{1}_{[0,u]}\|^2\,du \\
    &= a_r\sigma^2 T^2.
\end{align}
The second equality uses the change of variables $t=Tu$ and the corresponding identification of polynomial spans.
The tower property yields 
$\mathcal{E}_\Theta(L^{(r)}) = 2a_r T^2 \EE_\pi[\sigma^2]$.
\end{proof}


\section{Implementation details}
\label{app:implementation}

\subsection{Training}
\label{app:training}

\paragraph{Datasets.}
\cref{tab:train-datasets} summarizes the training datasets used for the synthetic and real-data experiments.


\begin{table}[h!]
\centering
\captionsetup{skip=6pt}
\caption{Training dataset configurations. All paths use $T=1$. For synthetic datasets, process parameters are sampled independently from $\Theta$ for each path. S\&P~500 paths are 252-trading-day cumulative log-return windows, multiplied by $10$ during training.}
\label{tab:train-datasets}
\renewcommand{\arraystretch}{1.4}
\setlength{\tabcolsep}{5pt}
\small
\begin{tabular}{llcc}
\toprule
Dataset & Configuration & Length & \# Paths \\ \midrule
Log-GBM  & $\mu \in [1.5,2.5],\;\sigma \in [1.5,2.5]$                      & $1001$ & $100$k \\
Log-fBM  & $\mu \in [1.5,2.5],\;\sigma \in [1.5,2.5],\;H \in [0.25,0.75]$ & $1001$ & $100$k \\
OU       & $\mu=3,\;\sigma \in [1.5,2.5],\;\kappa \in [0.5,5.0]$           & $1001$ & $100$k \\ \midrule
S\&P~500 & 2009-05 $\sim$ 2022-12~(Stride 1)                               & $253$  & $3190$ \\
\bottomrule
\end{tabular}
\end{table}

\paragraph{Model architecture.}
We adapt DiT~\citep{peebles2023scalable} to one-dimensional paths and use it as the flow matching velocity network.
Paths are split into patches of size $8$, projected to $128$-dimensional tokens, processed by $7$~Transformer blocks with $8$ attention heads and FFN hidden dimension $512$, and unpatchified back to the original path resolution.
Sequences whose lengths are not divisible by $8$ are zero-padded before patchification and cropped back after unpatchification.
The time and log-signature embeddings are combined and injected through AdaLN-Zero modulation.
For regression ablations, we use the same backbone and append one convolutional output layer.

\paragraph{Optimization.}
We train with AdamW using the PyTorch default hyperparameters, learning rate $2\times10^{-4}$, batch size $64$, and no learning-rate scheduler.
All learned models are trained for $100$ epochs on a single A5000 GPU, taking approximately $3$ hours per run.

\paragraph{Signature preprocessing.}
LS conditioning uses the TA log-signature coordinates corresponding to the linear statistics in \cref{def:lin-stat}.
For TLL, the lead-lag construction introduces repeated coordinate paths, which produce duplicate or linearly dependent log-signature coordinates.
For example, by the TLL construction in \cref{def:augmentation_tll}, iterated-integral coordinates associated with different words, such as $(1,2)$ and $(1,3)$, can evaluate identically or become linearly dependent.
We therefore precompute a reduced TLL coordinate set on a fixed reference batch of $1000$ randomly generated log-GBM paths by scanning the TLL log-signature coordinates one at a time and greedily retaining only those that add a new independent direction.
Replacing this reference batch with log-fBM or OU paths yields the same retained set.
This preprocessing step is not tuned using downstream reconstruction performance; the reference batch is used only to detect algebraic or numerical redundancy in the TLL log-signature coordinates.
The selected TLL indices are then fixed and reused for training and inversion.
Finally, we apply the level-wise scaling map $x \mapsto \sign(x)\log(1+k!\,|x|)$ to each retained level-$k$ conditioning coordinate.
The factor $k!$ compensates for the factorial decay of level-$k$ signature terms~\citep{morrill2021generalised}, while the signed logarithm is nearly linear for small magnitudes and suppresses only very large coordinates.
\cref{tab:conditioning-dimensions} reports the resulting conditioning-vector dimensions.


\begin{table}[h!]
\centering
\captionsetup{skip=6pt}
\caption{Conditioning-vector dimensions by depth. Each entry gives the number of conditioning coordinates at depth $r$. TLL full gives the raw TLL log-signature dimension, and TLL reduced gives the dimension after greedy removal of duplicate or linearly dependent coordinates.}
\label{tab:conditioning-dimensions}
\renewcommand{\arraystretch}{1.2}
\setlength{\tabcolsep}{5pt}
\small
\begin{tabular}{lcccccc}
\toprule
Conditioning & $r=1$ & $r=2$ & $r=3$ & $r=4$ & $r=5$ & $r=6$ \\
\midrule
LS & $1$ & $2$ & $3$ & $4$ & $5$ & $6$ \\
TA & $2$ & $3$ & $5$ & $8$ & $14$ & $23$ \\
TLL full & $3$ & $6$ & $14$ & $32$ & $80$ & $196$ \\
TLL reduced & $2$ & $4$ & $9$ & $19$ & $43$ & $93$ \\
\bottomrule
\end{tabular}
\end{table}

\subsection{Evaluation}
\label{app:evaluation}

\paragraph{Evaluation datasets.}
\cref{tab:evaluation-datasets} summarizes the evaluation datasets used for the quantitative metrics and qualitative diagnostics defined in \cref{app:metrics}.
The S\&P~500 in-sample split is a subsampled training-period diagnostic rather than a held-out generalization set.

\begin{table}[h]
\centering
\captionsetup{skip=6pt}
\caption{Evaluation dataset configurations. For synthetic processes, random evaluation sets are used for quantitative metrics, while fixed-parameter sets are used for qualitative diagnostics such as path overlays, ACF plots, and QQ plots. The S\&P~500 splits are evaluated on the original cumulative log-return scale and are used for both quantitative and qualitative evaluation.}
\label{tab:evaluation-datasets}
\renewcommand{\arraystretch}{1.4}
\setlength{\tabcolsep}{4pt}
\small
\begin{tabular}{llccc}
\toprule
Dataset & Configuration & Use & Length & \# Paths \\ \midrule
\multirow{4}{*}{Log-GBM}
  & $\mu \in [1.5,2.5],\;\sigma \in [1.5,2.5]$         & Quant. & \multirow{4}{*}{$1001$} & $1000$ \\
  & $\mu=2.0,\;\sigma=1.6$                               & Qual. & & $100$ \\
  & $\mu=2.0,\;\sigma=2.0$                               & Qual. & & $100$ \\
  & $\mu=2.0,\;\sigma=2.4$                               & Qual. & & $100$ \\
\cmidrule(lr){1-5}
\multirow{4}{*}{Log-fBM}
  & $\mu \in [1.5,2.5],\;\sigma \in [1.5,2.5],\;H \in [0.25,0.75]$ & Quant. & \multirow{4}{*}{$1001$} & $1000$ \\
  & $\mu=2.0,\;\sigma=2.0,\;H=0.30$                     & Qual. & & $100$ \\
  & $\mu=2.0,\;\sigma=2.0,\;H=0.50$                     & Qual. & & $100$ \\
  & $\mu=2.0,\;\sigma=2.0,\;H=0.70$                     & Qual. & & $100$ \\
\cmidrule(lr){1-5}
\multirow{4}{*}{OU}
  & $\mu=3,\;\sigma \in [1.5,2.5],\;\kappa \in [0.5,5.0]$ & Quant. & \multirow{4}{*}{$1001$} & $1000$ \\
  & $\mu=3,\;\sigma=2.0,\;\kappa=0.5$                    & Qual. & & $100$ \\
  & $\mu=3,\;\sigma=2.0,\;\kappa=2.0$                    & Qual. & & $100$ \\
  & $\mu=3,\;\sigma=2.0,\;\kappa=5.0$                    & Qual. & & $100$ \\
\midrule
\multirow{2}{*}{S\&P~500}
  & In-sample: 2009-05 $\sim$ 2022-12~(Stride 63)  & Quant./Qual. & \multirow{2}{*}{$253$} & $51$  \\
  & Out-of-sample: 2023-01 $\sim$ 2025-12~(Stride 5)  & Quant./Qual. &                        & $100$ \\
\bottomrule
\end{tabular}
\end{table}

\paragraph{Sampling protocol.}
Sampling uses $1000$ explicit Euler steps.
For quantitative evaluation, we generate $30$ samples for each of $1000$ reference paths, which takes less than $1$ hour.

\paragraph{Deterministic baselines.}
Deterministic baselines return one reconstructed path for each target path.
\begin{itemize}
    \item Regression ablations use the same conditioning vector and backbone, but produce a single reconstruction through the supervised $\ell_2$ regression objective. We use \RegTAd{6} and \RegTLLd{6} for synthetic experiments, and \RegTAd{4} and \RegTLLd{4} for S\&P~500.
    \item \Fourier~(order $n=6$) uses the signature of the augmented path $(t,\sin(t),\cos(t)-1,x(t))$ up to depth $8$; an order-$n$ expansion requires depth $n+2$.
    Our implementation follows the public SigDiffusions codebase\footnote{\url{https://github.com/Barb0ra/SigDiffusions}}~\citep{barancikova2025sigdiffusion}.
    \item \Legendre~(order $n=10$) uses the signature of the time-augmented path $(t,x(t))$ up to depth $12$; an order-$n$ expansion requires depth $n+2$.
    We implement this baseline directly from \citet{barancikova2025sigdiffusion}.
    \item \Insertion~($n=12$ piecewise-linear pieces) uses the signature of the time-augmented path $(t,x(t))$ up to depth $12$; a path with $n$ piecewise-linear pieces requires depth $n$.
    Our implementation adapts the public Signatory codebase\footnote{\url{https://github.com/patrick-kidger/signatory}}~\citep{kidger2021signatory}.
\end{itemize}

\subsection{Evaluation metrics}
\label{app:metrics}

Let $X_i$ denote the $i$-th observed test path and $\tilde{X}_i$ an inverted path associated with $X_i$; without the sample index, $\tilde{X}_i$ denotes one sampled inversion for probabilistic methods and the single reconstruction for deterministic methods.
When all probabilistic samples are used, $\tilde{X}_{i,j}$ denotes the $j$-th sample, $j=1,\ldots,M$, with $M=30$.
Let $\Phi_r$ denote the conditioning statistic used by a method at depth $r$, and let $\|\cdot\|_2^2$ denote the discrete path norm used to approximate the integrated squared error in \cref{eq:recon-error}.

\paragraph{Bayes-error and conditioning diagnostics.}
Motivated by the Bayes reconstruction-error analysis in \cref{sec:error_analysis}, these diagnostics report empirical reconstruction error against the Bayes benchmark, measure inter-sample variability, and verify consistency with the conditioning statistic.
\begin{itemize}
    \item \textbf{Bayes reconstruction error.}
    We report the empirical reconstruction error
    \begin{align*}
        \frac{1}{NM}
        \sum_{i=1}^{N}\sum_{j=1}^{M}
        \|X_i-\tilde{X}_{i,j}\|_2^2 .
    \end{align*}
    For LS conditioning, this is compared with the theoretical value $\mathcal{E}_\Theta(L^{(r)})$.
    \item \textbf{Inter-sample spread ratio.}
    We define the inter-sample spread as
    \begin{align*}
        \frac{1}{N\binom{M}{2}}
        \sum_{i=1}^{N}
        \sum_{1\leq j<k\leq M}
        \|\tilde{X}_{i,j}-\tilde{X}_{i,k}\|_2^2 .
    \end{align*}
    The reported spread ratio divides this quantity by the corresponding Bayes reconstruction error.
    A ratio near $1$ provides evidence that the sampled inversions are not strongly contracted relative to the Bayes reconstruction-error scale.
    \item \textbf{Conditioning consistency.}
    For each $\tilde{X}_{i,j}$ generated conditional on $\Phi_r(X_i)$, we report the median relative MSE of the conditioning statistic,
    \begin{align*}
        \operatorname*{median}_{i,j}
        \frac{\|\Phi_r(\tilde{X}_{i,j})-\Phi_r(X_i)\|_2^2}
             {\|\Phi_r(X_i)\|_2^2}
        \times 100\%.
    \end{align*}
\end{itemize}

\paragraph{Process-structure diagnostics.}
These metrics evaluate whether the generated paths preserve process-level structure on synthetic Gaussian process families.
\begin{itemize}
    \item \textbf{Parameter estimation error.}
    We apply process-specific estimators to the inverted paths and compare them with the ground-truth parameters of the conditioning paths.
    The $\sigma$ estimators for log-GBM and log-fBM with known $H$ are Gaussian MLEs; for OU, $\sigma$ is estimated by a Gaussian transition-residual MLE with $\kappa$ and $\mu$ fixed to their known values.
    The $H$ estimator for log-fBM with known $\sigma$ instead uses a log-least-squares fit based on second-difference variance scaling.
    We report
    \begin{align*}
        \operatorname*{median}_{i}
        \left|
            \widehat{\theta}_q(\tilde{X}_{i})-\theta_{i,q}
        \right|.
    \end{align*}
    \item \textbf{Signature fidelity.}
    We measure whether the inverted path matches the reference path at evaluation depth $r$, using time-augmented signatures; we use $r=6$ for synthetic diagnostics and $r=4$ for real-data diagnostics.
    \begin{align*}
        \operatorname*{median}_{i}
        \frac{
            \|S_{\mathrm{TA}}^{\leq r}(\tilde{X}_i)
              - S_{\mathrm{TA}}^{\leq r}(X_i)\|_2^2
        }{
            \|S_{\mathrm{TA}}^{\leq r}(X_i)\|_2^2
        }
        \times 100\%.
    \end{align*}
\end{itemize}

\paragraph{Stylized-fact diagnostics.}
These metrics evaluate whether generated S\&P~500 log-return windows preserve standard stylized facts of financial time series.
Let $\Delta X_i$ denote the daily log-return sequence of path window $X_i$.
For realized volatility, skewness, and kurtosis, we report the median absolute error between generated and empirical windows.
\begin{itemize}
    \item \textbf{Realized volatility.}
    We compute realized volatility as $\|\Delta X_i\|_2$.
    \item \textbf{Skewness.}
    We compute the adjusted Fisher--Pearson skewness of $\Delta X_i$.
    \item \textbf{Kurtosis.}
    We compute the bias-corrected Fisher excess kurtosis of $\Delta X_i$.
\end{itemize}

\paragraph{Qualitative diagnostics.}
These diagnostics visualize distributional behavior that is difficult to summarize with a single scalar metric.
\begin{itemize}
    \item \textbf{Path overlays.}
    We overlay inverted paths with the reference path to inspect sample diversity and path-level fidelity.
    \item \textbf{ACF plots.}
    For synthetic data, we plot the ACF of $\Delta X_i$ for log-GBM and log-fBM, and the ACF of $X_i$ for OU.
    For real data, we plot the ACF of $|\Delta X_i|$ to visualize volatility clustering.
    \item \textbf{QQ plots.}
    We compare empirical quantiles of generated and reference samples to assess marginal distributional fit.
\end{itemize}

\section{Limitations and future work}
\label{app:limitations}

We summarize the main limitations and natural extensions of the present work.
Our theory is developed for Gaussian process families through the linear-statistics benchmark, which provides a tractable upper bound for richer signature conditionings but does not characterize the exact Bayes error under truncated-signature conditioning.
Empirically, the LS--TA--TLL error ordering holds broadly but is not strict at all depths; deeper TA/TLL reversals suggest that optimization under high-dimensional conditioning can affect the realized error.
Our experiments focus on one-dimensional synthetic processes and S\&P~500 windows; extending the framework to multivariate paths, jump-driven dynamics, and broader real-data domains remains future work.

\clearpage
\section{Additional synthetic dataset results}
\label{app:synthetic}

This appendix collects the synthetic diagnostics summarized in \cref{subsec:exp_bayes,subsec:synthetic-structural-recovery}.

\subsection{Quantitative results}
\label{app:synthetic-quant}
\label{app:additional}

\paragraph{Theoretical-alignment diagnostics.}
\cref{tab:bayes_error_appendix} expands \cref{tab:bayes_error} by reporting full-sample Bayes reconstruction error estimates with normal-approximation 95\% bootstrap half-widths, together with the inter-sample spread ratio discussed in \cref{subsec:exp_bayes}.
\cref{tab:conditioning_consistency} reports the relative conditioning-signature error of generated samples, measuring how well the sampled paths preserve the conditioning statistic.
Overall, the LS empirical means stay close to the LS Oracle within small bootstrap uncertainty, while TA and TLL usually reduce path MSE under richer conditioning.
The spread ratios near one and the small conditioning-consistency errors provide indirect evidence that the learned sampler approximates the intended conditional distribution reasonably well.

\begin{table}[h]
\centering
\captionsetup{skip=6pt}

\caption{Bayes reconstruction error with confidence intervals and inter-sample spread ratio.
LS Oracle is analytic for log-GBM; for log-fBM and OU, it reports the mean with 95\% confidence intervals based on the t-distribution over 10 numerical seeds.
LS, TA, and TLL report full-sample empirical path-MSE means with normal-approximation 95\% bootstrap half-widths, computed as $1.96$ times the bootstrap standard error.}
\label{tab:bayes_error_appendix}

\renewcommand{\arraystretch}{1.1}
\setlength{\tabcolsep}{4pt}
\scalebox{0.95}{
\begin{tabular}{@{}c c c ccc c ccc@{}}
\toprule
& & \multicolumn{4}{c}{Bayes recon.\ error} & & \multicolumn{3}{c}{Spread ratio} \\
\cmidrule(lr){3-6}\cmidrule(lr){8-10}
Process & $r$ & LS Oracle & LS & TA & TLL & & LS & TA & TLL \\ \midrule

\multirow{6}{*}{log-GBM}
& 1 & 1.364 & $1.317{\pm}0.040$ & $1.317{\pm}0.040$ & $1.317{\pm}0.040$ & & 0.98 & 0.98 & 0.98 \\
& 2 & 0.545 & $0.547{\pm}0.014$ & $0.547{\pm}0.014$ & $0.553{\pm}0.016$ & & 0.98 & 0.98 & 0.99 \\
& 3 & 0.351 & $0.348{\pm}0.007$ & $0.321{\pm}0.010$ & $0.318{\pm}0.011$ & & 1.00 & 1.01 & 1.01 \\
& 4 & 0.260 & $0.261{\pm}0.005$ & $0.197{\pm}0.006$ & $0.197{\pm}0.006$ & & 0.99 & 1.00 & 1.01 \\
& 5 & 0.207 & $0.208{\pm}0.004$ & $0.150{\pm}0.004$ & $0.155{\pm}0.004$ & & 0.99 & 1.01 & 1.01 \\
& 6 & 0.172 & $0.173{\pm}0.003$ & $0.135{\pm}0.003$ & $0.144{\pm}0.004$ & & 1.01 & 1.00 & 1.01 \\ \midrule

\multirow{6}{*}{log-fBM}
& 1 & $1.486{\pm}0.001$ & $1.488{\pm}0.050$ & $1.488{\pm}0.050$ & $1.488{\pm}0.050$ & & 0.99 & 0.99 & 0.99 \\
& 2 & $0.656{\pm}0.002$ & $0.664{\pm}0.022$ & $0.664{\pm}0.022$ & $0.674{\pm}0.031$ & & 0.96 & 0.96 & 0.99 \\
& 3 & $0.450{\pm}0.002$ & $0.456{\pm}0.017$ & $0.454{\pm}0.025$ & $0.442{\pm}0.026$ & & 0.96 & 0.98 & 0.99 \\
& 4 & $0.352{\pm}0.001$ & $0.365{\pm}0.015$ & $0.315{\pm}0.019$ & $0.314{\pm}0.020$ & & 0.98 & 0.98 & 0.99 \\
& 5 & $0.291{\pm}0.002$ & $0.307{\pm}0.013$ & $0.252{\pm}0.016$ & $0.255{\pm}0.016$ & & 0.98 & 1.00 & 1.00 \\
& 6 & $0.253{\pm}0.002$ & $0.265{\pm}0.012$ & $0.225{\pm}0.014$ & $0.248{\pm}0.016$ & & 0.97 & 1.00 & 1.00 \\ \midrule

\multirow{6}{*}{OU}
& 1 & $1.163{\pm}0.006$ & $1.143{\pm}0.036$ & $1.143{\pm}0.036$ & $1.143{\pm}0.036$ & & 0.98 & 0.98 & 0.98 \\
& 2 & $0.502{\pm}0.001$ & $0.506{\pm}0.012$ & $0.506{\pm}0.012$ & $0.509{\pm}0.014$ & & 0.99 & 0.99 & 0.99 \\
& 3 & $0.334{\pm}0.001$ & $0.333{\pm}0.007$ & $0.305{\pm}0.011$ & $0.301{\pm}0.011$ & & 1.00 & 1.00 & 1.00 \\
& 4 & $0.252{\pm}0.001$ & $0.254{\pm}0.005$ & $0.194{\pm}0.006$ & $0.197{\pm}0.006$ & & 0.99 & 0.99 & 1.01 \\
& 5 & $0.202{\pm}0.001$ & $0.204{\pm}0.004$ & $0.150{\pm}0.004$ & $0.155{\pm}0.004$ & & 0.99 & 1.00 & 1.01 \\
& 6 & $0.169{\pm}0.000$ & $0.171{\pm}0.003$ & $0.135{\pm}0.003$ & $0.145{\pm}0.004$ & & 1.01 & 0.99 & 1.00 \\

\bottomrule
\end{tabular}
}
\end{table}


\begin{table}[h]
\centering
\captionsetup{skip=6pt}

\caption{Conditioning consistency. Results are reported across depths for LS, TA, and TLL conditioning on synthetic processes.}
\label{tab:conditioning_consistency}

\renewcommand{\arraystretch}{1.1}
\setlength{\tabcolsep}{4pt}
\scalebox{0.85}{
\begin{tabular}{c ccc ccc ccc}
\toprule
& \multicolumn{3}{c}{log-GBM} & \multicolumn{3}{c}{log-fBM} & \multicolumn{3}{c}{OU} \\
\cmidrule(lr){2-4}\cmidrule(lr){5-7}\cmidrule(lr){8-10}
$r$ & LS & TA & TLL & LS & TA & TLL & LS & TA & TLL \\ \midrule

1 & 0.03\% & 0.03\% & 0.03\% & 0.04\% & 0.04\% & 0.04\% & 0.01\% & 0.01\% & 0.01\% \\
2 & 0.05\% & 0.05\% & 0.04\% & 0.06\% & 0.06\% & 0.05\% & 0.01\% & 0.01\% & 0.03\% \\
3 & 0.04\% & 0.04\% & 0.24\% & 0.05\% & 0.05\% & 0.42\% & 0.01\% & 0.02\% & 0.12\% \\
4 & 0.04\% & 0.13\% & 0.44\% & 0.07\% & 0.16\% & 0.72\% & 0.01\% & 0.09\% & 0.41\% \\
5 & 0.04\% & 0.33\% & 1.75\% & 0.05\% & 0.34\% & 3.12\% & 0.01\% & 0.21\% & 1.40\% \\
6 & 0.04\% & 0.72\% & 2.76\% & 0.05\% & 0.77\% & 4.85\% & 0.01\% & 0.32\% & 2.28\% \\

\bottomrule
\end{tabular}
}
\end{table}

\clearpage
\paragraph{Signature fidelity.}
\cref{tab:signature_fidelity_synthetic} reports whether the inverted paths preserve the time-augmented signature of the target path on synthetic processes.
The probabilistic methods achieve low signature errors, indicating that their generated samples remain close to the conditioning path in signature space.

\begin{table}[h]
\centering
\captionsetup{skip=6pt}

\caption{Signature fidelity on synthetic processes. Entries report relative MSE of truncated time-augmented signatures up to depth $6$, expressed as percentages.}
\label{tab:signature_fidelity_synthetic}

\renewcommand{\arraystretch}{1.1}
\setlength{\tabcolsep}{4pt}
\scalebox{0.85}{
\begin{tabular}{l ccc}
\toprule
Method & log-GBM & log-fBM & OU \\ \midrule
\textbf{Deterministic} \\
\Fourier   & 60.08\% & 57.85\% & 82.65\% \\
\Legendre  & 5.42\% & 6.27\% & 3.65\% \\
\Insertion & 0.99\% & 1.01\% & 1.13\% \\
\RegTAd{6}    & 0.11\% & 0.15\% & 0.05\% \\
\RegTLLd{6}   & 0.18\% & 0.25\% & 0.08\% \\
\cmidrule(lr){1-4}
\textbf{Probabilistic} \\
\OursTAd{6}        & 0.03\% & 0.03\% & 0.01\% \\
\OursTLLd{6}       & 0.05\% & 0.06\% & 0.02\% \\
\bottomrule
\end{tabular}
}
\end{table}

\vspace{-3mm}
\paragraph{Ablation.}
Higher-order log-signature coordinates can contain extreme values, so our main setup applies level-wise scaling followed by the signed logarithm described in \cref{app:training}.
\cref{tab:loggbm-ablation-mse} compares this main preprocessing against the ablation without this stabilization on log-GBM.
The gap is small at depth $4$, but grows at depths $5$ and $6$, indicating that the scaling and logarithmic compression become more important as the conditioning signature becomes higher-dimensional.

\vspace{-3mm}

\begin{table}[h]
\centering
\captionsetup{skip=6pt}

\caption{log-GBM ablation comparison across truncation levels. We compare the default main setup and the ablation setup in terms of path MSE, with the linear-statistics oracle shown as the theoretical reference.}
\label{tab:loggbm-ablation-mse}

\renewcommand{\arraystretch}{1.1}
\setlength{\tabcolsep}{4pt}
\scalebox{0.85}{
\begin{tabular}{l ccc ccc ccc}
\toprule
& \multicolumn{3}{c}{LS} & \multicolumn{3}{c}{TA} & \multicolumn{3}{c}{TLL} \\
\cmidrule(lr){2-4}\cmidrule(lr){5-7}\cmidrule(lr){8-10}
Setup & $r=4$ & $r=5$ & $r=6$ & $r=4$ & $r=5$ & $r=6$ & $r=4$ & $r=5$ & $r=6$ \\ \midrule

Oracle & 0.260 & 0.207 & 0.172 & -- & -- & -- & -- & -- & -- \\
Ablation & \textbf{0.260} & 0.210 & 0.206 & 0.201 & 0.182 & 0.179 & \textbf{0.196} & 0.168 & 0.158 \\
Main & 0.261 & \textbf{0.208} & \textbf{0.173} & \textbf{0.197} & \textbf{0.150} & \textbf{0.135} & 0.197 & \textbf{0.155} & \textbf{0.144} \\

\bottomrule
\end{tabular}
}
\end{table}


\clearpage
\subsection{Qualitative diagnostics across synthetic processes}
\label{app:synthetic-main-style-qual}

This subsection reports path overlays, ACF diagnostics, and QQ diagnostics for the two most extreme settings of each synthetic process under TLLd6 and TAd6 conditioning.

In the ACF panels below, the regression baselines exhibit periodic spikes at lags equal to multiples of the DiT patch size~(8). 
These reflect patch-boundary discontinuities in the regressed output rather than temporal dependence of the underlying process. 
We retain the same backbone across probabilistic and regression methods to keep the comparison architecture-controlled.

\textbf{log-GBM.}

\textbf{TLLd6.}
\begin{figure}[h]
\centering
\captionsetup{skip=6pt}
\setlength{\tabcolsep}{0pt}
\renewcommand{\arraystretch}{0.8}
\scalebox{0.95}{
\begin{tabular}{ccc}
    \multicolumn{3}{c}{\includegraphics[width=0.7\linewidth]{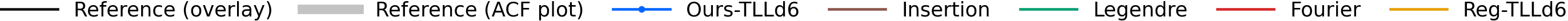}}\\
    \includegraphics[width=0.48\linewidth]{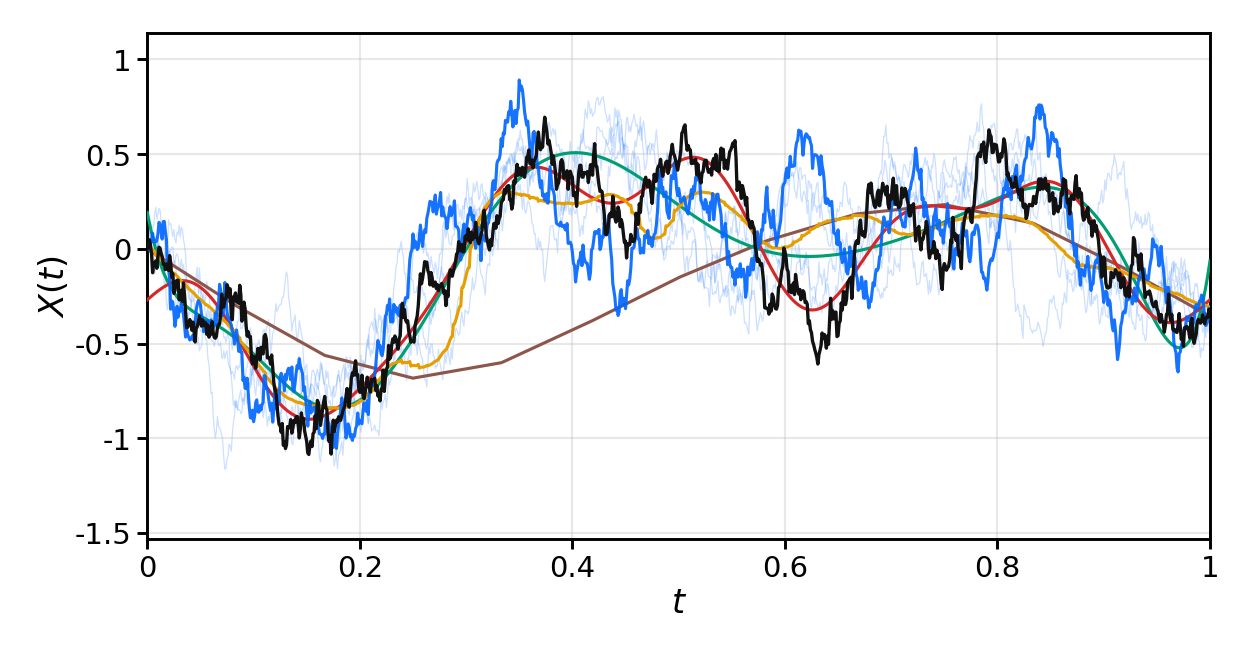}
    &
    \includegraphics[width=0.25\linewidth]{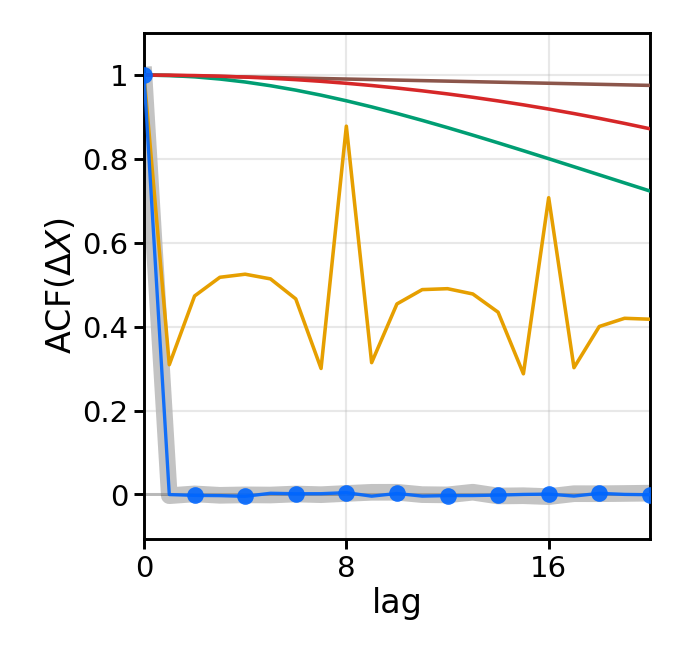}
    &
    \includegraphics[width=0.25\linewidth]{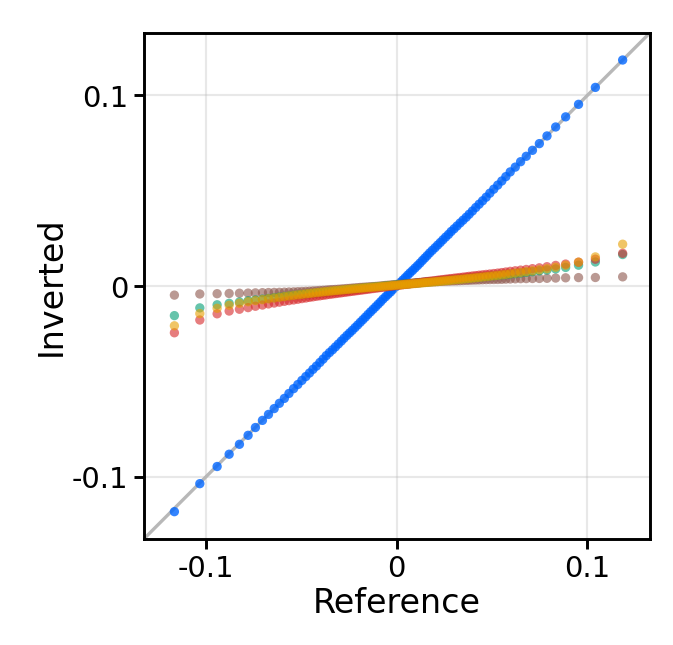} \\[-2pt]
    \multicolumn{3}{c}{(a) log-GBM~($\sigma=1.6,\mu=2.0$)} \\
    \includegraphics[width=0.48\linewidth]{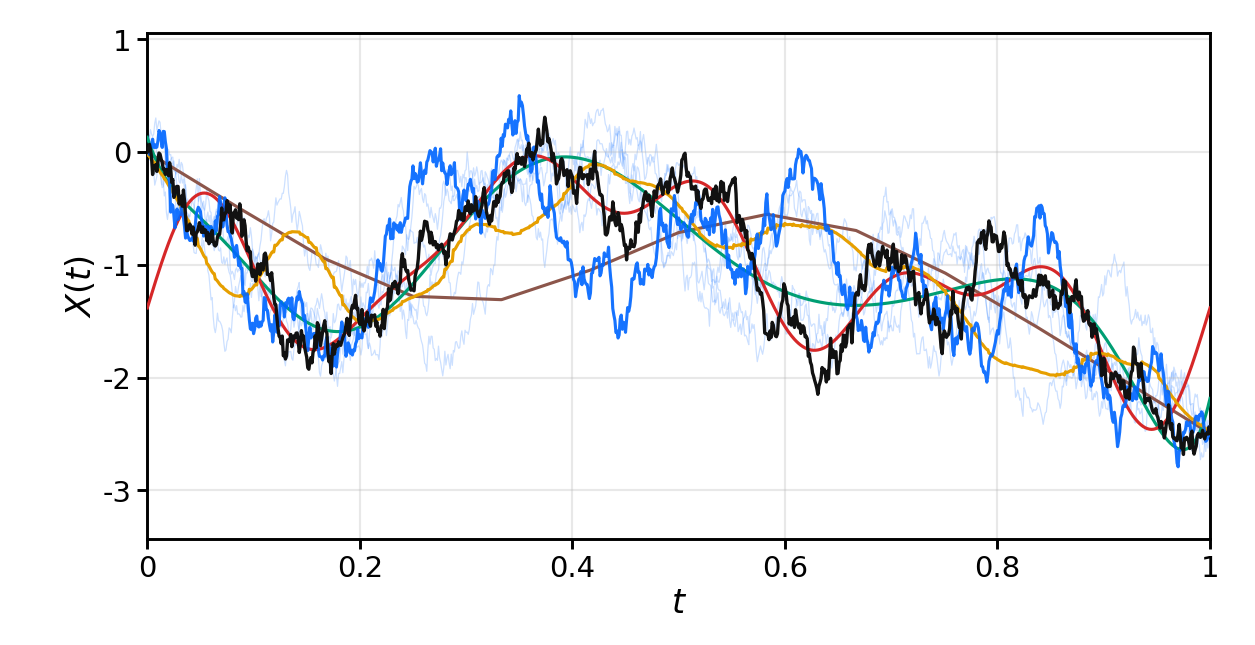}
    &
    \includegraphics[width=0.25\linewidth]{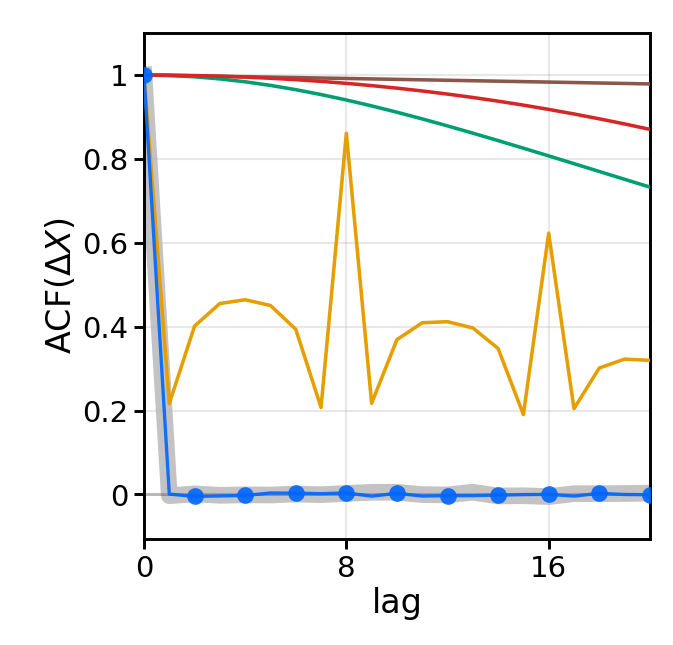}
    &
    \includegraphics[width=0.25\linewidth]{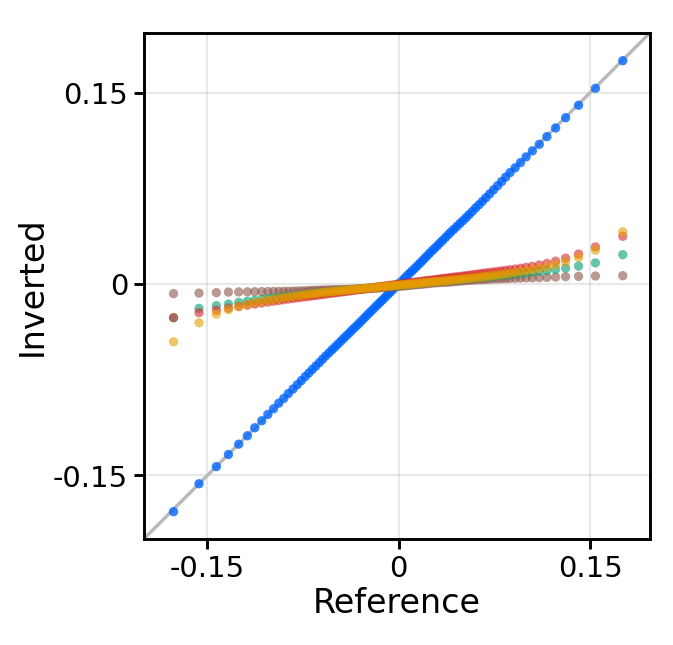} \\[-2pt]
    \multicolumn{3}{c}{(b) log-GBM~($\sigma=2.4,\mu=2.0$)}
\end{tabular}
}
\caption{Qualitative diagnostics for log-GBM under TLLd6 conditioning across volatility extremes. The panels show path overlays, ACF diagnostics, and QQ diagnostics.}
\label{fig:app-main-style-gbm-tll}
\vspace{-4mm}
\end{figure}

\textbf{TAd6.}
\begin{figure}[h]
\centering
\captionsetup{skip=6pt}
\setlength{\tabcolsep}{0pt}
\renewcommand{\arraystretch}{0.8}
\scalebox{0.95}{
\begin{tabular}{ccc}
    \multicolumn{3}{c}{\includegraphics[width=0.7\linewidth]{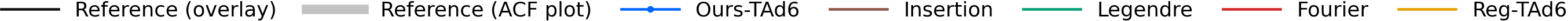}}\\
    \includegraphics[width=0.48\linewidth]{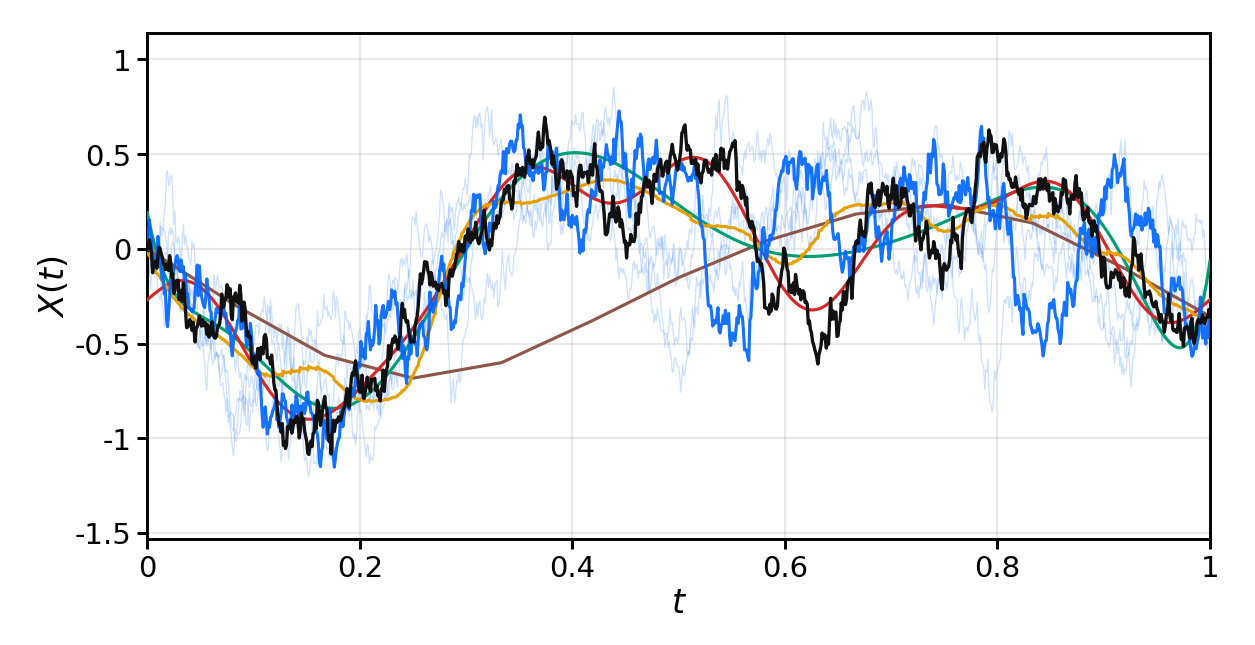}
    &
    \includegraphics[width=0.25\linewidth]{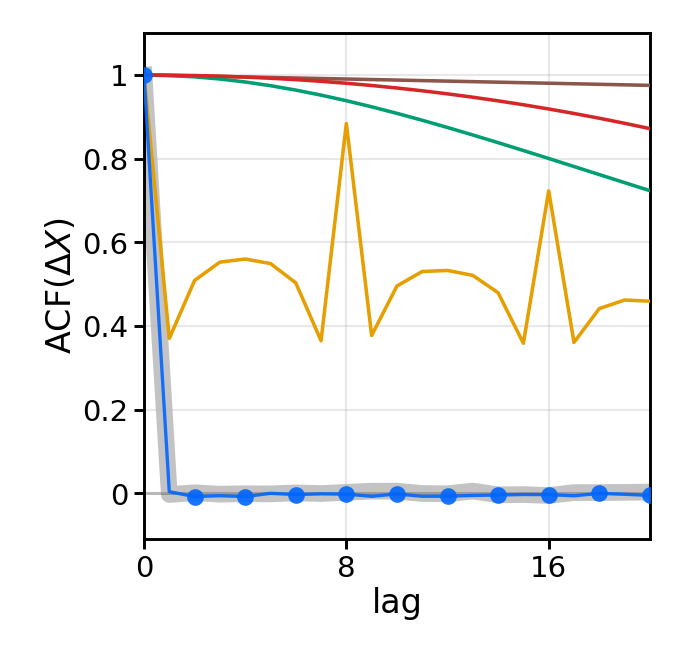}
    &
    \includegraphics[width=0.25\linewidth]{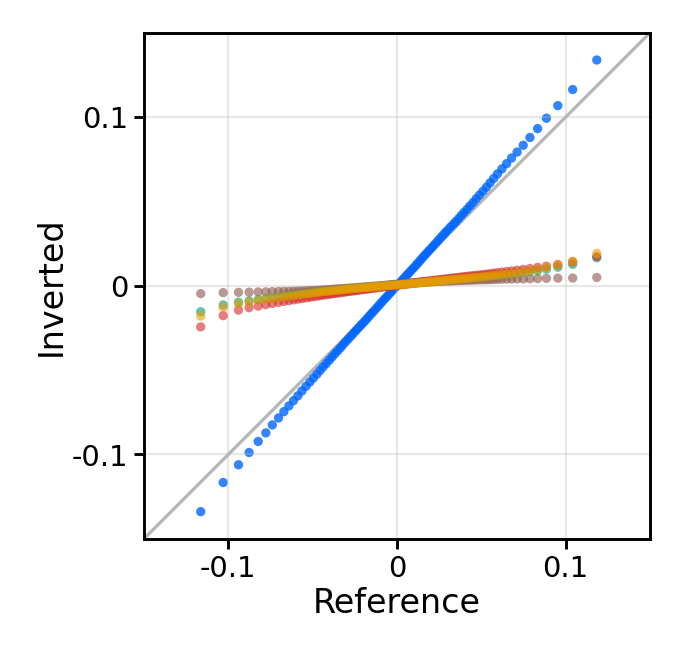} \\[-2pt]
    \multicolumn{3}{c}{(a) log-GBM~($\sigma=1.6,\mu=2.0$)} \\
    \includegraphics[width=0.48\linewidth]{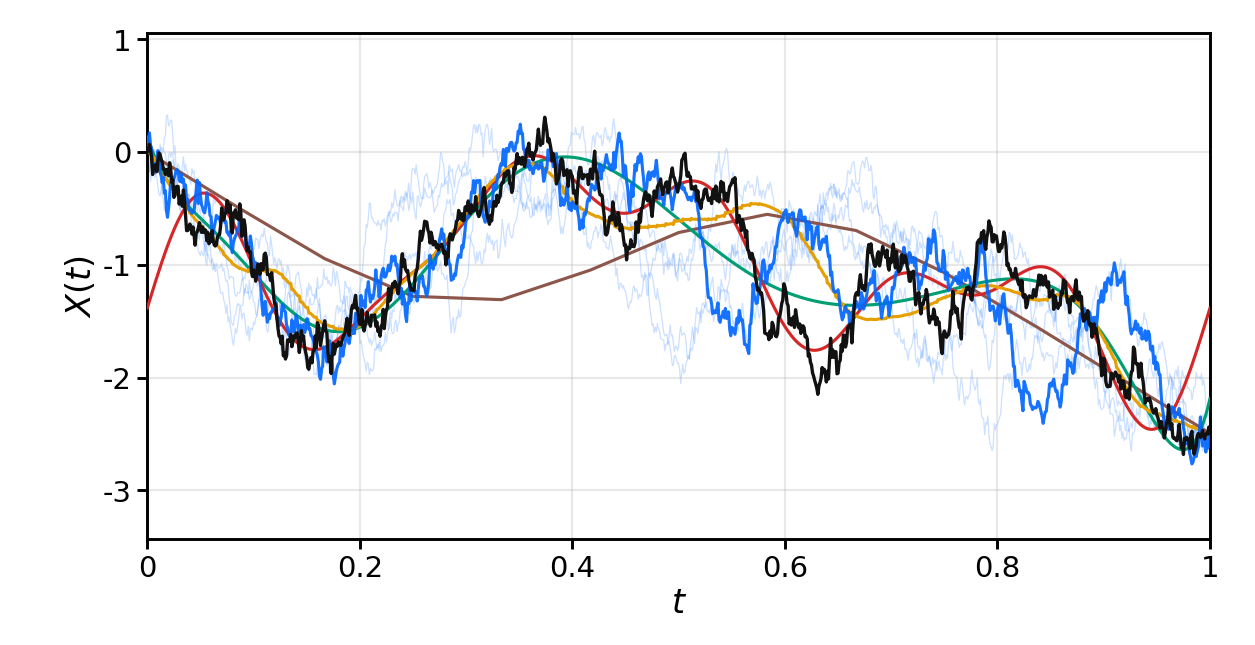}
    &
    \includegraphics[width=0.25\linewidth]{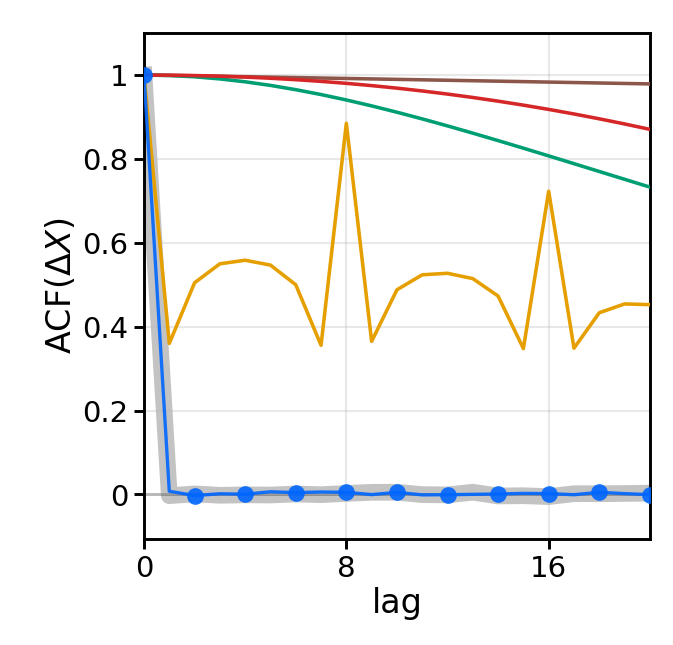}
    &
    \includegraphics[width=0.25\linewidth]{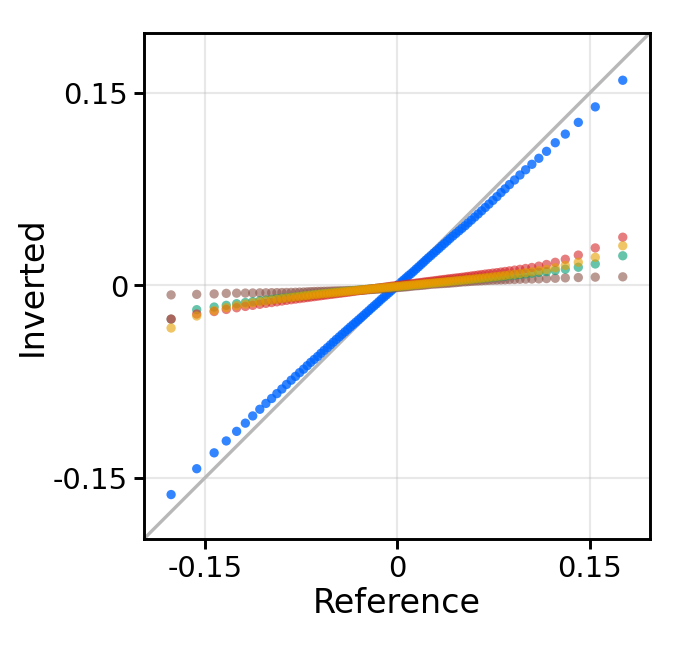} \\[-2pt]
    \multicolumn{3}{c}{(b) log-GBM~($\sigma=2.4,\mu=2.0$)}
\end{tabular}
}
\caption{Qualitative diagnostics for log-GBM under TAd6 conditioning across volatility extremes. The panels show path overlays, ACF diagnostics, and QQ diagnostics.}
\label{fig:app-main-style-gbm-ta}
\vspace{-4mm}
\end{figure}

\clearpage
\textbf{log-fBM.}

\textbf{TLLd6.}
\begin{figure}[h]
\centering
\captionsetup{skip=6pt}
\setlength{\tabcolsep}{0pt}
\renewcommand{\arraystretch}{0.8}
\scalebox{0.95}{
\begin{tabular}{ccc}
    \multicolumn{3}{c}{\includegraphics[width=0.7\linewidth]{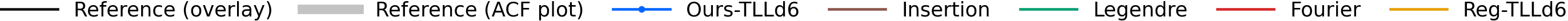}}\\
    \includegraphics[width=0.48\linewidth]{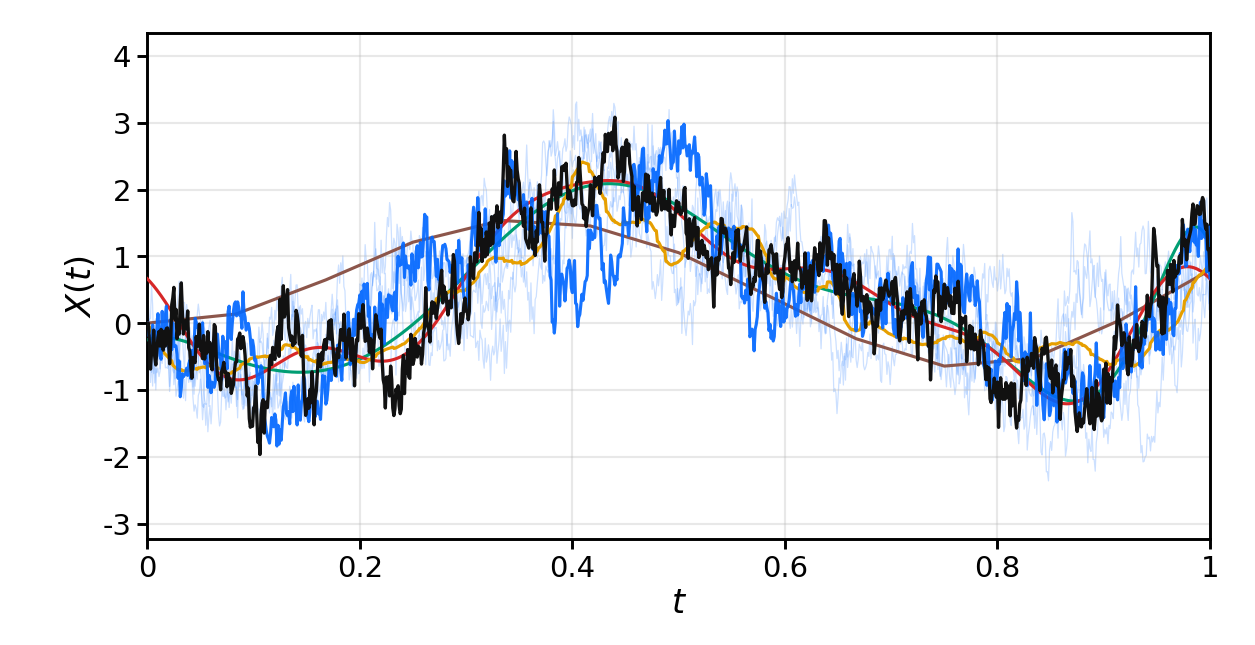}
    &
    \includegraphics[width=0.25\linewidth]{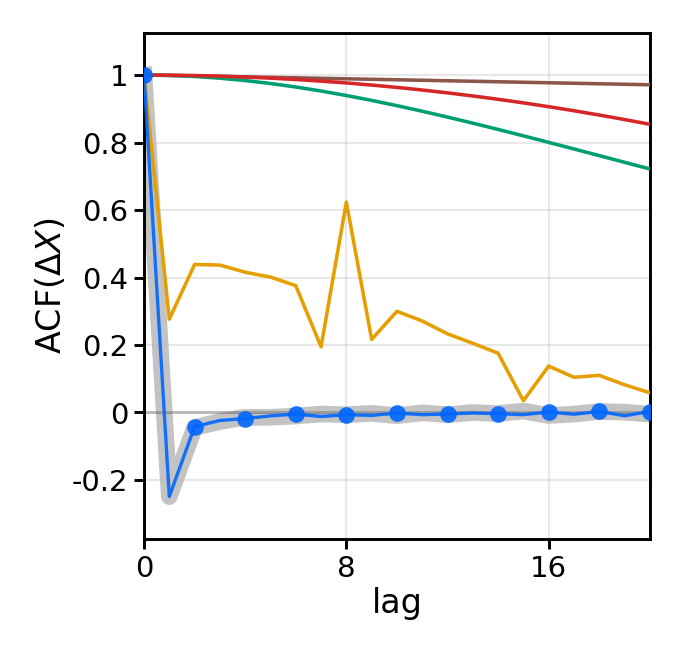}
    &
    \includegraphics[width=0.25\linewidth]{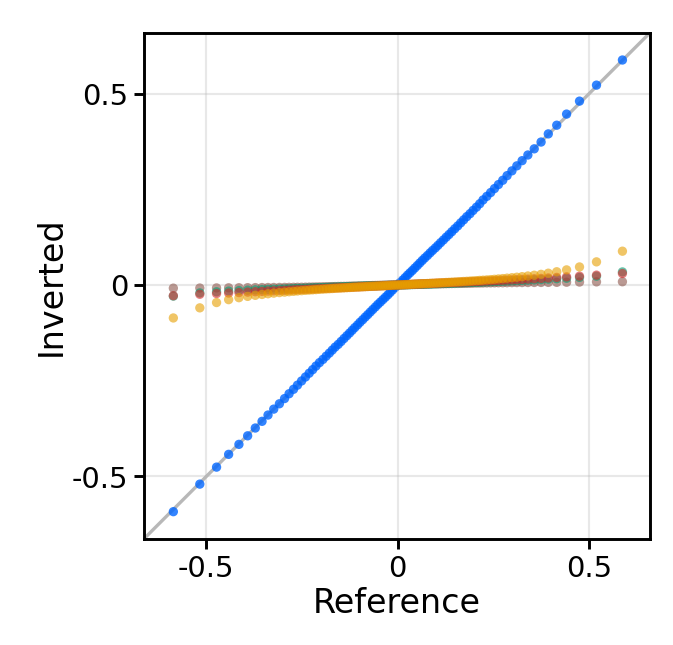} \\[-2pt]
    \multicolumn{3}{c}{(a) log-fBM~($H=0.3,\mu=2.0,\sigma=2.0$)} \\
    \includegraphics[width=0.48\linewidth]{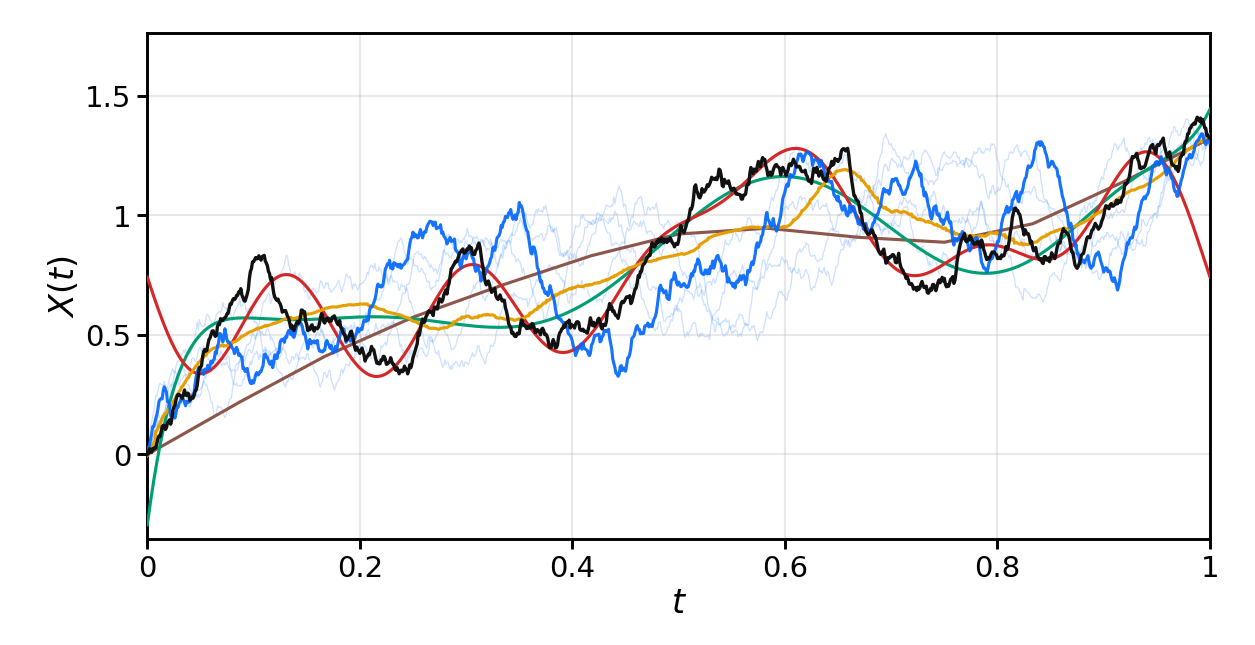}
    &
    \includegraphics[width=0.25\linewidth]{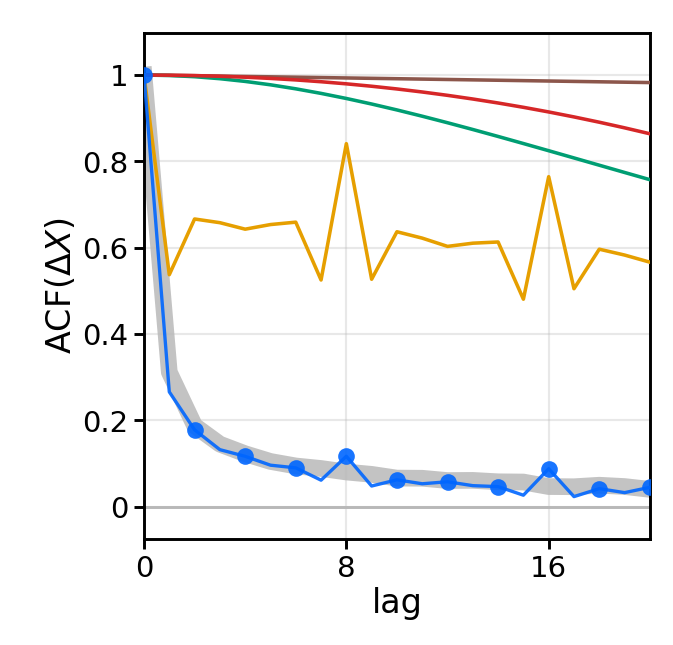}
    &
    \includegraphics[width=0.25\linewidth]{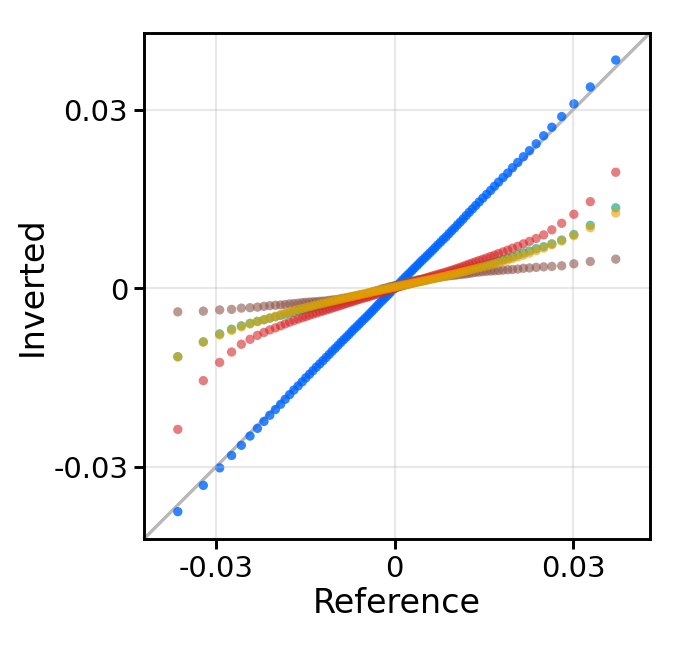} \\[-2pt]
    \multicolumn{3}{c}{(b) log-fBM~($H=0.7,\mu=2.0,\sigma=2.0$)}
\end{tabular}
}
\caption{Qualitative diagnostics for log-fBM under TLLd6 conditioning across Hurst-parameter extremes. The panels show path overlays, ACF diagnostics, and QQ diagnostics.}
\label{fig:app-main-style-fbm-tll}
\vspace{-4mm}
\end{figure}

\textbf{TAd6.}
\begin{figure}[h]
\centering
\captionsetup{skip=6pt}
\setlength{\tabcolsep}{0pt}
\renewcommand{\arraystretch}{0.8}
\scalebox{0.95}{
\begin{tabular}{ccc}
    \multicolumn{3}{c}{\includegraphics[width=0.7\linewidth]{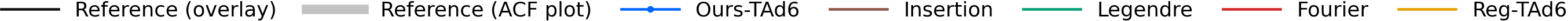}}\\
    \includegraphics[width=0.48\linewidth]{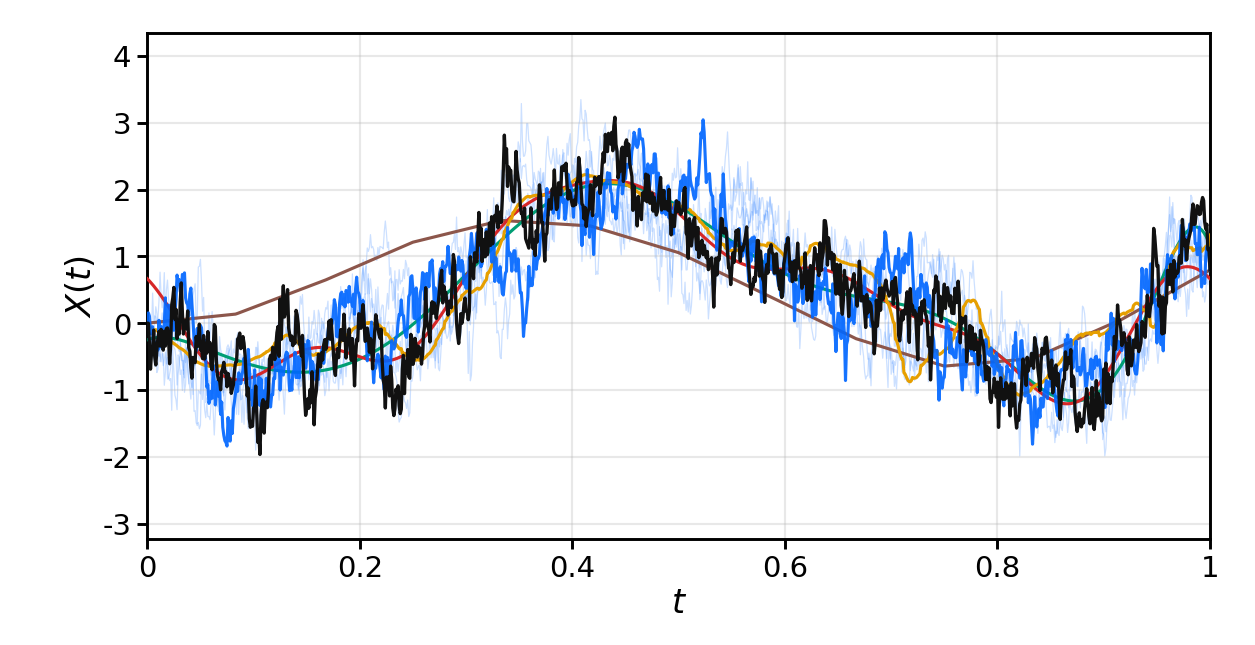}
    &
    \includegraphics[width=0.25\linewidth]{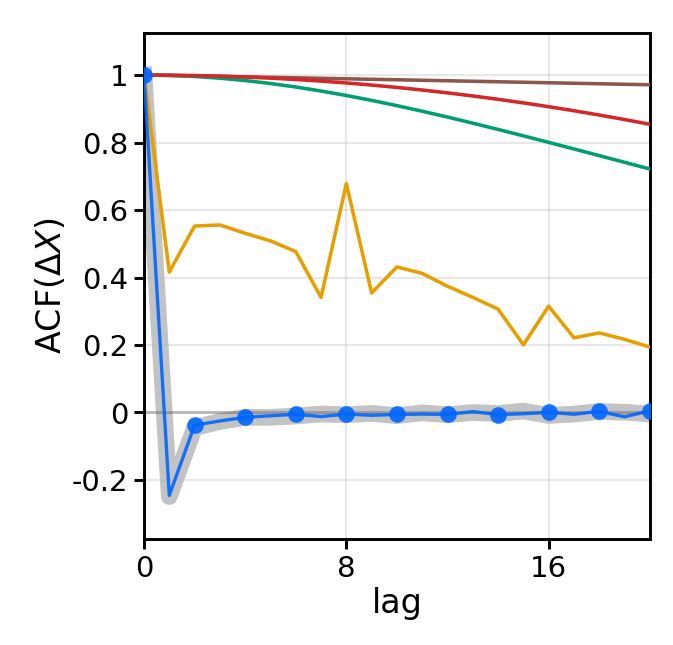}
    &
    \includegraphics[width=0.25\linewidth]{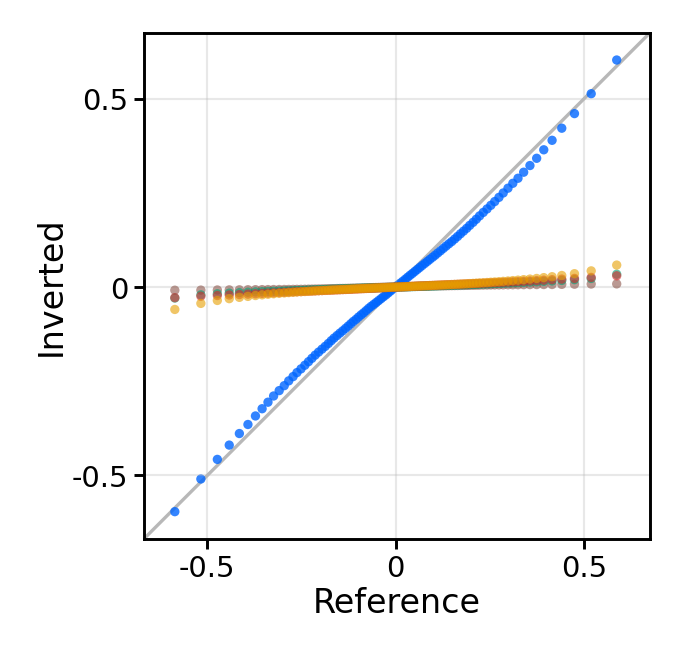} \\[-2pt]
    \multicolumn{3}{c}{(a) log-fBM~($H=0.3,\mu=2.0,\sigma=2.0$)} \\
    \includegraphics[width=0.48\linewidth]{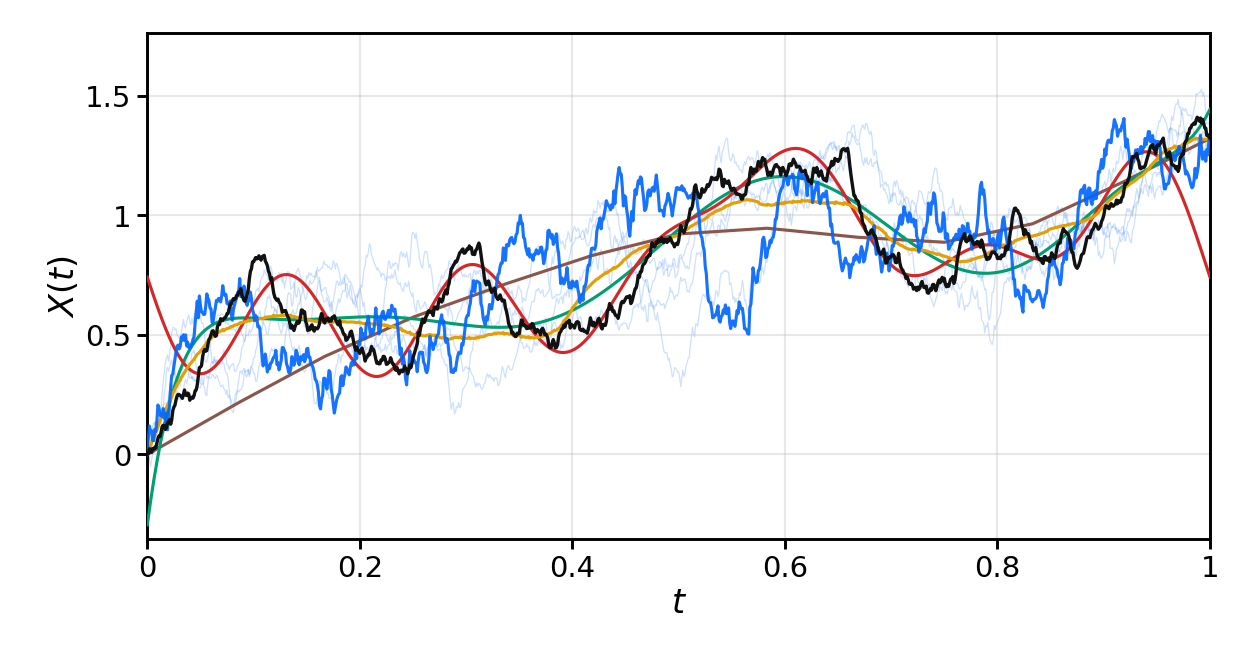}
    &
    \includegraphics[width=0.25\linewidth]{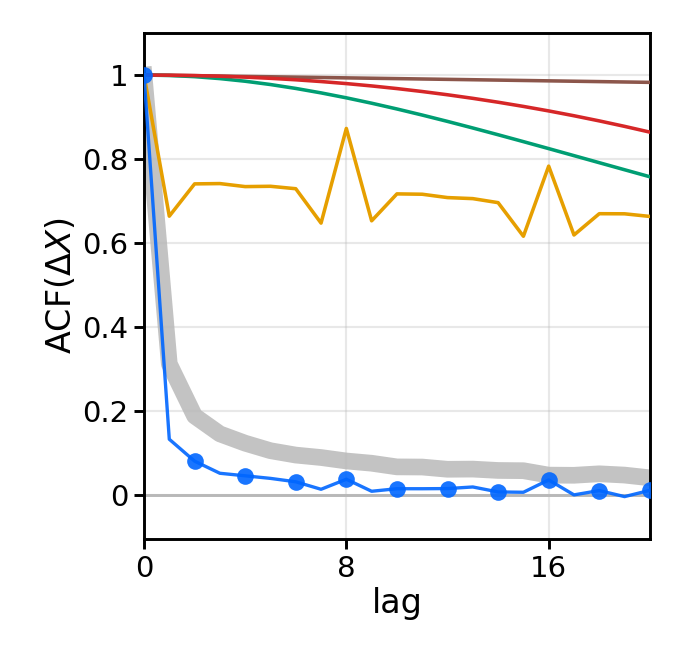}
    &
    \includegraphics[width=0.25\linewidth]{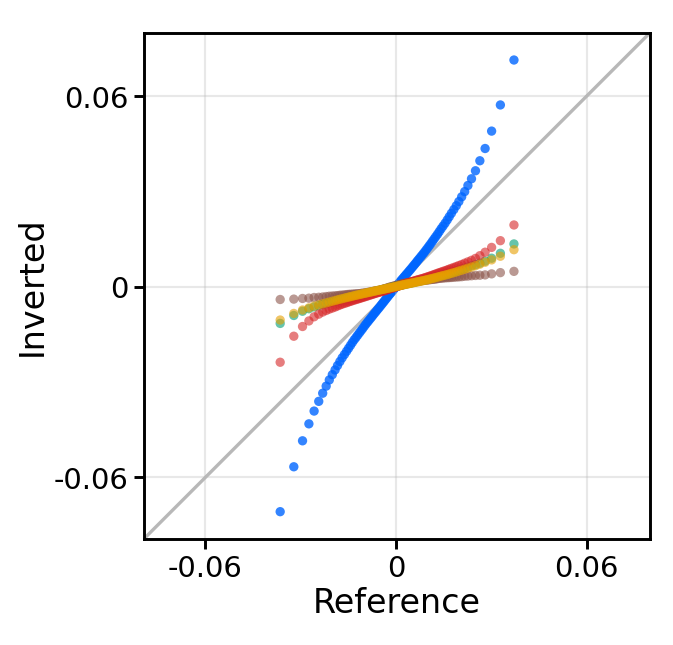} \\[-2pt]
    \multicolumn{3}{c}{(b) log-fBM~($H=0.7,\mu=2.0,\sigma=2.0$)}
\end{tabular}
}
\caption{Qualitative diagnostics for log-fBM under TAd6 conditioning across Hurst-parameter extremes. The panels show path overlays, ACF diagnostics, and QQ diagnostics.}
\label{fig:app-main-style-fbm-ta}
\vspace{-4mm}
\end{figure}

\clearpage
\textbf{OU Process.}

\textbf{TLLd6.}
\begin{figure}[h]
\centering
\captionsetup{skip=6pt}
\setlength{\tabcolsep}{0pt}
\renewcommand{\arraystretch}{0.8}
\scalebox{0.95}{
\begin{tabular}{ccc}
    \multicolumn{3}{c}{\includegraphics[width=0.7\linewidth]{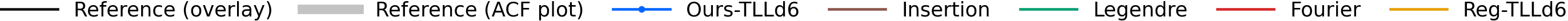}}\\
    \includegraphics[width=0.48\linewidth]{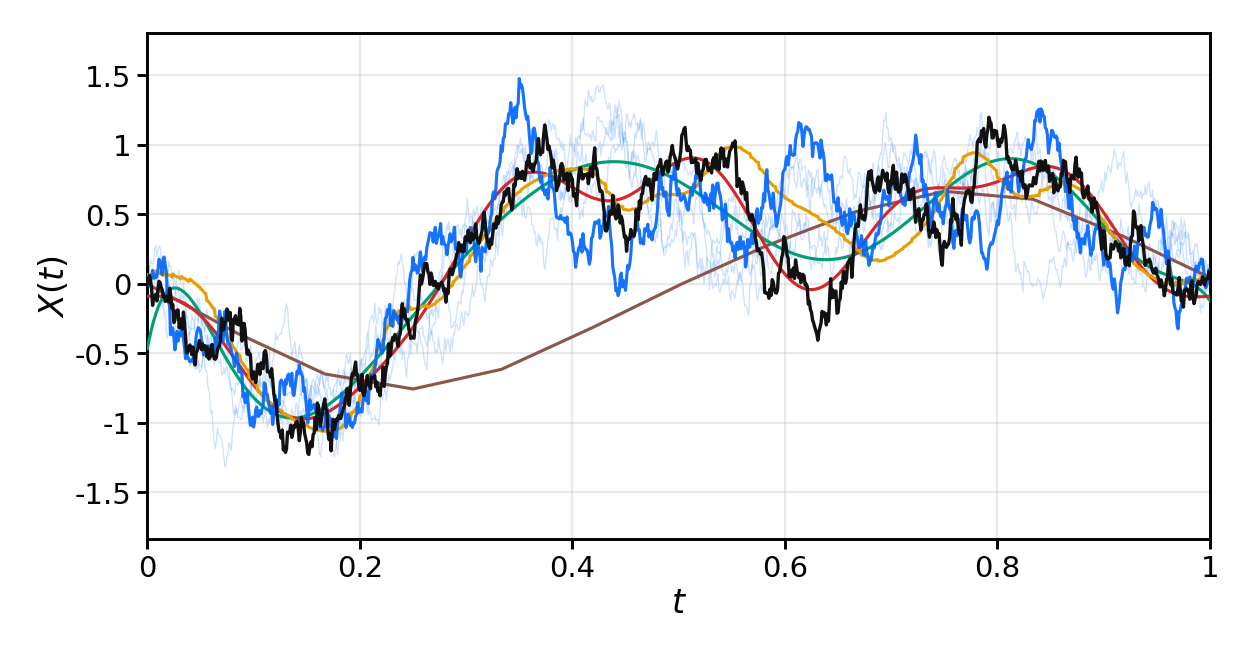}
    &
    \includegraphics[width=0.25\linewidth]{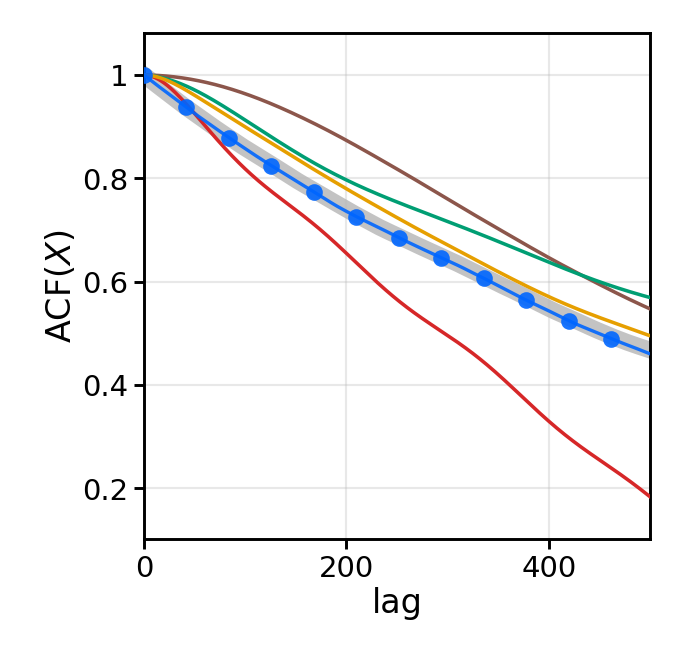}
    &
    \includegraphics[width=0.25\linewidth]{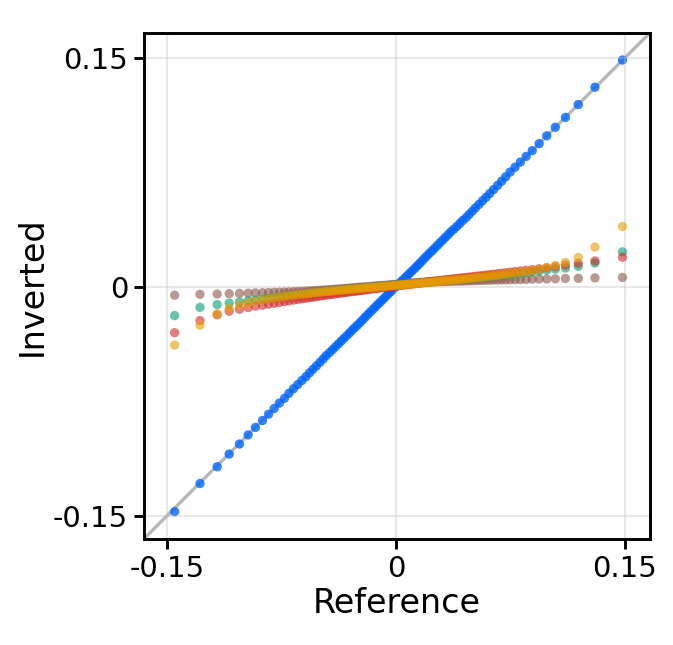} \\[-2pt]
    \multicolumn{3}{c}{(a) OU~($\kappa=0.5,\mu=3.0,\sigma=2.0$)} \\
    \includegraphics[width=0.48\linewidth]{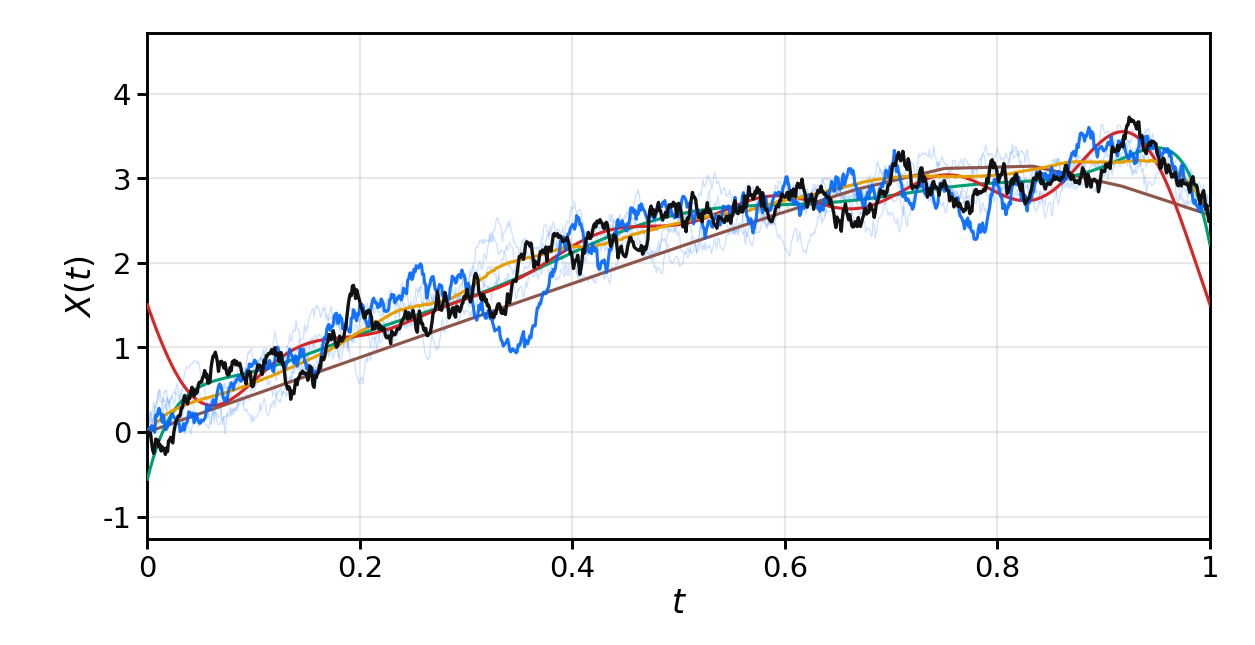}
    &
    \includegraphics[width=0.25\linewidth]{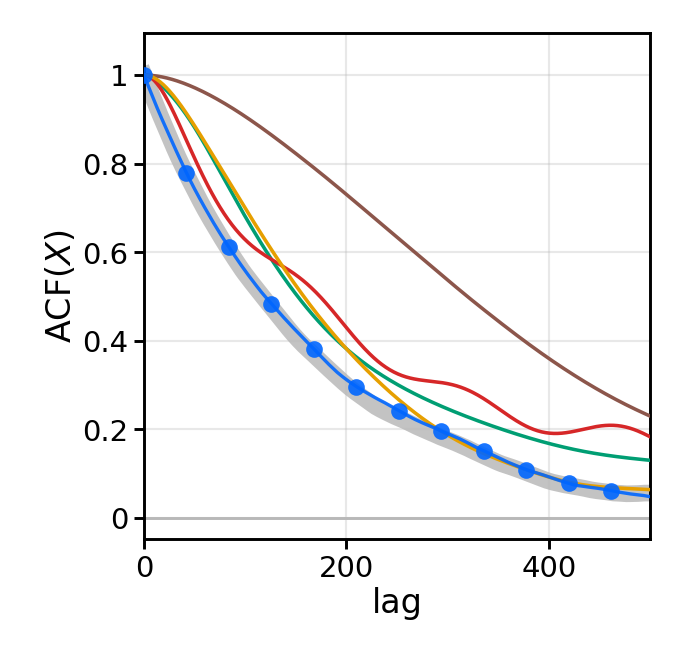}
    &
    \includegraphics[width=0.25\linewidth]{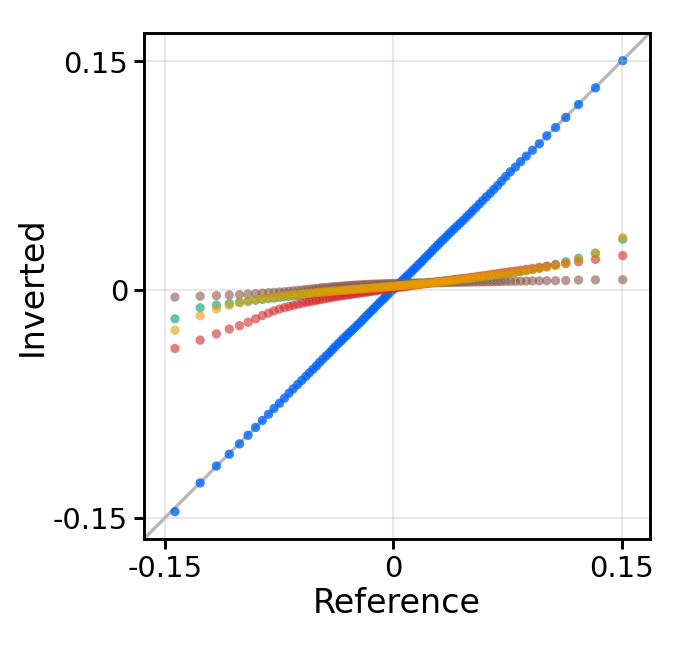} \\[-2pt]
    \multicolumn{3}{c}{(b) OU~($\kappa=5.0,\mu=3.0,\sigma=2.0$)}
\end{tabular}
}
\caption{Qualitative diagnostics for OU under TLLd6 conditioning across mean-reversion extremes. The panels show path overlays, ACF diagnostics, and QQ diagnostics.}
\label{fig:app-main-style-ou-tll}
\vspace{-4mm}
\end{figure}

\textbf{TAd6.}
\begin{figure}[h]
\centering
\captionsetup{skip=6pt}
\setlength{\tabcolsep}{0pt}
\renewcommand{\arraystretch}{0.8}
\scalebox{0.95}{
\begin{tabular}{ccc}
    \multicolumn{3}{c}{\includegraphics[width=0.7\linewidth]{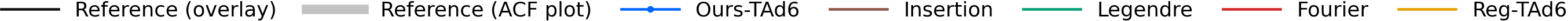}}\\
    \includegraphics[width=0.48\linewidth]{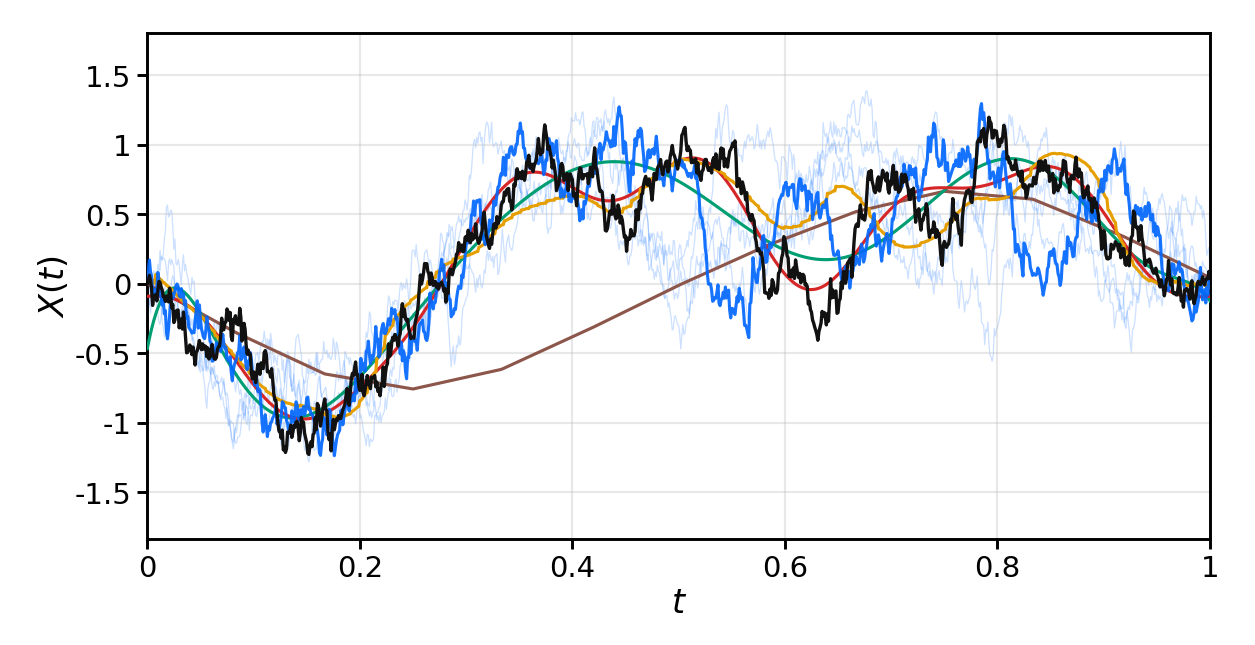}
    &
    \includegraphics[width=0.25\linewidth]{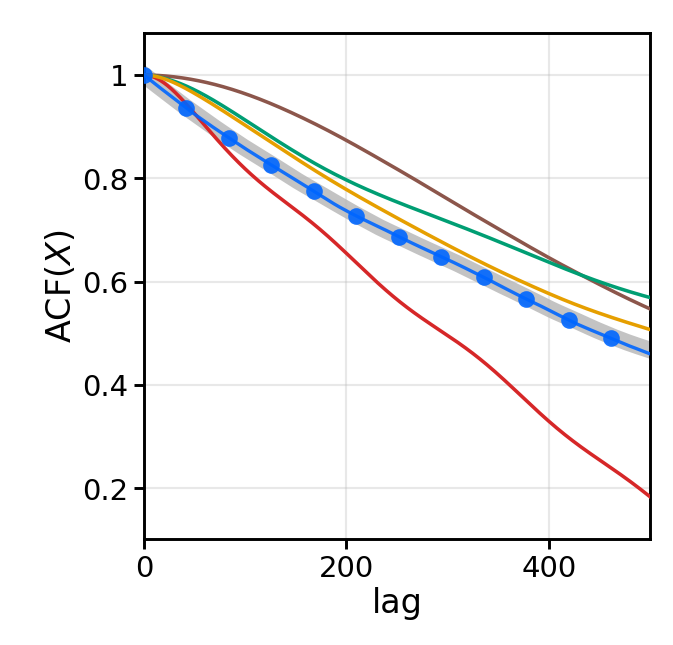}
    &
    \includegraphics[width=0.25\linewidth]{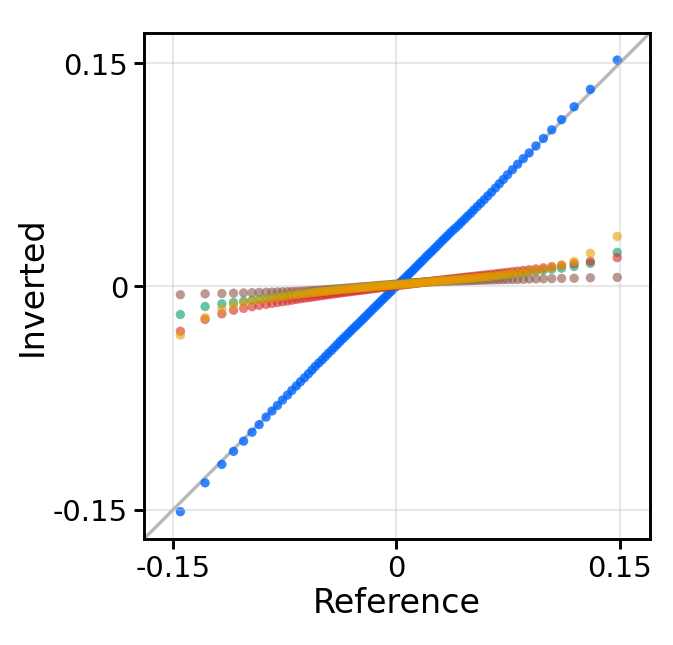} \\[-2pt]
    \multicolumn{3}{c}{(a) OU~($\kappa=0.5,\mu=3.0,\sigma=2.0$)} \\
    \includegraphics[width=0.48\linewidth]{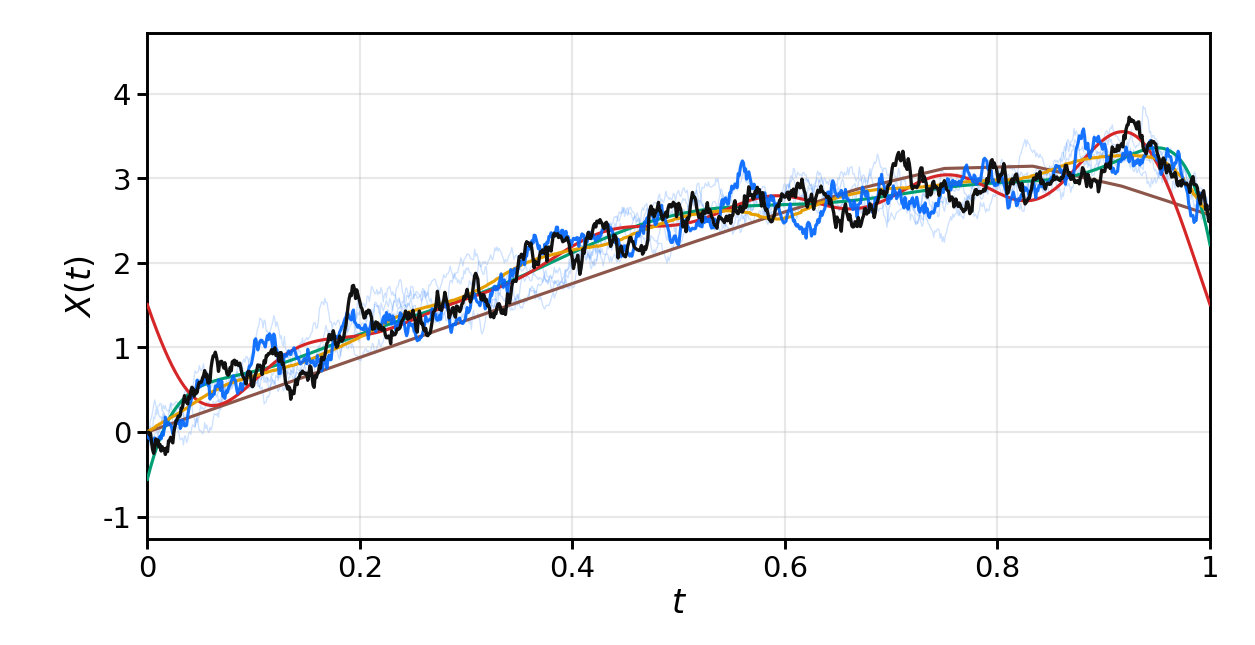}
    &
    \includegraphics[width=0.25\linewidth]{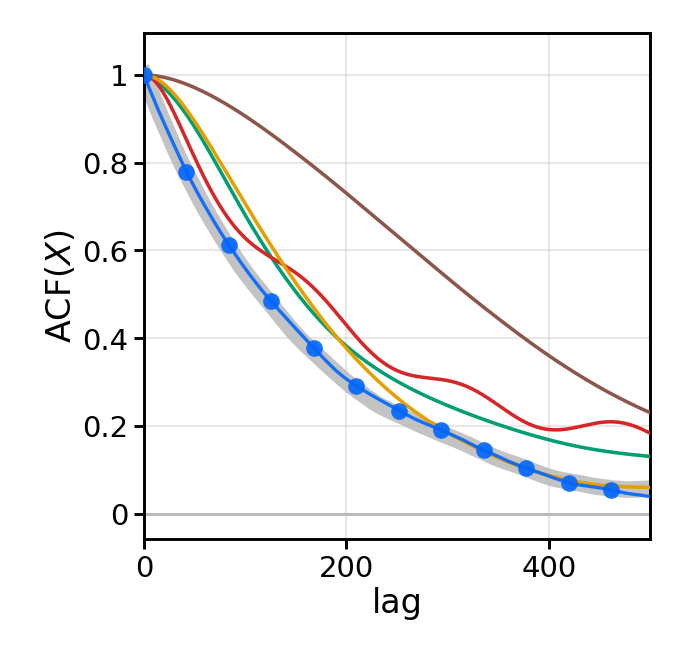}
    &
    \includegraphics[width=0.25\linewidth]{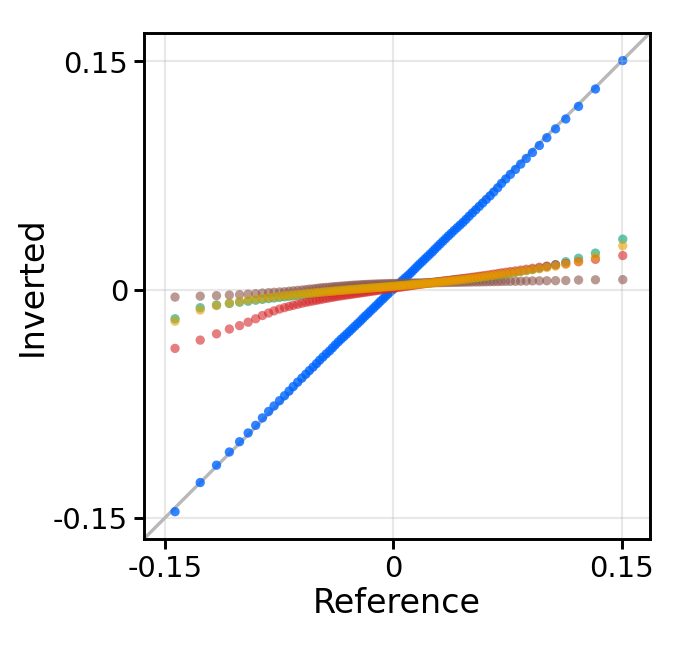} \\[-2pt]
    \multicolumn{3}{c}{(b) OU~($\kappa=5.0,\mu=3.0,\sigma=2.0$)}
\end{tabular}
}
\caption{Qualitative diagnostics for OU under TAd6 conditioning across mean-reversion extremes. The panels show path overlays, ACF diagnostics, and QQ diagnostics.}
\label{fig:app-main-style-ou-ta}
\vspace{-4mm}
\end{figure}

\clearpage
\section{Additional real-data results}
\label{app:real}

\subsection{Quantitative results}
\label{app:real-quant}

\paragraph{Stylized-fact diagnostics.}
\cref{tab:snp500_structure} reports median absolute errors for realized volatility, skewness, and kurtosis on the S\&P~500 splits.
For reference, \cref{tab:snp500_reference_stats} reports the corresponding real-data stylized facts as medians and 10th--90th percentile ranges.
These diagnostics provide an aggregate check complementary to the qualitative path, ACF, and QQ comparisons in \cref{fig:real-overlay}.

\begin{table}[h]
\centering
\captionsetup{skip=6pt}

\caption{S\&P~500 stylized-fact errors on in-sample and out-of-sample splits. Entries are median absolute errors for realized volatility, skewness, and kurtosis.}
\label{tab:snp500_structure}

\renewcommand{\arraystretch}{1.1}
\setlength{\tabcolsep}{3.5pt}
\scalebox{0.85}{
\begin{tabular}{@{}l ccc ccc@{}}
\toprule
& \multicolumn{3}{c}{In-sample (2009--2022)} & \multicolumn{3}{c}{Out-of-sample (2023--2025)} \\
\cmidrule(lr){2-4}\cmidrule(lr){5-7}
Method & Volatility & Skewness & Kurtosis & Volatility & Skewness & Kurtosis \\ \midrule
\textbf{Deterministic} \\
\Fourier         & 0.11 & 0.39 & 1.89 & 0.08 & 1.35 & \underline{2.37} \\
\Legendre        & 0.12 & 1.37 & 5.24 & 0.10 & 1.25 & 9.61 \\
\Insertion       & 0.13 & 0.50 & 3.19 & 0.11 & 0.94 & 2.77 \\
\RegTAd{4}          & 0.10 & 1.43 & 11.33 & 0.09 & 1.47 & 11.58 \\
\RegTLLd{4}         & 0.10 & 1.01 & 12.04 & 0.09 & 1.11 & 11.29 \\
\midrule
\textbf{Probabilistic} \\
\OursTAd{4}            & \underline{0.02} & \underline{0.30} & \underline{1.26} & \underline{0.02} & \textbf{0.45} & \textbf{2.35} \\
\OursTLLd{4}           & \textbf{0.02} & \textbf{0.29} & \textbf{1.19} & \textbf{0.02} & \underline{0.52} & 2.42 \\
\bottomrule
\end{tabular}
}
\end{table}


\begin{table}[h]
\centering
\captionsetup{skip=6pt}

\caption{S\&P~500 reference stylized facts over evaluation windows.}
\label{tab:snp500_reference_stats}

\renewcommand{\arraystretch}{1.1}
\setlength{\tabcolsep}{4pt}
\scalebox{0.85}{
\begin{tabular}{@{}l cc cc cc@{}}
\toprule
& \multicolumn{2}{c}{Volatility} & \multicolumn{2}{c}{Skewness} & \multicolumn{2}{c}{Kurtosis} \\
\cmidrule(lr){2-3}\cmidrule(lr){4-5}\cmidrule(lr){6-7}
Split & Med. & 10--90\% & Med. & 10--90\% & Med. & 10--90\% \\ \midrule
In-sample (2009--2022) & $0.15$ & $[0.11,0.23]$ & $-0.35$ & $[-0.92,-0.13]$ & $2.31$ & $[0.81,6.68]$ \\
Out-of-sample (2023--2025) & $0.13$ & $[0.12,0.19]$ & $-0.08$ & $[-0.55,0.69]$ & $1.60$ & $[-0.08,17.09]$ \\
\bottomrule
\end{tabular}
}
\end{table}

\paragraph{Signature fidelity.}
\cref{tab:signature_fidelity_real} reports whether the inverted paths preserve the time-augmented signature of the target S\&P~500 path.
The learned methods achieve low signature errors in both in-sample and out-of-sample periods, indicating that their outputs remain close to the conditioning path in signature space on real data.

\begin{table}[h]
\centering
\captionsetup{skip=6pt}

\caption{Signature fidelity on S\&P~500. Entries report relative MSE of truncated time-augmented signatures evaluated up to depth $4$, expressed as percentages.}
\label{tab:signature_fidelity_real}

\renewcommand{\arraystretch}{1.1}
\setlength{\tabcolsep}{4pt}
\scalebox{0.85}{
\begin{tabular}{l cc}
\toprule
Method & \shortstack{In-sample\\(2009--2022)} & \shortstack{Out-of-sample\\(2023--2025)} \\ \midrule
\textbf{Deterministic} \\
\Fourier   & 1.71\% & 3.71\% \\
\Legendre  & 0.08\% & 0.10\% \\
\Insertion & 0.03\% & 0.03\% \\
\RegTAd{4}    & \textless{}0.01\% & \textless{}0.01\% \\
\RegTLLd{4}   & \textless{}0.01\% & 0.03\% \\
\cmidrule(lr){1-3}
\textbf{Probabilistic} \\
\OursTAd{4}        & \textless{}0.01\% & 0.01\% \\
\OursTLLd{4}       & 0.01\% & 0.02\% \\
\bottomrule
\end{tabular}
}
\end{table}

\clearpage
\subsection{Qualitative diagnostics on real data}
\label{app:real-qualitative}

This subsection reports path overlays, absolute-return ACF diagnostics, and QQ diagnostics for S\&P~500 in-sample and out-of-sample examples under TLLd4 and TAd4 conditioning.

\textbf{TLLd4.}
\begin{figure}[h]
\centering
\captionsetup{skip=6pt}
\setlength{\tabcolsep}{0pt}
\renewcommand{\arraystretch}{0.8}
\scalebox{0.95}{
\begin{tabular}{ccc}
    \multicolumn{3}{c}{\includegraphics[width=0.7\linewidth]{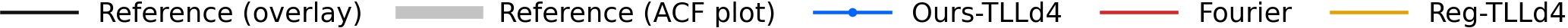}}\\
    \includegraphics[width=0.48\linewidth]{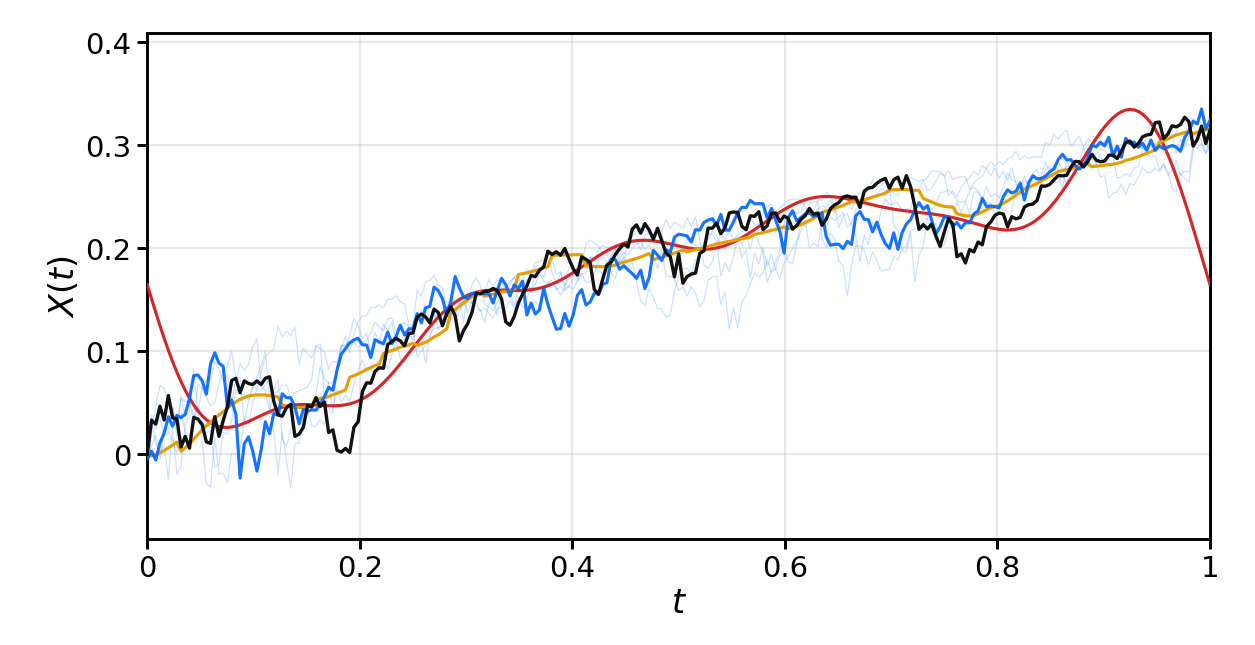}
    &
    \includegraphics[width=0.25\linewidth]{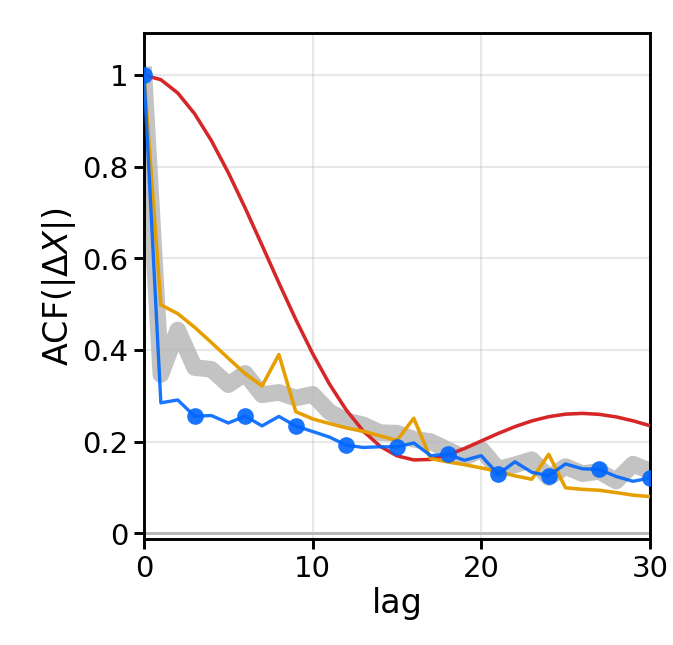}
    &
    \includegraphics[width=0.25\linewidth]{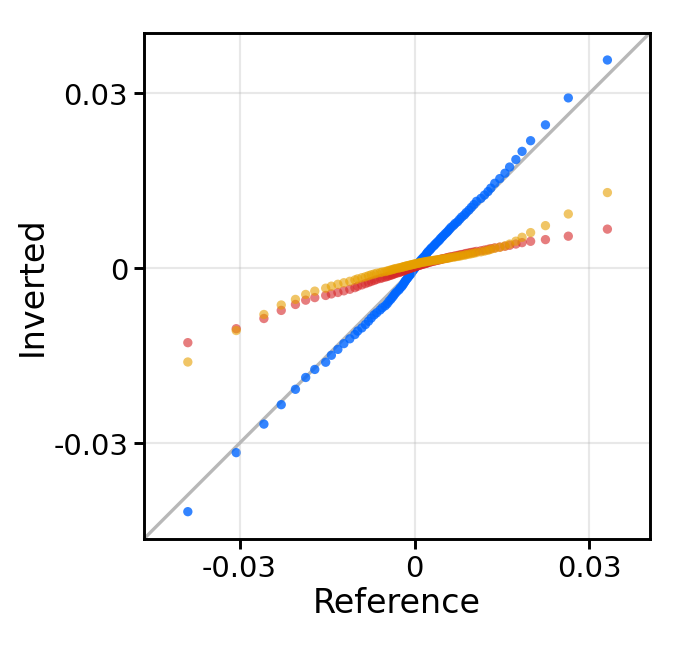} \\[-2pt]
    \multicolumn{3}{c}{(a) S\&P~500~(in-sample)} \\
    \includegraphics[width=0.48\linewidth]{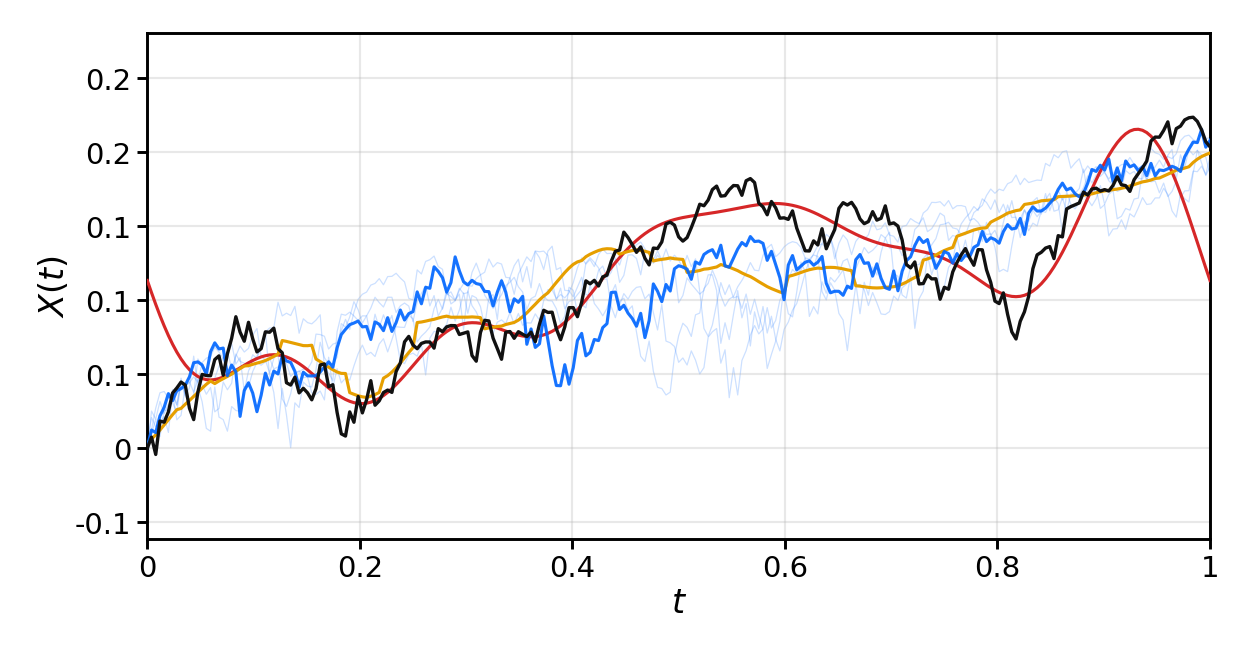}
    &
    \includegraphics[width=0.25\linewidth]{figs/main/finance/out-distribution/TLL3d4_acf.png}
    &
    \includegraphics[width=0.25\linewidth]{figs/main/finance/out-distribution/TLL3d4_qq.png} \\[-2pt]
    \multicolumn{3}{c}{(b) S\&P~500~(out-of-sample)}
\end{tabular}
}
\caption{Qualitative diagnostics for the S\&P~500 in-sample and out-of-sample splits under TLLd4 conditioning. The panels show path overlays, absolute-return ACF diagnostics, and QQ diagnostics.}
\label{fig:app-real-overlay-tll}
\vspace{-4mm}
\end{figure}

\textbf{TAd4.}
\begin{figure}[h]
\centering
\captionsetup{skip=6pt}
\setlength{\tabcolsep}{0pt}
\renewcommand{\arraystretch}{0.8}
\scalebox{0.95}{
\begin{tabular}{ccc}
    \multicolumn{3}{c}{\includegraphics[width=0.7\linewidth]{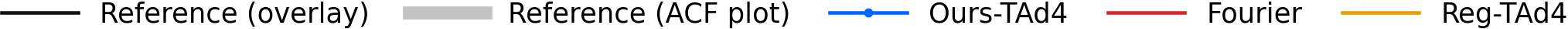}}\\
    \includegraphics[width=0.48\linewidth]{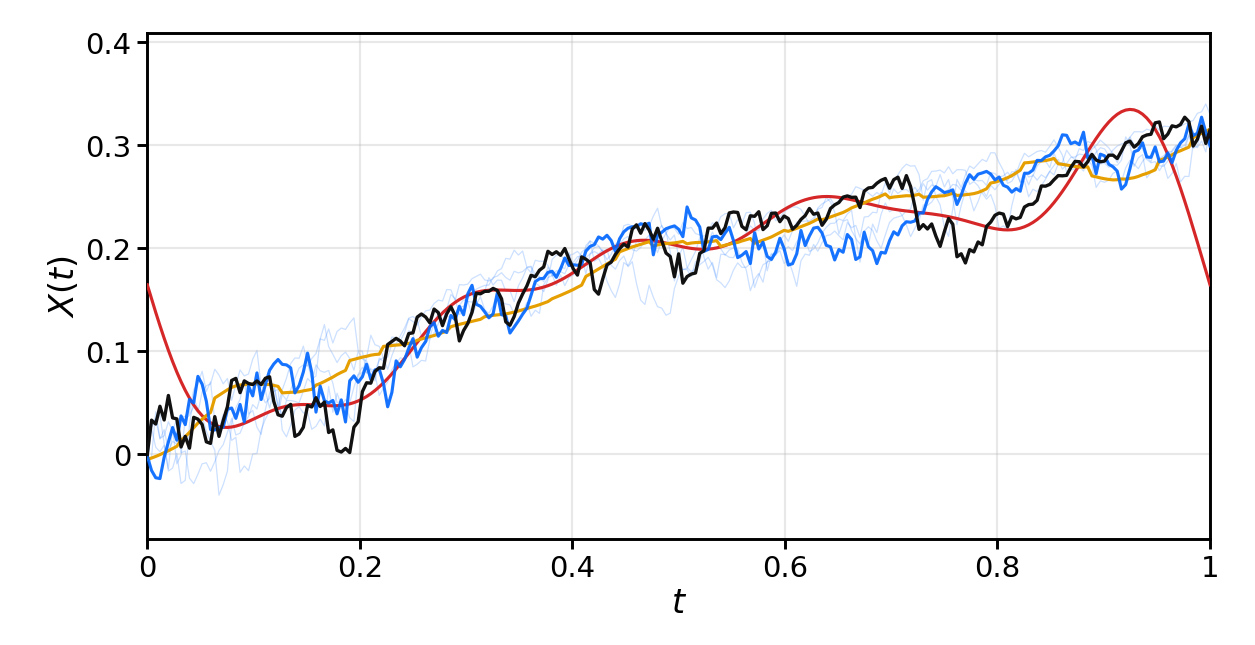}
    &
    \includegraphics[width=0.25\linewidth]{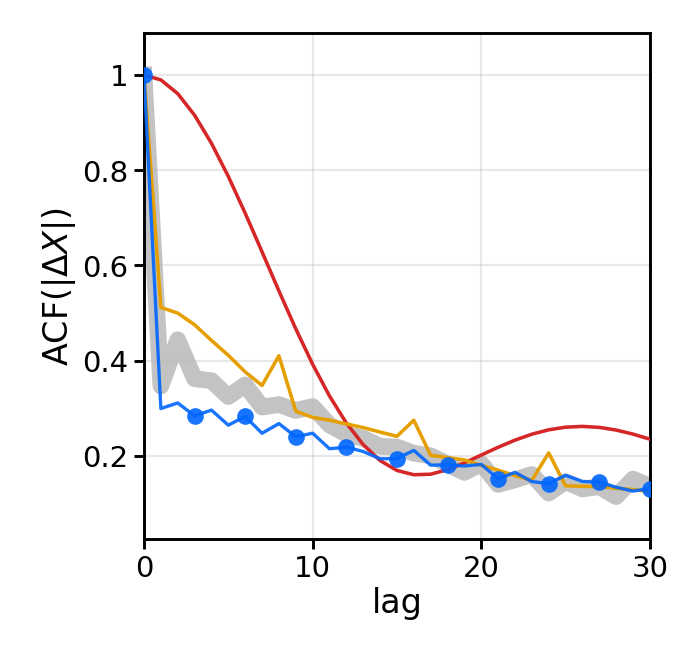}
    &
    \includegraphics[width=0.25\linewidth]{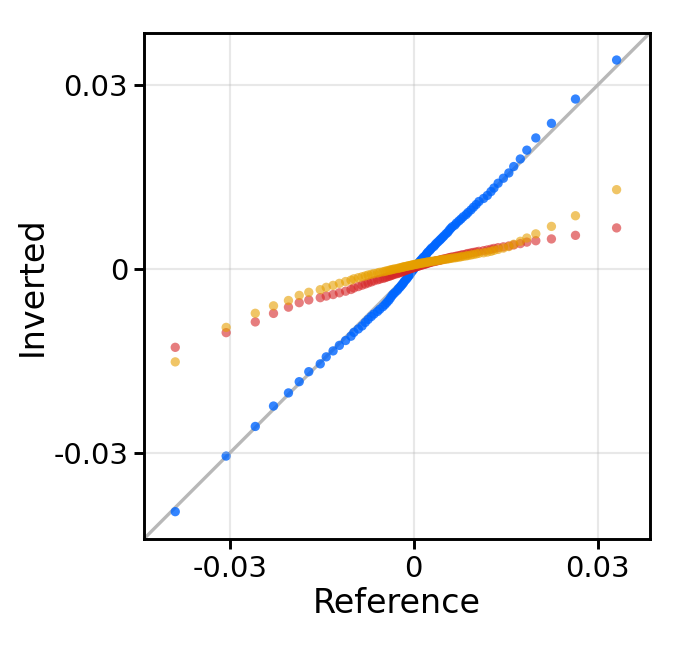} \\[-2pt]
    \multicolumn{3}{c}{(a) S\&P~500~(in-sample)} \\
    \includegraphics[width=0.48\linewidth]{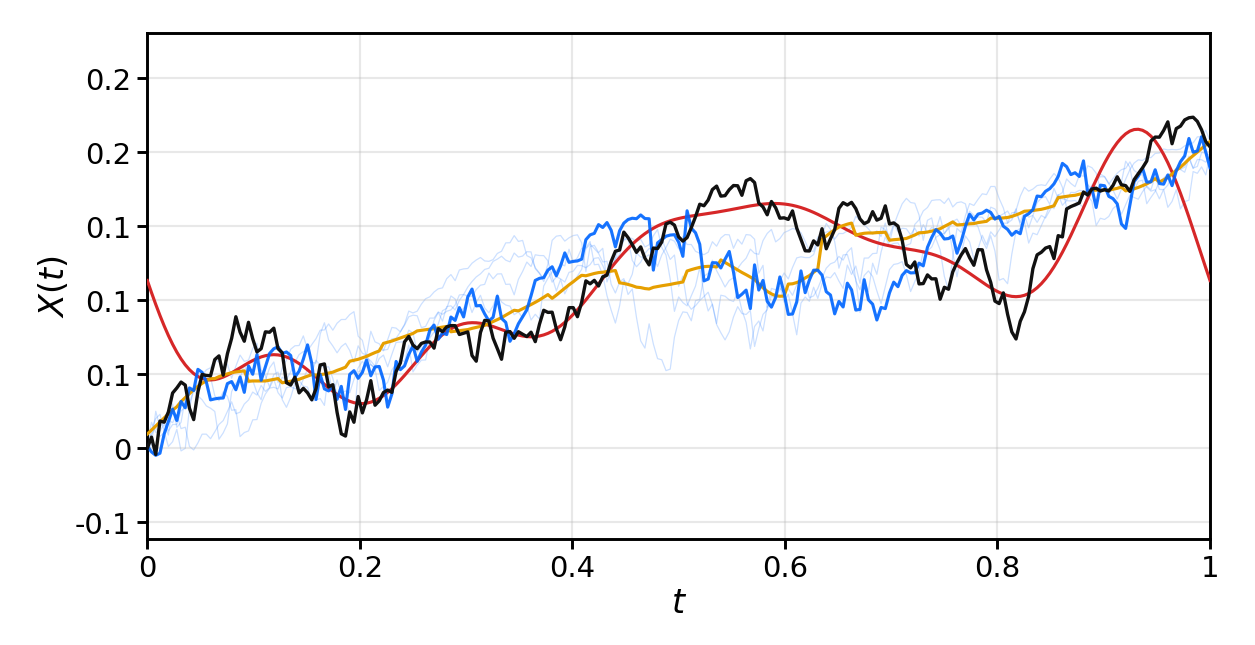}
    &
    \includegraphics[width=0.25\linewidth]{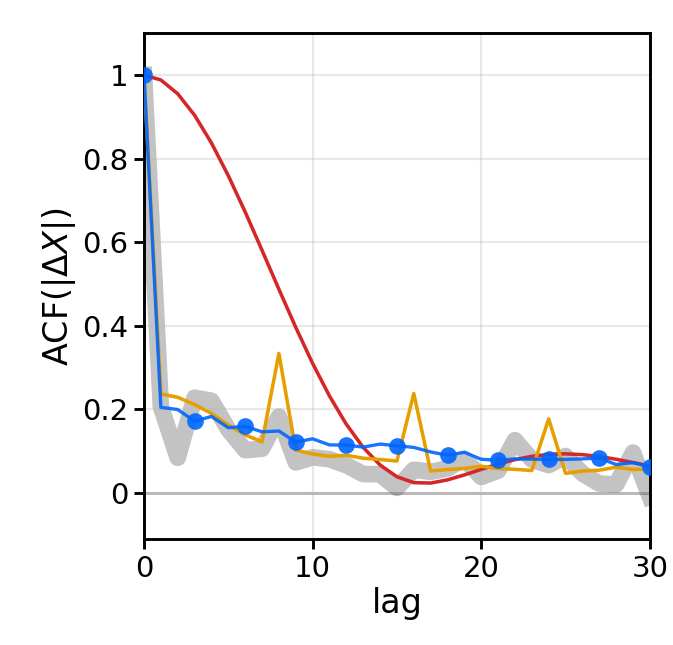}
    &
    \includegraphics[width=0.25\linewidth]{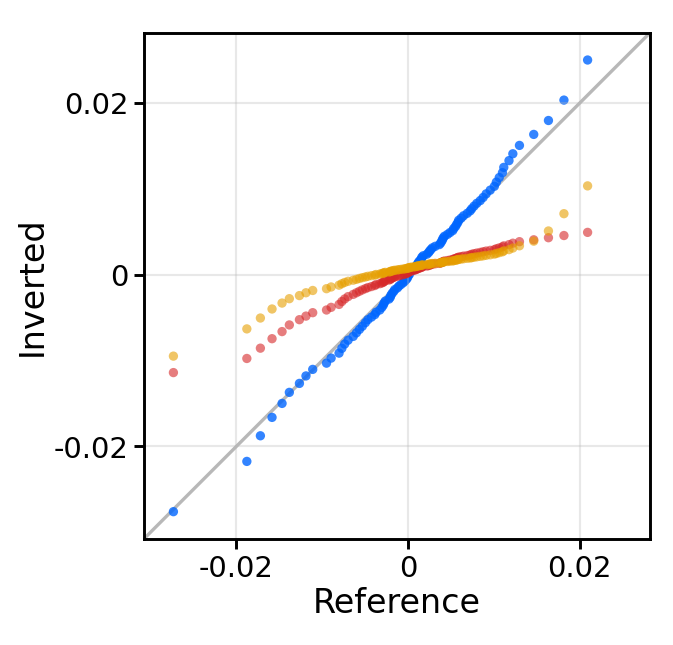} \\[-2pt]
    \multicolumn{3}{c}{(b) S\&P~500~(out-of-sample)}
\end{tabular}
}
\caption{Qualitative diagnostics for the S\&P~500 in-sample and out-of-sample splits under TAd4 conditioning. The panels show path overlays, absolute-return ACF diagnostics, and QQ diagnostics.}
\label{fig:app-real-overlay-ta}
\vspace{-4mm}
\end{figure}







\end{document}